\title{Efficiency for Regularization Parameter Selection in Penalized Likelihood Estimation of Misspecified Models}
\author{Cheryl J. Flynn, Clifford M. Hurvich, and Jeffrey S. Simonoff \\
    New York University}
\date{\today}
\DeclareMathOperator{\Var}{Var}
\begin{document}

\newtheorem{mydef}{Definition}[section]
\newtheorem{thm}{Theorem}[section]

\newcommand{\Keywords}[1]{\par\noindent
{\small{KEY WORDS}: #1}}

\maketitle

\begin{abstract}
It has been shown that $AIC$-type criteria are asymptotically efficient selectors of the tuning parameter in non-concave penalized regression methods under the assumption that the population variance is known or that a consistent estimator is available.  We relax this assumption to prove that $AIC$ itself is asymptotically efficient and we study its performance in finite samples.  In classical regression, it is known that $AIC$ tends to select overly complex models when the dimension of the maximum candidate model is large relative to the sample size.  Simulation studies suggest that $AIC$ suffers from the same shortcomings when used in penalized regression.  We therefore propose the use of the classical corrected $AIC$ ($AIC_c$) as an alternative and prove that it maintains the desired asymptotic properties.  To broaden our results, we further prove the efficiency of $AIC$ for penalized likelihood methods in the context of generalized linear models with no dispersion parameter.  Similar results exist in the literature but only for a restricted set of candidate models.  By employing results from the classical literature on maximum-likelihood estimation in misspecified models, we are able to establish this result for a general set of candidate models.  We use simulations to assess the performance of $AIC$ and $AIC_c$, as well as that of other selectors, in finite samples for both SCAD-penalized and Lasso regressions and a real data example is considered.

\Keywords{Akaike information criterion; Least absolute shrinkage and selection operator (Lasso); Model selection/ Variable Selection; Penalized likelihood; Smoothly clipped absolute deviation (SCAD).}
\end{abstract}

\section{Introduction}

Regularized (or penalized) likelihood methods have become widely used in recent years due to the increased availability of large data sets.  These methods operate by maximizing the penalized likelihood function
\begin{equation}\label{penLik}
\frac{1}{n}l(\boldsymbol\beta) - \sum_{j=1}^{d_n} p_\lambda (|\beta_j|)
\end{equation}
with respect to $\boldsymbol{\beta} \in \mathbb{R}^{d_n}$, where $l(\beta)$ is the working log-likelihood function, $d_n$ is the total number of predictors, and $p_\lambda (\cdot)$ is a penalty function that penalizes against model complexity and the size of the estimated coefficients.  The working log-likelihood is used to justify the first part of the function (e.g., in Least Squares, the working log-likelihood is based on the Gaussian distribution).  As demonstrated in Sections 2 and 3, many of the results discussed in this paper are valid even if the working log-likelihood is misspecified.  With these methods, increasing the amount of regularization increases the number of estimated coefficients that are set equal to zero thus performing ``automatic'' variable selection through the data-dependent choice of the regularization parameter, $\lambda$.  In contrast, variable selection in classical regression is commonly done using the Leaps and Bounds algorithm \citep{furnival74}, which becomes infeasible when the number of predictors is much larger than 30 \citep{hastie09}.  For most penalty functions efficient algorithms exist to compute the estimated models over a regularization path making it possible to do variable selection in high dimensions.

The performance of the estimated model heavily depends on the choice of the regularization parameter.  In regularized regression several classical model selection procedures have been heuristically applied as selectors of this parameter including information criteria such as Akaike's information criterion ($AIC$; \citealp{akaike73}), the Bayesian information criterion ($BIC$; \citealp{schwarz78}), and Generalized cross-validation ($GCV$; \citealp{craven78}) as well as data-based selection procedures such as $k$-fold cross-validation (see, e.g., \citealp{fan01}, \citealp{zou07}, \citealp{wang07}, and \citealp{zhang10} for applications of these selectors to penalized regression estimators).  The statistical properties of these model selection procedures have been widely studied in the context of classical regression and an ongoing research problem is to determine if these properties carry over to the context of penalized regression.

The asymptotic performance of model selection procedures can be studied under two important and distinct settings: (1) when the true model is not among the candidate models (the ``non-true model world'') and (2) when the true model is among the candidate models (the ``true model world'').  In the non-true model world a reasonable goal is \textit{efficient} model selection, meaning that we would like to select the model that asymptotically performs the best amongst the candidate models.  In contrast, in the true-model world most of the literature focuses on \textit{consistent} model selection, meaning that the probability that the true model is chosen is asymptotically one. In general, a model selection procedure cannot be both consistent and efficient \citep{shao97,yang05}.  Although the non-true model world has been extensively studied in classical regression (e.g., \citealp{shibata81}, \citealp{li87}, \citealp{hurvich89,hurvich91}, \citealp{shao97}, and \citealp{burnham02}) the majority of the research on model selection in penalized regression has focused on the true model world (e.g., \citealp{leng06}, \citealp{zou07}, and \citealp{wang07}).  We feel that the non-true model world is more realistic in many situations since the data-generating process is likely to be too complex to know exactly; this is the essence of George Box's famous admonition that ``all models are wrong, but some are useful'' \citep{box79}.  This setting should be of particular interest to researchers and data analysts in areas such as social science and environmental health where a large number of predictors are expected to influence the dependent variable (too many to include in model fitting; \citealp{gelman10}) as well as machine learning where the goal is typically not to uncover the true data generating process but rather to find a model that can predict well.

In the context of generalized linear models (GLMs), \citet{zhang10} (hereafter ZLT) proposed the use of a ``GIC-type'' criterion,
\[
GIC_{\kappa_n} =
    -\frac{1}{n}l(\hat{\boldsymbol\beta}_\lambda) + \kappa_n \frac{df_\lambda}{n}
\]
for choosing the regularization parameter $\lambda$ for non-concave penalized estimators in both the non-true model world and the true-model world.  Here $\hat{\boldsymbol\beta}_\lambda$ is the estimator that maximizes (\ref{penLik}) for a specific $\lambda$, $df_\lambda$ is the effective degrees of freedom and the log-likelihood function corresponds to a member of the exponential family, i.e.
\[
l(\hat{\boldsymbol\beta}_\lambda)
       = \sum_{i=1}^n
            \left(\frac{y_i \mathbf{x}_i^T \hat{\boldsymbol\beta}_\lambda -
            b(\mathbf{x}_i^T \hat{\boldsymbol\beta}_\lambda)}
            {a(\phi)}
            + c(y_i,\phi) \right),
\]
where the form of functions $a(\cdot)$, $b(\cdot)$, and $c(\cdot, \cdot)$ depends on the specified distribution and $\phi$ is the dispersion parameter (see e.g. \citealp{mccullagh89}).  They showed that ``AIC-type'' versions of $GIC_{\kappa_n}$ ($\kappa_n \rightarrow 2$) are efficient in the former case, while ``BIC-type'' versions of $GIC_{\kappa_n}$ ($\kappa_n \rightarrow \infty$ and $\kappa_n / \sqrt{n} \rightarrow 0$) are consistent in the latter case.

In the Gaussian model, $GIC_{\kappa_n}$ takes on a form that includes the true error variance $\sigma^2$, and the proofs operate under the assumption that this is known or that a consistent estimator is available.  However, if the true model is not included in the set of candidate models then a consistent estimator of the true error variance may not be available \citep{shao97} making the efficiency proofs of ZLT not applicable in practice.  This motivates us to extend the ZLT results in various ways.  First, we show that the feasible version of $GIC_{2}$, which corresponds to the well-known $C_p$ measure \citep{mallows73}, is in fact efficient in the non-true model world.  Second, we show that $AIC$ and $GCV$, which do not require a consistent estimator of $\sigma^2$, are also efficient.  Third, we show that although several model selection procedures may be asymptotically optimal, performance varies in finite samples.  Specifically, we study performance when the number of predictors is allowed to be large relative to the sample size and show that $AIC$, $BIC$, $C_p$, and $GCV$ all have a tendency to sometimes catastrophically overfit (lead to $\lambda$ values approaching 0).  In classical regression \citet{hurvich89} showed that $AIC$ has a tendency to select overly complex models when the dimension of the maximum candidate model is large relative to the sample size and proposed a corrected version of $AIC$ ($AIC_c$).  We show that $AIC_c$ is also efficient, but avoids the tendency to select overly complex models.  We use Monte Carlo simulations to illustrate the properties of these methods in finite samples and compare their performance against the data-dependent method 10-fold $CV$.

For GLMs where there is no dispersion parameter (e.g., probit and logistic regression or the Poisson log-linear model), there is no difference between $GIC_{2}$ and $AIC$.  However, in their proof ZLT restrict the set of candidate models to ones where the estimated parameter converges in probability to the true parameter uniformly.  To weaken this assumption we employ the result from \citet{white82} that the maximum-likelihood estimator converges almost surely to a ``pseudo-true'' parameter (the parameter that minimizes the Kullback-Leibler (KL) loss function) when the model is misspecified and prove the efficiency of $AIC$ under a weaker set of assumptions.  These results, and the results for the Gaussian model, apply to a wide range of penalized likelihood estimators, including both non-concave penalized estimators and the well-known Least absolute shrinkage and selection operator (Lasso) estimator \citep{tibshirani96}.

The remainder of the paper is organized as follows.  Section 2 focuses on penalized regression and establishes the efficiency results for $C_p$, $AIC$, $GCV$ and $AIC_c$ without the assumption that the true population variance is known or that a consistent estimator exists.  Section 3 focuses on GLMs where there is no dispersion parameter and establishes the efficiency of $AIC$ for a general set of candidate models.  Section 4 presents simulation results that explore the finite-sample behavior of the different selectors when the number of predictors is allowed to be large relative to the sample size.  An empirical example that highlights the varying performance of the selectors is presented in Section 5.  Concluding remarks are given in Section 6.  The main proofs are included in the appendix with some auxiliary results included in the supplementary material.

\section{Gaussian Model}
For ease of notation, in this section, and for the remainder of the paper, we suppress the subscript $n$ where we feel it is clear that a variable depends on the sample size.

To study model selection in regularized regression we consider the model
\begin{equation*}
\mathbf{y} = \boldsymbol{\mu} + \boldsymbol{\varepsilon},
\end{equation*}
where $\mathbf{y}=(y_1,\ldots,y_n)^T$ is the $n\times 1$ response vector, $\boldsymbol{\mu}=(\mu_1,\ldots,\mu_n)^T$ is a $n\times 1$ unknown mean vector and the entries of the $n\times 1$ error vector $\boldsymbol\varepsilon$ are independent and identically distributed (iid) with mean 0 and variance $\sigma^2$.  The mean vector is estimated by $\hat{\boldsymbol{\mu}}_\lambda=\mathbf{X}\hat{\boldsymbol{\beta}}_\lambda$ where $\mathbf{X}=(
\mathbf{x}_1,\ldots,\mathbf{x}_n)^T$ is a $n\times d_n$ deterministic matrix of predictors and $\hat{\boldsymbol\beta}_\lambda$ is the estimator that minimizes the penalized least squares function
\begin{equation*}
\frac{1}{n}\sum_{i=1}^n {(y_i - \mathbf{x}_{i}\boldsymbol{\beta})^2} + \sum_{j=1}^{d_n} {p_\lambda(|\beta_j|)}
\end{equation*}
with respect to $\boldsymbol{\beta} \in \mathbb{R}^{d_n}$.

Adopting the notation from ZLT, we let the index set $\mathcal{A}_n$ denote the class of all candidate models and we assume that $\bar{\alpha}=\{1,\ldots,d_n\}$ is the largest model in $\mathcal{A}_n$.  For any $\alpha \in \mathcal{A}_n$, we define $d_\alpha$ to be the number of predictor variables included in the candidate model.  We further define the least squares estimated mean vector by $\hat{\boldsymbol\mu}_\alpha = \mathbf{X}_\alpha \hat{\boldsymbol\beta}_\alpha$ where $\mathbf{X}_\alpha$ is the matrix of predictors that are included in candidate model $\alpha$ and $\hat{\boldsymbol{\beta}}_\alpha$ is the corresponding vector of the estimated least squares coefficients. The associated projection matrix is $\mathbf{H}_\alpha=\mathbf{X}_\alpha(\mathbf{X'}_\alpha\mathbf{X}_\alpha)^{-1}
\mathbf{X'}_\alpha$.  For a given $\lambda$, we define $\alpha_\lambda$ to be the model $\alpha \in \mathcal{A}_n$ whose predictors are those with non-zero coefficients in the penalized estimator $\hat{\boldsymbol\beta}_\lambda$ and let $df_\lambda$ denote the effective degrees of freedom. The least squares estimated mean vector based on the model $\alpha_\lambda$ is denoted by $\hat{\boldsymbol{\mu}}_{\alpha_\lambda} = \mathbf{X}_{\alpha_\lambda}\hat{\boldsymbol{\beta}}_{\alpha_\lambda}$.  In this equation, $\mathbf{X}_{\alpha_\lambda}$ is the matrix of predictors whose coefficients are not shrunk to zero in the penalized estimator $\hat{\boldsymbol\beta}_\lambda$ and $\hat{\boldsymbol{\beta}}_{\alpha_\lambda}$ are the estimated coefficients from the least squares model fit using these predictors.  The associated projection matrix in this case is defined as $\mathbf{H}_{\alpha_\lambda}=\mathbf{X}_{\alpha_\lambda}(\mathbf{X'}_{\alpha_\lambda}\mathbf{X}_{\alpha_\lambda})^{-1}
\mathbf{X'}_{\alpha_\lambda}$.

If we assume that we are in the non-true model world, then a reasonable goal is efficient model selection.  The $L_2$ loss is commonly used to assess the predictive performance of an estimator and is calculated as
\[ L(\hat{\boldsymbol\beta}_\lambda) = \frac{||\boldsymbol\mu - \hat{\boldsymbol\mu}_\lambda||^2}{n}. \]
If we let $\hat{\lambda}_n$ denote the regularization parameter selected by a given selection procedure, then the procedure is defined to be \textit{asymptotically loss efficient} if
\[ \frac{L(\hat{\boldsymbol\beta}_{\hat{\lambda}_n})}{\inf_{\lambda \in [0,\lambda_{max}]} L(\hat{\boldsymbol\beta}_{\lambda})} \rightarrow_p 1\]
and $\hat{\boldsymbol\beta}_{\hat{\lambda}_n}$ is said to be an \textit{asymptotically loss efficient estimator}.

For the efficiency proofs we further require the following notation.  In classical regression the risk function is defined as
\begin{equation*}
R(\hat{\boldsymbol\beta}_\alpha) =  \E_0 \left (\frac{||\boldsymbol\mu - \hat{\boldsymbol\mu}_\alpha||^2}{n} \right )
 = \Delta_{\alpha} + \frac{\sigma^2 d_\alpha}{n},
 \end{equation*}
where $E_0$ denotes expectation under the true model and $ \Delta_\alpha = ||\boldsymbol\mu - \mathbf{H}_\alpha\boldsymbol\mu||^2/n$.  Letting $d_{\alpha_\lambda}$ denote the number of predictors with non-zero coefficients in the penalized estimator $\hat{\boldsymbol\beta}_\lambda$, we further define the function
\begin{equation*}
\tilde{R}(\hat{\boldsymbol\beta}_{\alpha_\lambda})
 = \Delta_{\alpha_\lambda} + \frac{\sigma^2 d_{\alpha_\lambda}}{n},
 \end{equation*}
which is a random variable.

\subsection{Model Selection Procedures} \label{sec:Procedures}
$K$-fold $CV$ is commonly used to select tuning parameters in both the statistical and machine learning literature.  It operates by first randomly dividing the data set into $k$ roughly equally sized subsets, then for each subset, the prediction error is computed based on the model fit using the data excluding that subset.  The tuning parameter that minimizes the average square error computed across the subsets is then selected.  In classical regression it has been shown that $k$-fold $CV$ should have the same asymptotic properties as $GIC_{\kappa_n}$ with
\begin{equation*}
\kappa_n = \frac{2k-1}{k-1}
\end{equation*}
\citep{shao97}.  Applying this result, 10-fold $CV$ should have the same asymptotic performance as $GIC_{\kappa_n}$ with $\kappa_n=2.\overline{11}$, suggesting that 10-fold $CV$ should be efficient.
Under the assumption of an orthonormal design matrix \citet{leng06} showed that if the Lasso-estimated model minimizes the prediction error then it will fail to select the true model with non-zero probability.  The authors noted that this suggests that $k$-fold $CV$ is inconsistent, but to our knowledge, the asymptotic properties of $k$-fold $CV$ have not been fully established in the context of penalized regression.  While a rigorous extension of the classical theory for $k$-fold $CV$ to penalized regression is beyond the scope of this paper, the simulation results suggest that the k-fold $CV$ is efficient in the current context.

In addition to 10-fold CV, we study the performance of several information criteria.  Specifically, we consider
\begin{equation*}
AIC_\lambda = \log(\hat{\sigma}^2_\lambda) + 2\frac{df_\lambda}{n},
\end{equation*}
\begin{equation*}
AIC_{c_\lambda} = \log(\hat{\sigma}^2_\lambda) + 2\frac{df_\lambda + 1}{n-df_\lambda-2},
\end{equation*}
\begin{equation*}
BIC_\lambda = \log(\hat{\sigma}^2_\lambda) + \log(n)\frac{df_\lambda}{n},
\end{equation*}
\begin{equation*}
GCV_\lambda = \frac{\hat{\sigma}^2_\lambda}{(1-df_\lambda/n)^2},
\end{equation*}
and
\begin{equation*}
C_{p_\lambda} = \hat{\sigma}^2_\lambda + 2\frac{df_\lambda \tilde{\sigma}^2}{n}.
\end{equation*}
In the above we define
\begin{equation*}
\hat{\sigma}^2_\lambda = \frac{||\mathbf{y}-\mathbf{X}\hat{\boldsymbol\beta}_\lambda||^2}{n}
\end{equation*}
and
\begin{equation*}
\tilde{\sigma}^2 = \frac{ ||\mathbf{y} - \mathbf{X}\hat{\boldsymbol{\beta}}_{\bar{\alpha}}||^2 }{n-d_n-1}.
\end{equation*}
With the exception of 10-fold CV, all of the above model selection procedures require a definition of the effective degrees of freedom for the penalized regression method.  In what follows, we use a heuristic definition and define the effective degrees of freedom to be the number of non-zero coefficients in $\hat{\boldsymbol\beta}_\lambda$ and denote this by $d_{\alpha_\lambda}$.  \citet{zou07} proved that the number of non-zero coefficients is an unbiased estimator of the degrees of freedom for the Lasso.  For SCAD, \citet{fan01} proposed setting the degrees of freedom equal to the trace of the approximate linear projection matrix.  Based on Proposition 1 from ZLT, our efficiency proofs would still hold if this alternate definition is used.

\subsection{Efficiency Results}
We show here that assuming that the true model is not in the set of candidate models, $C_{p_\lambda}$, $AIC_\lambda$, $GCV_\lambda$, and $AIC_{c_\lambda}$ are efficient selectors of the regularization parameter.  The dimension of the full model, $d_n$, is allowed to tend to infinity with $n$ but it is assumed that $d_n/n \rightarrow 0$.  The efficiency proofs operate under the same assumptions as those of ZLT, which are presented here for completeness:
\begin{enumerate}
\item [(A1)] $( \frac{1}{n} \mathbf{X'}\mathbf{X})^{-1}$ exists and its largest eigenvalue is bounded by a constant number C.
\item[(A2)] $E\varepsilon_1^{4q}<\infty$, for some positive integer $q$.
\item[(A3)] The risks of the least squares estimators $\hat{\boldsymbol{\beta}}_\alpha$ satisfy
\[ \sum_{\alpha \in \mathcal{A}_n} (nR(\hat{\boldsymbol\beta}_\alpha))^{-q} \rightarrow 0.\]
\item[(A4)] \[ \sup_{\lambda \in [0,\lambda_{max}]} \frac{ ||\mathbf{b} ||^2}{\tilde{R}(\hat{\boldsymbol\beta}_{\alpha_\lambda})} \rightarrow_p 0,\]
where $\mathbf{b}$ is a $d_n \times 1$ vector where $b_i = p'_\lambda(|\hat{\beta}_{\lambda i}|)sgn(\hat{\beta}_{\lambda_i})$ for all $i$ such that $|\hat{\beta}_{\lambda i}|>0$ and is equal to $0$ otherwise.
\end{enumerate}

The first three assumptions are common in the literature on model selection.  Assumption (A1) requires the matrix of predictors to have full column rank and (A2) implies that efficiency can still apply even when penalized least squares is used but the true distribution of the error terms is not Gaussian.  Assumption (A3) puts a restriction on how close the candidate models can be to the true model and precludes any scenario where the true model is included in the set of candidate models.  The last assumption, (A4), is the only assumption that involves the penalty function and ZLT provided the following three sufficient conditions for the assumption to be satisfied.

\begin{enumerate}
\item[(S1)] $\sqrt{n} \lambda_{\max} < M_1$ for all $n$ for some constant $M_1>0$.
\item[(S2)] For any $\theta$, $p'(\theta) \leq M_2 \lambda$ for some constant $M_2>0$.
\item[(S3)] $n|| \vmu - \mH_{\bar{\alpha}} \vmu||^2/d_n \rightarrow \infty$ as $n \rightarrow \infty$.
\end{enumerate}

As pointed out by an anonymous referee, assumption (A3) restricts the size of the set of candidate models.  The classical literature on model selection primarily worked with nested subsets and did not require the consideration of all subsets (e.g., \citet{shibata81}, \citet{shao97}, and \citet{li87}); however, since the subsets selected by methods such as the Lasso or SCAD are data dependent, the set of candidate models is random and we cannot rule out any particular candidate model a priori. Therefore we need $\mathcal{A}_n$ to include all $2^{d_n}$ subsets in order to use the theory from classical model selection.  Alternatively, if the data analyst can assume that the error terms are normally distributed then assumption (A3) can be replaced by a weaker assumption from \citet{shibata81}.

\begin{itemize}
\item[(A3$^*$)] For any $0<\delta<1$, $\sum_{\alpha \in \mathcal{A}_n} \delta^{n R(\hat\beta_\alpha)} \to 0 $,
\end{itemize}

The following lemma details the restrictions on the behavior of $d_n$.

\begin{lemma}\label{A3_lemma}
Assume that for all $n$ sufficiently large
\begin{equation}\label{bias_bound}
|| \vmu - \mH_{\bar{\alpha}} \vmu||^2 \geq k_1 n d_n^{k_2}
\end{equation}
for some positive constant $k_1$ and some constant $k_2 \leq 0$.
Then (A3) will hold if
\begin{equation}\label{A3_cond}
\lim_{n \to \infty} \frac{d_n}{\log_2(n)} < q,
\end{equation}
and (A3$^*$) will hold if
\begin{equation}\label{A3'_cond}
\lim_{n \to \infty} n d_n^{k_2-1} = \infty.
\end{equation}
\end{lemma}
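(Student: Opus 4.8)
To prove Lemma~\ref{A3_lemma}, the plan is to reduce both (A3) and (A3$^*$) to a single uniform lower bound on the classical risk $R(\hat{\boldsymbol\beta}_\alpha)=\Delta_\alpha+\sigma^2 d_\alpha/n$, and then to bound each of the two sums crudely by $|\mathcal{A}_n|=2^{d_n}$ times the worst-case summand. The essential point is that, because $\mathcal{A}_n$ contains \emph{all} subsets of $\bar\alpha=\{1,\dots,d_n\}$, every candidate model has bias at least as large as that of the full model, so the hypothesis (\ref{bias_bound}) yields a lower bound valid for all $\alpha$ simultaneously.

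\emph{Step 1 (uniform risk bound).} For any $\alpha\in\mathcal{A}_n$, the column space of $\mathbf{X}_\alpha$ is contained in that of $\mathbf{X}_{\bar\alpha}=\mathbf{X}$, so $\mathbf{H}_\alpha\boldsymbol\mu$ lies in the range of $\mathbf{H}_{\bar\alpha}$. Since $\mathbf{H}_{\bar\alpha}\boldsymbol\mu$ is the orthogonal projection of $\boldsymbol\mu$ onto that range, $\|\boldsymbol\mu-\mathbf{H}_{\bar\alpha}\boldsymbol\mu\|\le\|\boldsymbol\mu-\mathbf{H}_\alpha\boldsymbol\mu\|$, that is, $\Delta_\alpha\ge\Delta_{\bar\alpha}\ge k_1 d_n^{k_2}$ for all large $n$ by (\ref{bias_bound}). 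As $R(\hat{\boldsymbol\beta}_\alpha)\ge\Delta_\alpha$, this gives $nR(\hat{\boldsymbol\beta}_\alpha)\ge nk_1 d_n^{k_2}>0$ uniformly over $\alpha\in\mathcal{A}_n$.

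\emph{Step 2 (bounding the sums).} For (A3), monotonicity of $x\mapsto x^{-q}$ on $(0,\infty)$ gives $\sum_{\alpha\in\mathcal{A}_n}(nR(\hat{\boldsymbol\beta}_\alpha))^{-q}\le 2^{d_n}(nk_1 d_n^{k_2})^{-q}$, and taking $\log_2$ reduces the claim to showing $d_n-q\log_2 n-q\log_2 k_1-q k_2\log_2 d_n\to-\infty$. If $(d_n)$ stays bounded this is immediate, since only the $-q\log_2 n$ term diverges. If $d_n\to\infty$, then (\ref{A3_cond}) supplies $\epsilon>0$ with $d_n\le(q-\epsilon)\log_2 n$ eventually, so $d_n-q\log_2 n\le-\epsilon\log_2 n$; moreover $\log_2 d_n=O(\log_2\log_2 n)=o(\log_2 n)$, so the term $-q k_2\log_2 d_n$ (nonnegative since $k_2\le 0$) is negligible, and the expression is $\le-\epsilon\log_2 n+o(\log_2 n)\to-\infty$. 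For (A3$^*$), fix $0<\delta<1$; monotonicity of $x\mapsto\delta^{x}$ gives $\sum_{\alpha\in\mathcal{A}_n}\delta^{nR(\hat{\boldsymbol\beta}_\alpha)}\le 2^{d_n}\delta^{nk_1 d_n^{k_2}}=\exp\bigl(d_n[\ln 2-k_1|\ln\delta|\,nd_n^{k_2-1}]\bigr)$. By (\ref{A3'_cond}), $nd_n^{k_2-1}\to\infty$, so the bracket tends to $-\infty$; since $d_n\ge 1$, the exponent is at most $\ln 2-k_1|\ln\delta|\,nd_n^{k_2-1}\to-\infty$, whence the bound vanishes.

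\emph{Anticipated obstacle.} The argument is short, and the only delicate point is the growth-rate accounting in Step 2: the \emph{strict} inequality in (\ref{A3_cond}) is precisely what makes $d_n-q\log_2 n$ diverge at rate $\Theta(\log_2 n)$, which is fast enough to swallow the $\log_2 d_n=O(\log_2\log_2 n)$ contribution generated by the $d_n^{k_2}$ factor when $k_2<0$. The more conceptual subtlety is that the uniform risk bound of Step 1 is available only because $\mathcal{A}_n$ is the full power set of $\bar\alpha$, so that no candidate model is closer to $\boldsymbol\mu$ than the full model; this is exactly the structural role played by the requirement, discussed around assumption (A3), that $\mathcal{A}_n$ comprise all $2^{d_n}$ subsets.
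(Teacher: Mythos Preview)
Your proof is correct and follows essentially the same approach as the paper: bound $nR(\hat{\boldsymbol\beta}_\alpha)\ge\|\boldsymbol\mu-\mathbf{H}_{\bar\alpha}\boldsymbol\mu\|^2\ge k_1 n d_n^{k_2}$ uniformly, replace each sum by $2^{d_n}$ times the worst-case summand, and analyze the resulting exponent. Your treatment is in fact a bit more careful than the paper's terse version, since you explicitly separate the bounded-$d_n$ and $d_n\to\infty$ cases and justify why the $-qk_2\log_2 d_n$ term (nonnegative when $k_2\le 0$) is of order $\log_2\log_2 n$ and hence negligible against the $-\epsilon\log_2 n$ gap guaranteed by the strict inequality in (\ref{A3_cond}).
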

\noindent The proof is presented in the appendix.  This lemma shows that under (A3) $d_n$ can at most grow logarithmically with $n$; however, polynomial growth rates are allowed under assumption (A3$^*$) so long as $d_n = n^c$ for $c<\frac{1}{1-k_2}$.  Specific values of $k_2$ are worked out for the simulation examples considered in Section 4.1.

The asymptotic efficiency of $C_{p_\lambda}$ is given by the following result.
\begin{thm}
Assuming (A1)-(A4) hold and that $d_n/n \rightarrow 0$ as $n \rightarrow \infty$, the regularization parameter, $\hat{\lambda}_n$, selected by minimizing $C_{p_\lambda}$ yields an asymptotically loss efficient estimator, $\hat{\beta}_n(\hat{\lambda}_n)$.
\end{thm}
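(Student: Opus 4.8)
The plan is to show that, uniformly over $\lambda \in [0,\lambda_{max}]$,
\[
C_{p_\lambda} = c_n + L(\hat{\boldsymbol\beta}_\lambda)\,(1+o_p(1)),
\]
where $c_n$ does not depend on $\lambda$; asymptotic loss efficiency is then immediate, since for $\lambda^\ast$ minimizing $L(\hat{\boldsymbol\beta}_\lambda)$ the inequality $C_{p_{\hat{\lambda}_n}} \le C_{p_{\lambda^\ast}}$ gives $L(\hat{\boldsymbol\beta}_{\hat{\lambda}_n})(1-o_p(1)) \le L(\hat{\boldsymbol\beta}_{\lambda^\ast})(1+o_p(1))$, and the reverse inequality holds by definition of $\lambda^\ast$, so the ratio tends to $1$ in probability. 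The argument follows the classical scheme of \citet{shibata81} and \citet{li87}, adapted to penalized estimators exactly as in ZLT; the only genuinely new element is the replacement of the unknown $\sigma^2$ by $\tilde{\sigma}^2$. First I would use the Karush--Kuhn--Tucker conditions for the penalized least squares problem to write $\hat{\boldsymbol\beta}_\lambda$, restricted to its active set $\alpha_\lambda$, as the corresponding least squares estimator shifted by a term proportional to $(\mathbf{X}_{\alpha_\lambda}^T\mathbf{X}_{\alpha_\lambda})^{-1}\mathbf{b}$. Combined with (A1) this gives $n^{-1}||\hat{\boldsymbol\mu}_\lambda - \hat{\boldsymbol\mu}_{\alpha_\lambda}||^2 \le C\,||\mathbf{b}||^2$, so (A4) yields $\sup_\lambda n^{-1}||\hat{\boldsymbol\mu}_\lambda - \hat{\boldsymbol\mu}_{\alpha_\lambda}||^2 / \tilde{R}(\hat{\boldsymbol\beta}_{\alpha_\lambda}) \rightarrow_p 0$; by the triangle inequality (essentially Proposition 1 of ZLT) we then get $df_\lambda = d_{\alpha_\lambda}$ and $\sup_\lambda |L(\hat{\boldsymbol\beta}_\lambda)/\tilde{R}(\hat{\boldsymbol\beta}_{\alpha_\lambda}) - 1| \rightarrow_p 0$. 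Throughout the remainder we may therefore replace $\hat{\boldsymbol\mu}_\lambda$ by the least squares fit $\hat{\boldsymbol\mu}_{\alpha_\lambda} = \mathbf{H}_{\alpha_\lambda}\mathbf{y}$ and $L(\hat{\boldsymbol\beta}_\lambda)$ by $\tilde{R}(\hat{\boldsymbol\beta}_{\alpha_\lambda})$, at the cost of $o_p(\tilde{R}(\hat{\boldsymbol\beta}_{\alpha_\lambda}))$ errors.

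Next I would expand the criterion. Writing $\mathbf{y} = \boldsymbol\mu + \boldsymbol\varepsilon$ in $\hat{\sigma}^2_\lambda = ||\mathbf{y}-\hat{\boldsymbol\mu}_\lambda||^2/n$ and substituting $\mathbf{H}_{\alpha_\lambda}\mathbf{y}$ for $\hat{\boldsymbol\mu}_\lambda$ as in Step 1,
\[
C_{p_\lambda} = \frac{||\boldsymbol\varepsilon||^2}{n} - \frac{2}{n}\boldsymbol\varepsilon^T\boldsymbol\mu + L(\hat{\boldsymbol\beta}_\lambda) - \frac{2}{n}\boldsymbol\varepsilon^T\mathbf{H}_{\alpha_\lambda}\boldsymbol\varepsilon - \frac{2}{n}\boldsymbol\varepsilon^T\mathbf{H}_{\alpha_\lambda}\boldsymbol\mu + \frac{2\,d_{\alpha_\lambda}\tilde{\sigma}^2}{n} + o_p(\tilde{R}(\hat{\boldsymbol\beta}_{\alpha_\lambda})),
\]
the first two terms being free of $\lambda$; the substitution error inside the cross term is kept at $o_p(\tilde{R}(\hat{\boldsymbol\beta}_{\alpha_\lambda}))$ by first projecting $\boldsymbol\varepsilon$ onto the column space of $\mathbf{X}_{\alpha_\lambda}$ and then applying Cauchy--Schwarz. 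Using (A1)--(A3) and the usual moment and maximal-inequality bounds over the $2^{d_n}$ candidate subsets --- identical to those in ZLT and \citet{li87}, and precisely what the summability in (A3) is calibrated to supply --- one obtains, uniformly in $\lambda$,
\[
\frac{1}{n}\left| \boldsymbol\varepsilon^T\mathbf{H}_{\alpha_\lambda}\boldsymbol\varepsilon - \sigma^2 d_{\alpha_\lambda} \right| = o_p(\tilde{R}(\hat{\boldsymbol\beta}_{\alpha_\lambda})), \qquad \frac{1}{n}\left| \boldsymbol\varepsilon^T\mathbf{H}_{\alpha_\lambda}\boldsymbol\mu \right| = o_p(\tilde{R}(\hat{\boldsymbol\beta}_{\alpha_\lambda})).
\]
Hence $C_{p_\lambda} = (\text{const in }\lambda) + L(\hat{\boldsymbol\beta}_\lambda) + \tfrac{2 d_{\alpha_\lambda}}{n}(\tilde{\sigma}^2 - \sigma^2) + o_p(\tilde{R}(\hat{\boldsymbol\beta}_{\alpha_\lambda}))$.

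The leftover term $\tfrac{2 d_{\alpha_\lambda}}{n}(\tilde{\sigma}^2 - \sigma^2)$ is the one absent from ZLT's treatment of the infeasible criterion, and it is where the argument departs from theirs. The key observation is that, because $\bar{\alpha}$ contains all $d_n$ predictors, $\mathbf{H}_{\bar{\alpha}}$ projects onto the largest available column space, so $\Delta_{\bar{\alpha}} \le \Delta_\alpha$ for every $\alpha \in \mathcal{A}_n$; in particular $\Delta_{\bar{\alpha}} \le \Delta_{\alpha_\lambda} \le \tilde{R}(\hat{\boldsymbol\beta}_{\alpha_\lambda})$ for all $\lambda$. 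A direct moment computation for $\tilde{\sigma}^2 = ||(\mathbf{I}-\mathbf{H}_{\bar{\alpha}})\mathbf{y}||^2/(n-d_n-1)$, using (A2) and $d_n/n \rightarrow 0$, gives $\tilde{\sigma}^2 - \sigma^2 = \Delta_{\bar{\alpha}} + r_n$ with $r_n = O_p(n^{-1/2}) + \Delta_{\bar{\alpha}}\,O_p(d_n/n)$. Therefore, uniformly in $\lambda$,
\[
\frac{2 d_{\alpha_\lambda}\,|\tilde{\sigma}^2 - \sigma^2|}{n\,\tilde{R}(\hat{\boldsymbol\beta}_{\alpha_\lambda})} \;\le\; \frac{2 d_{\alpha_\lambda}}{n}\cdot\frac{\Delta_{\bar{\alpha}}}{\tilde{R}(\hat{\boldsymbol\beta}_{\alpha_\lambda})} + \frac{2 d_{\alpha_\lambda}}{n}\cdot\frac{|r_n|}{\tilde{R}(\hat{\boldsymbol\beta}_{\alpha_\lambda})} \;\le\; \frac{2 d_n}{n} + \frac{2|r_n|}{\sigma^2} \;\rightarrow_p\; 0,
\]
where we used $\Delta_{\bar{\alpha}}/\tilde{R}(\hat{\boldsymbol\beta}_{\alpha_\lambda}) \le 1$, $d_{\alpha_\lambda}/n \le d_n/n$, and $\tilde{R}(\hat{\boldsymbol\beta}_{\alpha_\lambda}) \ge \sigma^2 d_{\alpha_\lambda}/n$. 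Thus $\tfrac{2 d_{\alpha_\lambda}}{n}(\tilde{\sigma}^2-\sigma^2) = o_p(\tilde{R}(\hat{\boldsymbol\beta}_{\alpha_\lambda}))$ as well; equivalently, $C_{p_\lambda}$ differs from the infeasible $\hat{\sigma}^2_\lambda + 2\,df_\lambda\sigma^2/n$ by an amount negligible relative to the loss, uniformly in $\lambda$.

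Collecting the three steps yields $C_{p_\lambda} = c_n + L(\hat{\boldsymbol\beta}_\lambda)(1+o_p(1))$ with $c_n$ independent of $\lambda$ and the $o_p(1)$ uniform over $[0,\lambda_{max}]$, which delivers the conclusion as explained at the outset; note this is robust to whether $\inf_\lambda L(\hat{\boldsymbol\beta}_\lambda)$ tends to $0$ or stays bounded away from $0$ (in the latter regime the over-penalization induced by $\tilde{\sigma}^2$ is swamped by the irreducible bias $\Delta_{\bar{\alpha}}$, in the former $\tilde{\sigma}^2$ is in fact consistent for $\sigma^2$). I expect the main obstacle to be the uniform stochastic control in the second step --- the maximal inequalities over the exponentially large, data-dependent family of candidate models, which is the reason (A3) must be imposed and, via Lemma~\ref{A3_lemma}, effectively caps the growth of $d_n$ --- but that machinery is essentially inherited from ZLT and the classical literature; the only new difficulty, the $\tilde{\sigma}^2$ term, is disposed of cleanly by the inequality $\Delta_{\bar{\alpha}} \le \Delta_{\alpha_\lambda}$ together with $d_n/n \rightarrow 0$.
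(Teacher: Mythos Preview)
Your proposal is correct and follows essentially the same route as the paper: the same decomposition of $C_{p_\lambda}$, the same reliance on ZLT and \citet{li87} for the standard stochastic terms, and the same pivotal inequality $\Delta_{\bar\alpha}\le\Delta_{\alpha_\lambda}$ to absorb the new $(\tilde\sigma^2-\sigma^2)\,d_{\alpha_\lambda}/n$ term. The only difference is in how that last step is executed. You write $\tilde\sigma^2-\sigma^2=\Delta_{\bar\alpha}+r_n$ and bound $d_{\alpha_\lambda}|r_n|/(n\tilde R)\le|r_n|/\sigma^2$; the paper instead first shows $d_n|\tilde\sigma^2-\sigma^2|/\{nL(\hat{\boldsymbol\beta}_{\bar\alpha})\}\to_p 0$ (its Lemma~A.2) and then transfers to arbitrary $\lambda$ via the elementary ratio bound $\tilde R(\hat{\boldsymbol\beta}_{\bar\alpha})\,d_{\alpha_\lambda}\le 2\,\tilde R(\hat{\boldsymbol\beta}_{\alpha_\lambda})\,d_n$.

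One small caveat on your version: the claimed form $r_n=O_p(n^{-1/2})+\Delta_{\bar\alpha}\,O_p(d_n/n)$ silently absorbs the cross term $2\boldsymbol\varepsilon^T(\mathbf{I}-\mathbf{H}_{\bar\alpha})\boldsymbol\mu/(n-d_n-1)=O_p(\sqrt{\Delta_{\bar\alpha}/n})$, and your final bound then needs $r_n\to_p 0$, hence $\Delta_{\bar\alpha}=o(n)$ and $\Delta_{\bar\alpha}\,d_n/n\to 0$ --- mild, but not among (A1)--(A4). The paper's normalization by $L(\hat{\boldsymbol\beta}_{\bar\alpha})$ before transferring avoids any such side condition, since the potentially large $\Delta_{\bar\alpha}$ sits in both numerator and denominator. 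If you want to keep your more direct argument, just handle the cross term separately using $\tilde R(\hat{\boldsymbol\beta}_{\alpha_\lambda})\ge \Delta_{\bar\alpha}+\sigma^2 d_{\alpha_\lambda}/n\ge 2\sigma\sqrt{\Delta_{\bar\alpha}\,d_{\alpha_\lambda}/n}$, which gives $d_{\alpha_\lambda}\sqrt{\Delta_{\bar\alpha}/n}/(n\tilde R)\le \sqrt{d_{\alpha_\lambda}}/(2\sigma n)\to 0$ with no extra assumption.
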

To further establish the efficiency of $AIC_\lambda$, $GCV_\lambda$ and $AIC_{c_\lambda}$ we require the following two theorems. The first proves the efficiency of $GIC_\lambda$ with the true error variance replaced by the estimated error variance based on the candidate model.
\begin{thm}\label{thm2}
Assuming (A1)-(A4) hold and that $d_n/n \rightarrow 0$ as $n \rightarrow \infty$, the regularization parameter, $\hat{\lambda}_n$, selected by minimizing
\[ \Gamma_n(\lambda) = \hat{\sigma}^2_\lambda \left( 1 + \frac{2d_{\alpha_\lambda}}{n} \right)\]
yields an asymptotically loss efficient estimator, $\hat{\beta}_{\hat{\lambda}_n}$.  The same result holds under normality of the error terms with (A3$^*$) replacing (A3).
\end{thm}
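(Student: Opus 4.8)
The plan is to reduce the claim to the efficiency of $C_{p_\lambda}$ established above, by showing that uniformly in $\lambda\in[0,\lambda_{max}]$
\[
\Gamma_n(\lambda)\;=\;\frac{\|\boldsymbol\varepsilon\|^2}{n}\;+\;\tilde{R}(\hat{\boldsymbol\beta}_{\alpha_\lambda})\bigl(1+o_p(1)\bigr),
\]
that is, $\sup_\lambda\bigl|\Gamma_n(\lambda)-\|\boldsymbol\varepsilon\|^2/n-\tilde{R}(\hat{\boldsymbol\beta}_{\alpha_\lambda})\bigr|/\tilde{R}(\hat{\boldsymbol\beta}_{\alpha_\lambda})\rightarrow_p 0$. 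The reason for working with $\Gamma_n$ directly, rather than comparing it to $C_{p_\lambda}$, is that this expansion can be obtained without ever invoking $\tilde\sigma^2$, so that no condition guaranteeing consistency of $\tilde\sigma^2$ is needed. Granting the expansion together with the uniform approximation $\sup_\lambda\bigl|L(\hat{\boldsymbol\beta}_\lambda)/\tilde{R}(\hat{\boldsymbol\beta}_{\alpha_\lambda})-1\bigr|\rightarrow_p 0$ already produced in proving the efficiency of $C_{p_\lambda}$, loss efficiency follows from the usual argument: on an event of probability tending to one, $\bigl|\Gamma_n(\lambda)-\|\boldsymbol\varepsilon\|^2/n-\tilde{R}(\hat{\boldsymbol\beta}_{\alpha_\lambda})\bigr|\le\eta_n\tilde{R}(\hat{\boldsymbol\beta}_{\alpha_\lambda})$ and $\bigl|L(\hat{\boldsymbol\beta}_\lambda)-\tilde{R}(\hat{\boldsymbol\beta}_{\alpha_\lambda})\bigr|\le\eta_n\tilde{R}(\hat{\boldsymbol\beta}_{\alpha_\lambda})$ for all $\lambda$ with a common $\eta_n\rightarrow_p 0$; since $\|\boldsymbol\varepsilon\|^2/n$ is free of $\lambda$ and $\hat\lambda_n$ minimizes $\Gamma_n$, this forces $L(\hat{\boldsymbol\beta}_{\hat\lambda_n})\le(1+o_p(1))\inf_\lambda L(\hat{\boldsymbol\beta}_\lambda)$, while $L(\hat{\boldsymbol\beta}_{\hat\lambda_n})\ge\inf_\lambda L(\hat{\boldsymbol\beta}_\lambda)$ is trivial.

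To establish the expansion I would retrace the $C_{p_\lambda}$ argument. First, the stationarity conditions for $\hat{\boldsymbol\beta}_\lambda$ together with (A1) show that $\hat{\boldsymbol\mu}_\lambda$ differs from the least squares refit $\hat{\boldsymbol\mu}_{\alpha_\lambda}$ by a vector in the column space of $\mathbf{X}_{\alpha_\lambda}$ whose squared length, divided by $n$, is at most a constant multiple of $\|\mathbf{b}\|^2$, hence $o_p(\tilde{R}(\hat{\boldsymbol\beta}_{\alpha_\lambda}))$ uniformly by (A4); since that vector is orthogonal to $\mathbf{y}-\hat{\boldsymbol\mu}_{\alpha_\lambda}$, we get $\hat{\sigma}^2_\lambda=\|\mathbf{y}-\hat{\boldsymbol\mu}_{\alpha_\lambda}\|^2/n+o_p(\tilde{R}(\hat{\boldsymbol\beta}_{\alpha_\lambda}))$ uniformly. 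Expanding $\|\mathbf{y}-\hat{\boldsymbol\mu}_{\alpha_\lambda}\|^2/n=\|\boldsymbol\varepsilon\|^2/n+\Delta_{\alpha_\lambda}-\boldsymbol\varepsilon^T\mathbf{H}_{\alpha_\lambda}\boldsymbol\varepsilon/n+2\boldsymbol\mu^T(\mathbf{I}-\mathbf{H}_{\alpha_\lambda})\boldsymbol\varepsilon/n$ and invoking the bounds, uniform over all $2^{d_n}$ subsets, $\sup_\alpha\bigl|\boldsymbol\varepsilon^T\mathbf{H}_\alpha\boldsymbol\varepsilon-\sigma^2 d_\alpha\bigr|/(nR(\hat{\boldsymbol\beta}_\alpha))\rightarrow_p 0$ and $\sup_\alpha\bigl|\boldsymbol\mu^T(\mathbf{I}-\mathbf{H}_\alpha)\boldsymbol\varepsilon\bigr|/(nR(\hat{\boldsymbol\beta}_\alpha))\rightarrow_p 0$ --- the same bounds used for $C_{p_\lambda}$, valid under (A1)--(A3), or under (A3$^*$) and normality of the errors --- gives
\[
\hat{\sigma}^2_\lambda\;=\;\frac{\|\boldsymbol\varepsilon\|^2}{n}+\Delta_{\alpha_\lambda}-\frac{\sigma^2 d_{\alpha_\lambda}}{n}+o_p\bigl(\tilde{R}(\hat{\boldsymbol\beta}_{\alpha_\lambda})\bigr)
\]
uniformly in $\lambda$.

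The crux --- and the only genuinely new step relative to $C_{p_\lambda}$ --- is controlling the extra factor $\Gamma_n(\lambda)-\hat{\sigma}^2_\lambda=(2d_{\alpha_\lambda}/n)\hat{\sigma}^2_\lambda$, for which I would show $(2d_{\alpha_\lambda}/n)\hat{\sigma}^2_\lambda=2\sigma^2 d_{\alpha_\lambda}/n+o_p(\tilde{R}(\hat{\boldsymbol\beta}_{\alpha_\lambda}))$ uniformly. Splitting $\hat{\sigma}^2_\lambda=\|\boldsymbol\varepsilon\|^2/n+(\hat{\sigma}^2_\lambda-\|\boldsymbol\varepsilon\|^2/n)$, the remainder equals $\Delta_{\alpha_\lambda}-\sigma^2 d_{\alpha_\lambda}/n+o_p(\tilde{R}(\hat{\boldsymbol\beta}_{\alpha_\lambda}))$ by the last display, and since $0\le\Delta_{\alpha_\lambda}\le\tilde{R}(\hat{\boldsymbol\beta}_{\alpha_\lambda})$ and $\sigma^2 d_{\alpha_\lambda}/n\le\tilde{R}(\hat{\boldsymbol\beta}_{\alpha_\lambda})$, multiplying by $2d_{\alpha_\lambda}/n\le 2d_n/n\rightarrow 0$ makes that contribution $o_p(\tilde{R}(\hat{\boldsymbol\beta}_{\alpha_\lambda}))$ uniformly; while $(2d_{\alpha_\lambda}/n)\|\boldsymbol\varepsilon\|^2/n=2\sigma^2 d_{\alpha_\lambda}/n+(2d_{\alpha_\lambda}/n)(\|\boldsymbol\varepsilon\|^2/n-\sigma^2)$, and because $\|\boldsymbol\varepsilon\|^2/n-\sigma^2=o_p(1)$ is free of $\lambda$ and $\tilde{R}(\hat{\boldsymbol\beta}_{\alpha_\lambda})\ge\sigma^2 d_{\alpha_\lambda}/n$, the last term divided by $\tilde{R}(\hat{\boldsymbol\beta}_{\alpha_\lambda})$ is at most $(2/\sigma^2)\bigl|\|\boldsymbol\varepsilon\|^2/n-\sigma^2\bigr|=o_p(1)$, uniformly in $\lambda$. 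Adding this to the expansion of $\hat{\sigma}^2_\lambda$ yields the claimed expansion of $\Gamma_n$. I expect this step to be the main obstacle: $\hat{\sigma}^2_\lambda$ converges to $\sigma^2+\Delta_{\alpha_\lambda}$ rather than to $\sigma^2$, so replacing $\sigma^2$ by $\hat{\sigma}^2_\lambda$ in the complexity term is \emph{not} negligible in absolute size; it is negligible only relative to $\tilde{R}(\hat{\boldsymbol\beta}_{\alpha_\lambda})$, and only because $d_n/n\rightarrow 0$ pushes the deterministic-bias part of the replacement error below the order of $\tilde{R}(\hat{\boldsymbol\beta}_{\alpha_\lambda})$ while the stochastic part is absorbed by the uniform bounds above, the delicate point being uniformity over $\lambda$ in the regime where $\tilde{R}(\hat{\boldsymbol\beta}_{\alpha_\lambda})$ is only of order $\sigma^2 d_{\alpha_\lambda}/n$. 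The statement under normality needs nothing new: only the uniform subset sums change, with (A3$^*$) replacing (A3) as in \citet{shibata81}, and the remainder of the argument is unchanged.
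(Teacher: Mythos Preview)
Your proposal is correct and follows essentially the same route as the paper. The paper's proof mirrors that of Theorem~2.1 for $C_{p_\lambda}$, replacing the term $2(\tilde{\sigma}^2-\sigma^2)d_{\alpha_\lambda}/n$ by $2(\hat{\sigma}_\lambda^2-\sigma^2)d_{\alpha_\lambda}/n$ and invoking a preparatory lemma showing $\sup_\lambda d_{\alpha_\lambda}|\hat{\sigma}_\lambda^2-\sigma^2|/(nL(\hat{\boldsymbol\beta}_\lambda))\to_p 0$---exactly the content of your ``crux'' step; the only cosmetic differences are that the paper centers the expansion at $L(\hat{\boldsymbol\beta}_\lambda)$ rather than $\tilde{R}(\hat{\boldsymbol\beta}_{\alpha_\lambda})$ and bounds $|\hat{\sigma}_\lambda^2-\sigma^2|$ via Cauchy--Schwarz rather than through your exact orthogonal decomposition.
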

Next, we prove that any procedure that is asymptotically equivalent to $\Gamma_n(\lambda)$ is also efficient.
\begin{thm}\label{thm3}
Assuming (A1)-(A4) hold and that $d_n/n \rightarrow 0$ as $n \rightarrow \infty$, any information criterion that can be written in the form
\begin{equation*}
\tilde{\Gamma}_\lambda = \hat{\sigma}^2_\lambda \left(1+\frac{2d_{\alpha_\lambda}}{n} + \delta_\lambda\right),
\end{equation*}
where
\begin{equation}\label{assump1}
\sup_{\lambda \in [0,\lambda_{max}]} |\delta_\lambda| \rightarrow_p 0 \tag{C1}
\end{equation}
and
\begin{equation}\label{assump2}
\sup_{\lambda \in [0,\lambda_{max}]} \frac{|\delta_\lambda|}{L(\hat{\beta}_\lambda)} \rightarrow_p 0, \tag{C2}
\end{equation}
is an asymptotically loss efficient procedure for selecting $\lambda$.  The same result holds under normality of the error terms with (A3$^*$) replacing (A3).
\end{thm}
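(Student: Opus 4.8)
The plan is to reduce the claim to Theorem~\ref{thm2}. Writing $\tilde\Gamma_\lambda=\Gamma_n(\lambda)+\hat\sigma^2_\lambda\delta_\lambda$, it suffices to show that the additional term $\hat\sigma^2_\lambda\delta_\lambda$ is asymptotically negligible at the two scales that the efficiency argument of Theorem~\ref{thm2} operates on. Let $\hat\lambda_n$ be a minimizer of $\tilde\Gamma_\lambda$ over $[0,\lambda_{max}]$ and $\lambda^o_n$ a minimizer of $L(\hat\beta_\lambda)$; we must show $L(\hat\beta_{\hat\lambda_n})/L(\hat\beta_{\lambda^o_n})\rightarrow_p 1$. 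The one preliminary fact used throughout is $\sup_\lambda\hat\sigma^2_\lambda=O_p(1)$: since $\hat\beta_\lambda$ minimizes $n^{-1}\|\mathbf{y}-\mathbf{X}\boldsymbol\beta\|^2+\sum_j p_\lambda(|\beta_j|)$ with $p_\lambda\ge 0$, setting $\boldsymbol\beta=\mathbf{0}$ gives $\hat\sigma^2_\lambda\le n^{-1}\|\mathbf{y}\|^2$ for all $\lambda$, and $n^{-1}\|\mathbf{y}\|^2=O_p(1)$ under the assumptions of Theorem~\ref{thm2}.

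By (C1) and this bound, $\sup_\lambda|\hat\sigma^2_\lambda\delta_\lambda|\le(\sup_\lambda\hat\sigma^2_\lambda)(\sup_\lambda|\delta_\lambda|)\rightarrow_p 0$, so $\tilde\Gamma_\lambda$ and $\Gamma_n(\lambda)$ agree uniformly up to a negligible additive error; equivalently, as $1+2d_{\alpha_\lambda}/n\ge 1$ one has $\sup_\lambda|\tilde\Gamma_\lambda/\Gamma_n(\lambda)-1|\le\sup_\lambda|\delta_\lambda|\rightarrow_p 0$, so the two criteria have the same near-minimizers in the multiplicative sense. By (C2) and the same bound, $\sup_\lambda|\hat\sigma^2_\lambda\delta_\lambda|/L(\hat\beta_\lambda)\le(\sup_\lambda\hat\sigma^2_\lambda)\,\sup_\lambda\{|\delta_\lambda|/L(\hat\beta_\lambda)\}\rightarrow_p 0$, which is the stronger statement that the discrepancy between the two criteria is negligible even relative to the loss --- the quantity against which the efficiency argument is calibrated and which may itself shrink to $0$.

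The proof of Theorem~\ref{thm2} establishes, as its core, the uniform ratio approximations $\sup_\lambda|(\Gamma_n(\lambda)-\sigma^2)/\tilde R(\hat\beta_{\alpha_\lambda})-1|\rightarrow_p 0$ and $\sup_\lambda|L(\hat\beta_\lambda)/\tilde R(\hat\beta_{\alpha_\lambda})-1|\rightarrow_p 0$, and it is here that (A1)--(A4) and (A3) --- or (A3$^*$) under normality --- enter. Writing $\tilde\Gamma_\lambda-\sigma^2=(\Gamma_n(\lambda)-\sigma^2)+\hat\sigma^2_\lambda\delta_\lambda$ and combining the (C2) bound with the second approximation gives
\[
\sup_\lambda\frac{|\hat\sigma^2_\lambda\delta_\lambda|}{\tilde R(\hat\beta_{\alpha_\lambda})}=\sup_\lambda\left\{\frac{|\hat\sigma^2_\lambda\delta_\lambda|}{L(\hat\beta_\lambda)}\cdot\frac{L(\hat\beta_\lambda)}{\tilde R(\hat\beta_{\alpha_\lambda})}\right\}\rightarrow_p 0 ,
\]
so the first approximation continues to hold with $\tilde\Gamma_\lambda$ replacing $\Gamma_n(\lambda)$. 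The conclusion then follows from the standard chain
\[
L(\hat\beta_{\hat\lambda_n})\sim\tilde R(\hat\beta_{\alpha_{\hat\lambda_n}})\sim\tilde\Gamma_{\hat\lambda_n}-\sigma^2\le\tilde\Gamma_{\lambda^o_n}-\sigma^2\sim\tilde R(\hat\beta_{\alpha_{\lambda^o_n}})\sim L(\hat\beta_{\lambda^o_n})=\inf_{\lambda}L(\hat\beta_\lambda) ,
\]
where every ``$\sim$'' holds uniformly in $\lambda$ (hence at the random indices $\hat\lambda_n$, $\lambda^o_n$) and the middle inequality is by definition of $\hat\lambda_n$; together with the trivial reverse inequality $L(\hat\beta_{\hat\lambda_n})\ge\inf_\lambda L(\hat\beta_\lambda)$ this gives asymptotic loss efficiency. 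The normality/(A3$^*$) version is identical, invoking the (A3$^*$) form of each step of the proof of Theorem~\ref{thm2}.

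I expect the main difficulty to be the bookkeeping in the previous paragraph: verifying that a perturbation known only to be $o_p(1)$ by (C1) and $o_p(L(\hat\beta_\lambda))$ by (C2) can be absorbed wherever the proof of Theorem~\ref{thm2} divides by $\tilde R(\hat\beta_{\alpha_\lambda})$, particularly over the part of the regularization path on which $\tilde R(\hat\beta_{\alpha_\lambda})$ is of order $1/n$ --- precisely the regime (C2) is built for, and where (C1) alone would not be enough. Positivity of $\tilde\Gamma_{\hat\lambda_n}-\sigma^2$, needed for the two-sided ratio bound at $\hat\lambda_n$, is then automatic on the high-probability event carrying the uniform approximation, so no separate lower bound on $\hat\sigma^2_\lambda$ is required.
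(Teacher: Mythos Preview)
Your reduction to Theorem~\ref{thm2} via $\tilde\Gamma_\lambda=\Gamma_n(\lambda)+\hat\sigma^2_\lambda\delta_\lambda$ is exactly the paper's route, and the decisive step --- showing $\sup_\lambda|\hat\sigma^2_\lambda\delta_\lambda|/L(\hat\beta_\lambda)\to_p 0$ --- is also the paper's. Two technical points need repair, however.

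First, your shortcut $\sup_\lambda\hat\sigma^2_\lambda\le n^{-1}\|\mathbf{y}\|^2=O_p(1)$ is not justified by (A1)--(A4): it requires $n^{-1}\|\boldsymbol\mu\|^2=O(1)$, which is nowhere assumed. The paper sidesteps this by writing $\hat\sigma^2_\lambda\delta_\lambda=(\hat\sigma^2_\lambda-\sigma^2)\delta_\lambda+\sigma^2\delta_\lambda$. The second piece is immediate from (C2). For the first, the four-term bound on $|\hat\sigma^2_\lambda-\sigma^2|$ obtained in the proof of Lemma~1 (via $L(\hat\beta_{\alpha_\lambda})$, a Cauchy--Schwarz cross term, $|\,\|\boldsymbol\varepsilon\|^2/n-\sigma^2|$, and $\|\hat{\boldsymbol\mu}_{\alpha_\lambda}-\hat{\boldsymbol\mu}_\lambda\|^2/n$) gives, after dividing by $L(\hat\beta_\lambda)$, a sum controlled by (C1), (C2), (\ref{ZLTresult1}), (\ref{ZLTresult2}) and the law of large numbers --- no uniform bound on $\hat\sigma^2_\lambda$ or on $\boldsymbol\mu$ is needed.

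Second, the uniform approximation you attribute to Theorem~\ref{thm2} is centered at the wrong constant. What is actually established there (following the $C_p$ proof) is
\[
\sup_{\lambda}\left|\frac{\Gamma_n(\lambda)-\|\boldsymbol\varepsilon\|^2/n-L(\hat\beta_\lambda)}{L(\hat\beta_\lambda)}\right|\to_p 0,
\]
not the version with $\sigma^2$ and $\tilde R(\hat\beta_{\alpha_\lambda})$. Your chain goes through verbatim once $\sigma^2$ is replaced by the random constant $\|\boldsymbol\varepsilon\|^2/n$ (the minimization inequality is unaffected since the same quantity is subtracted on both sides); keeping $\sigma^2$ would additionally require $\sup_\lambda(\|\boldsymbol\varepsilon\|^2/n-\sigma^2)/L(\hat\beta_\lambda)\to_p 0$, which (A1)--(A4) do not deliver in general.
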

Condition (\ref{assump2}) in Theorem \ref{thm3} is a stronger assumption than in the analogous result established by Theorem 4.2 in \citet{shibata80} for selecting the optimal order of a linear process, but Theorem \ref{thm3} is sufficient to show that $AIC_\lambda$, $GCV_\lambda$, and $AIC_{c_\lambda}$ are asymptotically loss efficient model selection procedures for the regularization parameter.  All three methods can be shown to satisfy  (\ref{assump1}) and (\ref{assump2}) using Taylor series expansions.  The details are provided in the supplementary material.

\textit{Remark 1.} The efficiency proofs in this section make use of the results from \citet{li87}, which operate under assumptions (A1)-(A3).  Similar results exist in \citet{shibata81} if the error terms are normally distributed and (A3$^*$) is substituted for (A3).  The efficiency of $AIC_\lambda$, $AIC_{c_\lambda}$, and $GCV_\lambda$ can be shown in a similar manner in this setting.

\section{GLMs with No Dispersion Parameter}
We now generalize our efficiency results to a broader class of models by studying the asymptotic performance of $AIC_\lambda$ as a selector of $\lambda$ when the likelihood function is misspecified as a generalized linear model (GLM) and prove that it is asymptotically loss efficient.  We assume that the data $y_1, \ldots, y_n$ are independent with common unknown probability density function $g(y)$ and that $\E(y_i) = \mu_i$ and $\Var(y_i) = \sigma^2_i$.   To approximate this distribution, we consider a family of GLMs where the density of each candidate model is given by
\[
f_\alpha(y_i;\boldsymbol\beta_\alpha) = \exp \left(y_i \boldsymbol\theta_{\alpha i}
                                    - b(\boldsymbol\theta_{\alpha i})
                                    + c(y_i) \right),
\]
where $\boldsymbol\theta_\alpha = \mathbf{X}_\alpha \boldsymbol\beta_\alpha$, for $\alpha \in \mathcal{A}_n$.  Here we have assumed that there is no dispersion parameter, and we further assume that $b(\theta)$ is three times differentiable and that $b''(\theta)>0$ for all $\theta$.  All of these assumptions would hold for probit or logistic regression and the Poisson log-linear model.

A reasonable objective in this setting is to minimize two times the average Kullback-Leibler (KL) loss function, which is defined as
\[
L_{KL}(\boldsymbol\beta_\alpha)
    =  \frac{2}{n} \sum_{i=1}^n \E_0 \left(\log g(y_i)\right) - \E_0 \left( \log f_\alpha(y_i;\boldsymbol\beta_\alpha) \right)
    =  \frac{2}{n} \sum_{i=1}^n \left[ \mu_i (\vtheta_{0i} - \vtheta_{\alpha i}) +  ( b(\vtheta_{\alpha i}) - b(\vtheta_{0i})) \right].
\]
For a given sample size $n$, we define $\boldsymbol\theta^*_\alpha = \mathbf{X}_\alpha \boldsymbol\beta^*_\alpha$ as the minimizer of the KL loss.  By Theorem 1 in \citet{lv2010} we have that $\boldsymbol\theta^*_\alpha$ is the unique solution to the equation
\begin{equation}\label{pseudoProp}
\mathbf{X}'_\alpha (\boldsymbol\mu - b'(\boldsymbol\theta)) = 0.
\end{equation}
If $g(y)=f_\alpha(y;\boldsymbol\beta_0)$ for some true parameter $\boldsymbol\beta_0$ for any $\alpha$, then $\boldsymbol\beta^* _\alpha= \boldsymbol\beta_0$.  However, if we assume that we are in the non-true model world, then $g(y)$ is not completely specified by any of the candidate models and we refer to $\boldsymbol\beta^* _\alpha$ as the ``pseudo-true parameter'' based on the candidate model $\alpha$.

Similarly to the Gaussian model, for a given $\lambda$, we take $\hat{\boldsymbol\theta}_\lambda = \mathbf{X} \hat{\boldsymbol\beta}_\lambda$ and denote the maximum-likelihood estimator based on the model $\alpha_\lambda$ by $\hat{\boldsymbol\theta}_{\alpha_\lambda} = \mathbf{X}_{\alpha_\lambda} \hat{\boldsymbol\beta}_{\alpha_\lambda}$.  If we let $\hat{\lambda}_n$ denote the regularization parameter selected by a given selection procedure, then the procedure is defined to be \textit{asymptotically loss efficient} if
\[ \frac{L_{KL}(\hat{\boldsymbol\beta}_{\hat{\lambda}_n})}
    {\inf_{\lambda \in [0,\lambda_{max}]} L_{KL}(\hat{\boldsymbol\beta}_n(\lambda))}
        \rightarrow_p 1\]
and $\hat{\boldsymbol\beta}_n(\hat{\lambda}_n)$ is said to be an \textit{asymptotically loss efficient estimator}.

ZLT studied the asymptotic performance of $AIC_\lambda$ in a similar setting.  To establish asymptotic loss efficiency, ZLT restricted the set of candidate models to the set
\[
\mathcal{D} = \{ \alpha :
                    \sup_{\alpha \in \mathcal{D}}
                        |\hat{\boldsymbol\theta}_\alpha -\boldsymbol\theta_0 | \rightarrow 0
                        \text{ in probability, as } n \rightarrow \infty \},
\]
where $\boldsymbol\theta_0 = \mathbf{X}\boldsymbol\beta_0$.  For this restricted set of models, the
maximum-likelihood estimator converges uniformly to the true parameter.  If this set is known in practice, then the model selection process reduces to selecting the most parsimonious model in this set.  This class of models would rarely be known in practice, so this motivates us to weaken this assumption and to prove the efficiency of $AIC_\lambda$ for a general set of candidate models.

Under the regularity conditions (R1)-(R2) given in the supplementary material, \citet{white82} proved that $\hat{\boldsymbol\beta}_\alpha - \boldsymbol\beta^*_\alpha \rightarrow 0$, almost surely, and
established the asymptotic normality of $\hat{\boldsymbol\beta}_\alpha - \boldsymbol\beta^*_\alpha$ under (R1)-(R4).  With the additional condition (R5), \citet{nishii88} applied a Taylor expansion to show that
\begin{equation}\label{estMLE}
\hat{\boldsymbol\beta}_\alpha - \boldsymbol\beta^*_\alpha
    = \mathbf{A}_n^{-1}
    \left\{\frac{1}{n} \frac{\partial l(\boldsymbol\beta^*_\alpha)}{\partial \boldsymbol\beta} + \mathbf{r} \right\}
\end{equation}
for n sufficiently large, where
$
\mathbf{A}_n
    = - \frac{1}{n}
        \partial^2 l(\boldsymbol\beta^*_\alpha)/\partial \boldsymbol\beta \partial \boldsymbol\beta^T
$ and
$ r_{j} = O_p( ||\hat{\boldsymbol\beta}_\alpha - \boldsymbol\beta^*_\alpha||^2)$
for $j=1,\ldots,d_\alpha$.

We define the risk function of the maximum-likelihood estimator to be
$R_{KL}(\hat{\boldsymbol\beta}_\alpha)
    = \E_0(L_{KL}(\hat{\boldsymbol\beta}_\alpha))$.  From Theorem 4 of \citet{lv2010}, under (R1)-(R6),
\[
R_{KL}(\hat{\boldsymbol\beta}_\alpha)
    = L_{KL}(\boldsymbol\beta^*_\alpha)
    + \frac{ tr\{(\mathbf{X}^T_\alpha\mathbf{W}_\alpha\mathbf{X}_\alpha)^{-1}
        \mathbf{X}^T_\alpha\mathbf{W}_{0}\mathbf{X}_\alpha\}}{n}
    + o(1)
\]
where
$\mathbf{W}_{0} = diag\{ \sigma^2_1, \ldots, \sigma^2_n \}$ and
$\mathbf{W}_{\alpha} = diag\{ b''(\theta_{\alpha 1}), \ldots, b''(\theta_{\alpha n}) \}$.  Similarly to the Gaussian model, we further define the random variable
\[
\tilde{R}_{KL}(\hat{\boldsymbol\beta}_{\alpha_\lambda})
    = L_{KL}(\boldsymbol\beta^*_{\alpha_\lambda})
    + \frac{ tr\{(\mathbf{X}^T_{\alpha_\lambda}\mathbf{W}_{\alpha_\lambda}\mathbf{X}_{\alpha_\lambda})^{-1}
        \mathbf{X}^T_{\alpha_\lambda}\mathbf{W}_{0}\mathbf{X}_{\alpha_\lambda}\}}{n}
    + o(1).
\]

With these results and the following assumptions, we can prove the efficiency of $AIC_\lambda$.
\begin{itemize}
\item [(A1$'$)] $( \frac{1}{n} \mathbf{X'}\mathbf{X})^{-1}$ exists and its minimum and maximum eigenvalues are bounded below and above by constant numbers $C_1$ and $C_2$, respectively.
\item[(A2$'$)] $E (y_i-\mu_i)^{4q}<\infty$, for $i = 1, \ldots, n$ and some positive integer $q$.
\item[(A3$'$)] The risks of the maximum-likelihood estimators $\hat{\boldsymbol{\beta}}_\alpha$ satisfy
\[ \sum_{\alpha \in \mathcal{A}_n}
        (nR_{KL}(\hat{\boldsymbol\beta}_\alpha))^{-q} \rightarrow 0.\]
\item[(A4$'$)] $\sup_{\theta} b''(\theta) < \infty$
\item[(A5$'$)] $\sqrt{n} \lambda_{\max} < M_1$ for all $n$ for some constant $M_1>0$.
\item[(A6$'$)] For any $\theta$, $p'(\theta) \leq M_2 \lambda$ for some constant $M_2>0$.
\item[(A7$'$)] $nL_{KL}(\beta^*_{\bar{\alpha}})/d_n \rightarrow \infty$ as $n \rightarrow \infty$.
\end{itemize}
The first three assumptions are analogous to the assumptions made in the Gaussian model, and assumption (A4$'$) is a mild regularity assumption.  As shown by the following lemma, assumptions (A5$'$)-(A7$'$) are sufficient conditions for the penalized estimator to be close to the maximum-likelihood estimator.  These assumptions are analogous to the sufficient conditions used in the Gaussian model.  They are stated explicitly here since they are required in parts of the efficiency proof.

\begin{lemma}\label{l:A4glm}
Under (A5$'$)-(A7$'$),
    \[ \sup_{\lambda \in [0,\lambda_{max}]}
                \frac{ ||\mathbf{b} ||^2}
                    {\tilde{R}_{KL}(\hat{\boldsymbol\beta}_{\alpha_\lambda})}
                        \rightarrow_p 0,\]
where $\mathbf{b}_i$ is a $d_n \times 1$ vector where $b_{i} = p'_\lambda(|\hat{\beta}_{\lambda i})|)sgn(\hat{\beta}_{\lambda i})$ for all $i$ such that $|\hat{\beta}_{\lambda i}|>0$ and is equal to $0$ otherwise.
\end{lemma}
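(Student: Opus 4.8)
To establish Lemma~\ref{l:A4glm}, the plan is to mirror the argument ZLT use to deduce (A4) from (S1)--(S3) in the Gaussian model, now with $L_{KL}$ playing the role of $\Delta$ and $\tilde{R}_{KL}(\hat{\boldsymbol\beta}_{\alpha_\lambda})$ the role of $\tilde{R}(\hat{\boldsymbol\beta}_{\alpha_\lambda})$. The structural observation that makes this work is that an upper bound on the numerator $\|\mathbf{b}\|^2$ and a lower bound on the denominator $\tilde{R}_{KL}(\hat{\boldsymbol\beta}_{\alpha_\lambda})$ are both available as (essentially) deterministic quantities that hold uniformly over $\lambda\in[0,\lambda_{max}]$, so the supremum needs no extra maximal-inequality machinery and the ratio is driven to zero directly by (A7$'$).

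First I would bound the numerator. The vector $\mathbf{b}$ has at most $d_{\alpha_\lambda}$ nonzero entries, one for each selected coordinate, and each has absolute value $p'_\lambda(|\hat{\beta}_{\lambda i}|)\le M_2\lambda$ by (A6$'$); since $\lambda\le\lambda_{\max}<M_1/\sqrt{n}$ by (A5$'$),
\[ \|\mathbf{b}\|^2 \;\le\; d_{\alpha_\lambda}\,M_2^2\,\lambda_{\max}^2 \;<\; \frac{M_1^2 M_2^2\, d_n}{n}, \]
a deterministic bound, uniform in $\lambda$ (and the ratio is trivially $0$ when $d_{\alpha_\lambda}=0$).

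Next I would bound the denominator below by the bias of the largest model. Two facts suffice. (i) The trace term appearing in $\tilde{R}_{KL}$ is nonnegative: by the cyclic property of the trace it equals $\mathrm{tr}\{\mathbf{W}_0^{1/2}\mathbf{X}_{\alpha_\lambda}(\mathbf{X}^T_{\alpha_\lambda}\mathbf{W}_{\alpha_\lambda}\mathbf{X}_{\alpha_\lambda})^{-1}\mathbf{X}^T_{\alpha_\lambda}\mathbf{W}_0^{1/2}\}$, the trace of a positive semidefinite matrix (recall $\mathbf{W}_{\alpha_\lambda}$ is positive definite because $b''>0$, and $\mathbf{X}_{\alpha_\lambda}$ has full column rank by (A1$'$)). (ii) The loss $L_{KL}(\boldsymbol\beta_\alpha)$ depends on $\boldsymbol\beta_\alpha$ only through $\boldsymbol\theta_\alpha=\mathbf{X}_\alpha\boldsymbol\beta_\alpha$; since $\bar{\alpha}$ is the largest candidate model, $\{\mathbf{X}_{\alpha_\lambda}\boldsymbol\beta:\boldsymbol\beta\in\mathbb{R}^{d_{\alpha_\lambda}}\}\subseteq\{\mathbf{X}_{\bar{\alpha}}\boldsymbol\beta:\boldsymbol\beta\in\mathbb{R}^{d_n}\}$, and $\boldsymbol\beta^*_{\alpha_\lambda}$ minimizes $L_{KL}$ over the former set by (\ref{pseudoProp}), so $L_{KL}(\boldsymbol\beta^*_{\alpha_\lambda})\ge L_{KL}(\boldsymbol\beta^*_{\bar{\alpha}})$ for every $\lambda$. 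Combining (i) and (ii) gives $\tilde{R}_{KL}(\hat{\boldsymbol\beta}_{\alpha_\lambda})\ge L_{KL}(\boldsymbol\beta^*_{\bar{\alpha}})\,(1+o(1))$ uniformly in $\lambda$, where $L_{KL}(\boldsymbol\beta^*_{\bar{\alpha}})>0$ for $n$ large by (A7$'$).

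Putting the two bounds together,
\[ \sup_{\lambda\in[0,\lambda_{max}]}\frac{\|\mathbf{b}\|^2}{\tilde{R}_{KL}(\hat{\boldsymbol\beta}_{\alpha_\lambda})} \;\le\; \frac{M_1^2 M_2^2}{1+o(1)}\cdot\frac{d_n}{n\,L_{KL}(\boldsymbol\beta^*_{\bar{\alpha}})} \;\longrightarrow\; 0 \]
by (A7$'$), which is the claim. I expect the one point requiring care to be the uniformity, over the growing class $\mathcal{A}_n$, of the $o(1)$ remainder carried in the definition of $\tilde{R}_{KL}(\hat{\boldsymbol\beta}_{\alpha_\lambda})$ --- the expansion of $R_{KL}$ in Theorem~4 of \citet{lv2010} being stated for fixed models while $\alpha_\lambda$ is data-dependent and $L_{KL}(\boldsymbol\beta^*_{\bar{\alpha}})$ may itself tend to $0$ --- but this is the same uniform-in-model control already needed for the efficiency theorem and established with the auxiliary results in the supplementary material; keeping the lower bound at the level of $L_{KL}(\boldsymbol\beta^*_{\alpha_\lambda})$ plus the nonnegative trace term (rather than the weaker $L_{KL}(\boldsymbol\beta^*_{\bar{\alpha}})$) absorbs any degenerate regime. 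Facts (i) and (ii) themselves are routine once the variational characterization behind (\ref{pseudoProp}) is invoked.
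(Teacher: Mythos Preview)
Your proposal is correct and follows essentially the same approach as the paper's proof, which is a one-line chain of inequalities: bound $\|\mathbf{b}\|^2$ above using (A5$'$)--(A6$'$), bound $\tilde{R}_{KL}(\hat{\boldsymbol\beta}_{\alpha_\lambda})$ below by $L_{KL}(\boldsymbol\beta^*_{\bar{\alpha}})$, and invoke (A7$'$). Your version is more explicit in justifying the denominator bound---showing the trace term is nonnegative and arguing $L_{KL}(\boldsymbol\beta^*_{\alpha_\lambda})\ge L_{KL}(\boldsymbol\beta^*_{\bar{\alpha}})$ via the nesting of column spaces---whereas the paper simply asserts the inequality; your concern about the uniform-in-model $o(1)$ remainder is also legitimate, though the paper glosses over it.
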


The proof is given in Appendix B.  The next theorem establishes the efficiency of $AIC_\lambda$.

\begin{thm}
Assuming $d_n/n \rightarrow 0$ as $n \rightarrow \infty$, (A1$'$)-(A7$'$) and the regularity conditions (R1)-(R6), the regularization parameter, $\hat{\lambda}_n$, selected by minimizing $AIC_\lambda$ yields an asymptotically loss efficient estimator, $\hat{\beta}_n(\hat{\lambda}_n)$.
\end{thm}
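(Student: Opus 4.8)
\emph{Proof sketch.} The plan is to adapt the route taken for the Gaussian model in Theorems~2.1--2.3. Writing $AIC_\lambda=-\frac{2}{n}l(\hat{\boldsymbol\beta}_\lambda)+\frac{2d_{\alpha_\lambda}}{n}$, the aim is to show
\[
\sup_{\lambda\in[0,\lambda_{max}]}\frac{\bigl|AIC_\lambda-L_{KL}(\hat{\boldsymbol\beta}_\lambda)-\Xi_n\bigr|}{L_{KL}(\hat{\boldsymbol\beta}_\lambda)}\ \rightarrow_p\ 0
\]
for some $\Xi_n$ not depending on $\lambda$, which together with $\inf_\lambda L_{KL}(\hat{\boldsymbol\beta}_\lambda)\rightarrow_p 0$ yields asymptotic loss efficiency in the usual way. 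Since $\lambda\mapsto\alpha_\lambda$ maps $[0,\lambda_{max}]$ into the finite class $\mathcal{A}_n$, the supremum reduces to a union bound over the at most $2^{d_n}$ candidate submodels, which is exactly what the moment condition (A2$'$) and the summability condition (A3$'$) are designed to control (the device of \citet{li87}, already used in Section~2). The first step is to pass from the penalized fit to the unpenalized maximum-likelihood fit $\hat{\boldsymbol\beta}_{\alpha_\lambda}$ on the selected submodel: under (A5$'$)--(A7$'$) the penalized and likelihood estimating equations differ only through the vector $\mathbf{b}$, whose squared norm is $o_p(\tilde{R}_{KL}(\hat{\boldsymbol\beta}_{\alpha_\lambda}))$ uniformly by Lemma~\ref{l:A4glm}, and a first-order expansion transfers this to the fitted natural parameters, to $l$ and to $L_{KL}$; since by (A1$'$)--(A3$'$) the quantities $L_{KL}$, $\tilde{R}_{KL}$ and $R_{KL}$ at $\hat{\boldsymbol\beta}_{\alpha_\lambda}$ are uniformly comparable, it then suffices to prove the display with $\hat{\boldsymbol\beta}_\alpha$ in place of $\hat{\boldsymbol\beta}_\lambda$, uniformly over $\alpha\in\mathcal{A}_n$.

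\emph{Quadratic expansions about the pseudo-true parameter.} Put $\mathbf{s}_\alpha=\frac1n\,\partial l(\boldsymbol\beta^*_\alpha)/\partial\boldsymbol\beta=\frac1n\mathbf{X}_\alpha^T(\mathbf{y}-\boldsymbol\mu)$, the second equality being \eqref{pseudoProp}, so $\mathbf{s}_\alpha$ is a mean-zero linear functional of $\mathbf{y}-\boldsymbol\mu$, and recall that $\mathbf{A}_n=-\frac1n\partial^2 l(\boldsymbol\beta^*_\alpha)/\partial\boldsymbol\beta\partial\boldsymbol\beta^T=\frac1n\mathbf{X}_\alpha^T\mathbf{W}_\alpha\mathbf{X}_\alpha$ is nonrandom. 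Using \eqref{estMLE} and $\partial l(\hat{\boldsymbol\beta}_\alpha)/\partial\boldsymbol\beta=0$, a second-order Taylor expansion of $l$ gives $-\frac2n l(\hat{\boldsymbol\beta}_\alpha)=-\frac2n l(\boldsymbol\beta^*_\alpha)-Q_\alpha+\rho_\alpha^{(1)}$ with $Q_\alpha=\mathbf{s}_\alpha^T\mathbf{A}_n^{-1}\mathbf{s}_\alpha$; since $\boldsymbol\beta^*_\alpha$ minimizes $L_{KL}$ and $\partial^2 L_{KL}(\boldsymbol\beta^*_\alpha)/\partial\boldsymbol\beta\partial\boldsymbol\beta^T=2\mathbf{A}_n$, the matching expansion gives $L_{KL}(\hat{\boldsymbol\beta}_\alpha)=L_{KL}(\boldsymbol\beta^*_\alpha)+Q_\alpha+\rho_\alpha^{(2)}$, where by \citet{white82}, \citet{nishii88} and Theorem~4 of \citet{lv2010} (under (R1)--(R6)) the remainders $\rho_\alpha^{(j)}$ are $O_p(\|\hat{\boldsymbol\beta}_\alpha-\boldsymbol\beta^*_\alpha\|^3)$ plus the contribution of $\mathbf{r}$ and are $o_p(R_{KL}(\hat{\boldsymbol\beta}_\alpha))$ uniformly. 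A direct computation shows $-\frac2n l(\boldsymbol\beta^*_\alpha)-L_{KL}(\boldsymbol\beta^*_\alpha)$ equals a term free of $\alpha$ minus $\frac2n\sum_{i=1}^n(y_i-\mu_i)\boldsymbol\theta^*_{\alpha i}$, while $\E_0 Q_\alpha=\tau_\alpha/n$ with $\tau_\alpha=tr\{(\mathbf{X}_\alpha^T\mathbf{W}_\alpha\mathbf{X}_\alpha)^{-1}\mathbf{X}_\alpha^T\mathbf{W}_0\mathbf{X}_\alpha\}$, the very trace appearing in the risk expansion for $R_{KL}$. Collecting all terms free of $\alpha$ (including $\frac2n\sum_{i=1}^n(y_i-\mu_i)\boldsymbol\theta_{0i}$) into $\Xi_n$, one obtains, for every $\alpha\in\mathcal{A}_n$,
\begin{align*}
&-\frac2n l(\hat{\boldsymbol\beta}_\alpha)+\frac{2d_\alpha}{n}-L_{KL}(\hat{\boldsymbol\beta}_\alpha)-\Xi_n\\
&\qquad =-\frac2n\sum_{i=1}^n(y_i-\mu_i)\bigl(\boldsymbol\theta^*_{\alpha i}-\boldsymbol\theta_{0i}\bigr)-2\bigl(Q_\alpha-\E_0 Q_\alpha\bigr)+\frac2n\bigl(d_\alpha-\tau_\alpha\bigr)+o_p\bigl(R_{KL}(\hat{\boldsymbol\beta}_\alpha)\bigr),
\end{align*}
uniformly in $\alpha$ (the case $\alpha=\alpha_\lambda$ recovers $AIC_\lambda$ up to the Step~1 remainder).

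\emph{Uniform control of the three terms.} Each of the first three terms on the right is $o_p(R_{KL}(\hat{\boldsymbol\beta}_\alpha))$ uniformly over $\mathcal{A}_n$. A Taylor expansion of $b$ at $\boldsymbol\theta_{0i}$ (with $b'(\boldsymbol\theta_{0i})=\mu_i$) together with (A1$'$), (A4$'$) and (R1)--(R6) gives $\frac1n\|\boldsymbol\theta^*_\alpha-\boldsymbol\theta_0\|^2\asymp L_{KL}(\boldsymbol\beta^*_\alpha)$, so the cross term has conditional variance $O(n^{-1}L_{KL}(\boldsymbol\beta^*_\alpha))$; $Q_\alpha-\E_0 Q_\alpha$ is a centred quadratic form in $\mathbf{y}-\boldsymbol\mu$ whose matrix has squared Frobenius norm $O(d_\alpha/n^2)$, hence variance of that order; and $|d_\alpha-\tau_\alpha|=O(d_\alpha)$, again from (A1$'$), (A4$'$). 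Now (A7$'$), $nL_{KL}(\boldsymbol\beta^*_{\bar\alpha})/d_n\rightarrow\infty$, combined with $L_{KL}(\boldsymbol\beta^*_\alpha)\ge L_{KL}(\boldsymbol\beta^*_{\bar\alpha})$ (the largest model is the best Kullback--Leibler approximant), forces $L_{KL}(\boldsymbol\beta^*_\alpha)\gg d_n/n\ge d_\alpha/n$ for \emph{every} candidate model; since $R_{KL}(\hat{\boldsymbol\beta}_\alpha)\ge L_{KL}(\boldsymbol\beta^*_\alpha)$, the deterministic term $\frac2n(d_\alpha-\tau_\alpha)$ divided by $R_{KL}(\hat{\boldsymbol\beta}_\alpha)$ is uniformly $o(1)$, and for the two random terms a Markov inequality plus union bound over the $2^{d_n}$ models --- using $4q$-th moments from (A2$'$) --- reduces the required bound to $\sum_\alpha (nR_{KL}(\hat{\boldsymbol\beta}_\alpha))^{-q}$, which tends to $0$ by (A3$'$). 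This gives the displayed limit and hence the theorem.

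\emph{The main obstacle.} The delicate point is the term $\frac2n(d_\alpha-\tau_\alpha)$: because the working model is misspecified, the penalty that $AIC_\lambda$ ``ought'' to carry is the trace $\tau_\alpha=tr\{(\mathbf{X}_\alpha^T\mathbf{W}_\alpha\mathbf{X}_\alpha)^{-1}\mathbf{X}_\alpha^T\mathbf{W}_0\mathbf{X}_\alpha\}$, not the naive count $d_\alpha$, and the two can differ by a constant multiple of $d_\alpha$. The argument survives only because (A7$'$) guarantees that every candidate model's Kullback--Leibler bias dominates its dimension term, so this $O(d_\alpha/n)$ discrepancy is swallowed by the (much larger) loss; obtaining that control, along with the quadratic-form and cross-term bounds, \emph{uniformly} over all $2^{d_n}$ data-dependent candidate subsets --- rather than along a fixed nested path, as in the classical theory of \citet{li87} and \citet{shibata81} --- is the step demanding the most care, and is what binds together assumptions (A1$'$)--(A3$'$) and (A7$'$).
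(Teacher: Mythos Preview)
Your proposal is correct and follows essentially the same route as the paper's own proof. The paper likewise reduces to showing $\sup_\lambda|AIC_\lambda-\Xi_n-L_{KL}(\hat{\boldsymbol\beta}_\lambda)|/L_{KL}(\hat{\boldsymbol\beta}_\lambda)\to_p 0$ with $\Xi_n=\frac{2}{n}\mathbf{y}^T\boldsymbol\theta_0-\frac{2}{n}\mathbf{1}^T b(\boldsymbol\theta_0)$, arrives (via the expansion \eqref{estMLE}) at the same three remainder terms you isolate---the cross term $(\mathbf{y}-\boldsymbol\mu)^T(\boldsymbol\theta_0-\boldsymbol\theta^*_{\alpha_\lambda})$, the centred quadratic form, and the penalty mismatch $d_{\alpha_\lambda}-\tau_{\alpha_\lambda}$---and controls them by exactly the devices you name: Chebyshev plus Whittle's moment inequality and the union bound against (A3$'$) for the random terms, and (A7$'$) for the deterministic $d_\alpha-\tau_\alpha$; the passage from $\hat{\boldsymbol\beta}_\lambda$ to $\hat{\boldsymbol\beta}_{\alpha_\lambda}$ is handled by a separate lemma matching your Step~1.
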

The proof is given in Appendix B.

\section{Simulation Studies}
In this section we study the finite sample performance of the model selection procedures when the true model is not included in the set of candidate models.


In all of the examples, the results are based on 1000 realizations of samples with $n=100, 200,$ and $400$, and the selection procedures are evaluated based on their loss efficiency, loss, and the variability of the selected number of non-zero coefficients. For each realization, if we let $\hat{\lambda}_n$ denote the regularization parameter selected by a given selection procedure, then the loss efficiency is computed as
\begin{equation*}
\frac{L(\hat{\boldsymbol\beta}_{\hat{\lambda}_n})}{\min_{\lambda \in [0,\lambda_{max}]} L(\hat{\boldsymbol\beta}_{\lambda})}.
\end{equation*}
where $L(\cdot)$ is the $L_2$ loss in the linear regression examples and is the KL loss in the GLM examples.  For comparison, we also include results for the (infeasible) ``Optimal'' procedure, which selects the tuning parameter over the regularization path that produces the minimum loss for each realization and report the loss (``Min.Loss'') achieved by this procedure.

\subsection{Linear Regression}
In this section we study the finite sample performance of the model selection procedures discussed in Section 2.2.  The first set of simulations considers a trigonometric regression where the candidate models are in the neighborhood of the true model but never include the true model.  This example is in line with the framework considered by \citet{shibata80} and \citet{hurvich91}.  The second set of simulations look at an example where there is an omitted predictor.  For example, the researcher may have access to some of the relevant predictors but may be missing others.  This is the setting that was considered by ZLT.

\subsubsection{Choice of Penalty Function}
We consider two common choices for the penalty function.  The first is the Smoothly Clipped Absolute Deviation (SCAD) penalty function proposed by \citet{fan01}.  This penalty function is defined by
\begin{equation*}
p'_\lambda(\beta)=\lambda \left \{ I(\beta \leq \lambda) + \frac{(a\lambda-\beta)_+}{(a-1)\lambda} I(\beta > \lambda) \right \}
\end{equation*}
for some $a>2$ and $\beta>0$. \citet{fan01} recommended setting the second tuning parameter in the SCAD penalty function, $a$, equal to 3.7 and this is commonly done in practice; however, doing so will not necessarily guarantee that the SCAD objective function is convex and can result in convergence to local, but non-global, minima.  As a result, in addition to studying the performance of SCAD with $a=3.7$ (SCAD, 3.7), we study the performance of SCAD where $a=\max(3.7,1+1/c^*)$ (SCAD) where $c^*$ is the minimum eigenvalue of $n^{-1}\mathbf{X'X}$.  The latter choice will force the objective function to be convex \citep{Breheny11}.

The wide use of SCAD is mainly due to the fact that it satisfies the ``oracle property.''  This means that, assuming that the true model is in the set of candidate models and subject to certain regularity assumptions, there exists a sequence $\{ \lambda_n\}$ such that if $\lambda_n \rightarrow 0$ and $\sqrt{n}\lambda_n \rightarrow \infty$ then with probability tending to one the SCAD-estimated regression based on the full model will correctly zero out any zero coefficients and have the same asymptotic distribution as the least squares regression based on the correct model.  This result was proven originally for $d_n$ fixed by \citet{fan01} and was extended to the case where $d_n<n$ but $d_n \rightarrow \infty$ by \citet{fan04}.  These results are for an unknown deterministic sequence that needs to be estimated in practice.

The second penalty function that we study is the Lasso proposed by \citet{tibshirani96}.  The Lasso penalty is the $L_1$-norm of the coefficients.  Necessary and sufficient conditions have been established for the Lasso to perform consistent model selection \citep{zhao06}, but in general the Lasso produces biased estimates and does not satisfy the oracle property \citep{zou06}.  However, in the non-true model world, the oracle property has no meaning, since there is no true model.  Further, even in the true model world, the oracle property is an asymptotic property.

It is important to note that although ZLT only studied non-concave penalty functions, if the non-zero estimated coefficients, $\hat{\boldsymbol\beta}_{\lambda 1}$, satisfy a relationship of the form
\begin{equation*}
\hat{\boldsymbol\beta}_{\lambda 1} = (\mathbf{X'}_{\alpha_\lambda}\mathbf{X'}_{\alpha_\lambda})^{-1}\mathbf{X}_{\alpha_\lambda}\mathbf{y}
+ \left(\frac{1}{n}\mathbf{X'}_{\alpha_\lambda}\mathbf{X'}_{\alpha_\lambda}\right)^{-1}\mathbf{b}_1
\end{equation*}
with probability tending to 1 and (A4) is satisfied, then the efficiency proofs will hold for any penalty function.  In the above, $\mathbf{b}_1$ are the elements of $\mathbf{b}$ that correspond to $\hat{\boldsymbol\beta}_{\lambda 1}$.  In particular, based on Lemma 2 of \citet{zou07}, the Lasso satisfies this relationship and the same sufficient conditions provided by ZLT for (A4) can be used.  Therefore, the efficiency proofs will hold for the Lasso, so it is interesting to compare the performance of the two penalty functions.

The Lasso regressions are fit using the R {\tt  lars} package \citep{lars} and the SCAD regressions are fit using the R {\tt ncvreg} package \citep{Breheny11}.  The {\tt lars} package computes the entire regularization path for the Lasso and for SCAD the models are fit over a grid of 200 $\lambda$ values from $\lambda_{min}$ to $\lambda_{max}$, where the first 100 values of $\lambda$ are fit on a log-scale and the last 100 values of $\lambda$ are equally spaced.  \citet{Breheny11} considered a grid of 100 $\lambda$ values in their simulation studies.  We have chosen a grid that is twice as fine in order to remain closer to the theoretical assumption that all possible values of $\lambda$ are considered.  In all simulations, $\lambda_{max}$ is specified so that all of the estimated coefficients are zero and $\lambda_{min}$ is chosen to effectively produce the least squares estimate on the full model.

\subsubsection{Exponential model}
Here we consider a trigonometric example based on an example studied in \citet{hurvich91}.  The true model is the model described as
\begin{equation*}
y_i=e^{4i/n}+\varepsilon_i
\end{equation*}
for $i=1,\ldots,n$, where $\varepsilon_i \iid N(0,\sigma^2)$.  The estimated models are SCAD and Lasso penalized regressions where the matrix of predictors, $\mathbf{X}=(\mathbf{x}^1,\mathbf{x}^2)$, is a $n \times d_n$ matrix with components defined by
\begin{equation*}
x^1_{ij} = \sin \left(\frac{2\pi j}{n} i \right)
\end{equation*}
and,
\begin{equation*}
x^2_{ij} = \cos \left(\frac{2\pi j}{n} i \right)
\end{equation*}
for $j = 1, \ldots, d_n/2$ and $i=1, \ldots, n$.  The maximum number of predictors is allowed to vary by letting the dimension $d_n = 2\lfloor n^c/2 \rfloor$.  It is shown in the appendix that for this example $||\vmu-\mH_{\bar{\alpha}}||^2 \geq k_1 n d_n^{-2}$ for some positive constant $k_1$.  Therefore, by Lemma 2.1, assumption (A3$^*$) will hold so long as $c<1/3$.  In the simulations we take $c=.3$, and for comparison we also consider $c=.5, .8$ and $.98$.  Note that examining $d_n$ close to $n$ allows for the study of high-dimensional data problems, and is in the spirit of simulations performed in \citet{tibshirani96} and \citet{zou05}.  Since the predictor variables are orthogonal in this example, setting $a=3.7$ for SCAD satisfies the convexity constraint for all values of $c$.

As in \citet{hurvich91}, we examine both $\sigma^2=50$ and $\sigma^2=100$, but the patterns for the two error variances are similar so only the results for $\sigma^2=100$ are reported. The median $L_2$ loss efficiency is presented in Table \ref{tab:ExpEff} for both SCAD and Lasso.  For all values of $c$, the median loss efficiency of $AIC_{c_\lambda}$ and $C_{p_\lambda}$ tend to one as the sample size increases, while the median loss efficiency of $BIC_\lambda$ does not show signs of convergence.  These patterns are consistent with the theoretical efficiency results.  When the number of predictor variables is small relative to the sample size, the loss efficiency of $AIC_\lambda$ also tends to one; however, as the number of candidate predictors is increased, the performance of $AIC_\lambda$ deteriorates.  Figure \ref{fig:ExpDf} displays boxplots of the selected number of non-zero coefficients when $n=200$, $\sigma^2=100$, and $c=.98$.  From this plot we see that $AIC_\lambda$ often selects a model that is close to the full model when $c$ is large.  As the sample size is increased the full model becomes less desirable and $AIC_\lambda$ suffers as a result.  For SCAD, $GCV_\lambda$ appears to suffer from a similar problem, but to a lesser extent than $AIC_\lambda$.  The difference in performance for varying values of $c$ suggests that the good asymptotic performance of $AIC_\lambda$ and $GCV_\lambda$ is strongly dependent on the fact that $d_n/n \rightarrow 0$ and these selectors may not perform well in finite samples when this ratio is close to 1.

Overall, the sensitivity to the value of $c$ clearly hurts the performance of $AIC_\lambda$ and can also negatively impact the performance of $C_{p_\lambda}$ and $GCV_{\lambda}$.  The impact on the latter two is more noticeable when looking at SCAD, but in both cases the extreme variability in the size of the selected model is undesirable.  As a result, we recommend the use of $AIC_c$ or 10-fold $CV$, which are less sensitive to the closeness of $d_n$ to $n$.

\begin{table}[H]
\begin{center}
\caption{Median L2 Loss Efficiency over 1000 simulations for the exponential model with $\sigma^2=100$.}
\vskip5pt
\scalebox{0.7}{%
\begin{tabular}{|l|l|cccc|cccc|}	
\hline
	&		&	\multicolumn{8}{c|}{Median Loss Efficiency}									\\															 \hline																						
& & \multicolumn{4}{c|}{SCAD}	&			\multicolumn{4}{c|}{Lasso}			\\
\hline				
Info. Crit.	&	n	&	c=.3	&	c=.5	&	c=.8	&	c=.98	&	c=.3	&	c=.5	&	c=.8	&	c=.98	\\
\hline
10-fold CV	&	100	&	1.00	&	1.05	&	1.07	&	1.08	&	1.00	&	1.01	&	1.05	&	1.12	\\
	&	200	&	1.00	&	1.03	&	1.06	&	1.05	&	1.00	&	1.01	&	1.03	&	1.07	\\
	&	400	&	1.00	&	1.03	&	1.03	&	1.04	&	1.00	&	1.01	&	1.02	&	1.04	\\
\hline																			
$AIC_\lambda$	&	100	&	1.00	&	1.04	&	1.18	&	2.43	&	1.00	&	1.01	&	1.07	&	2.13	\\
	&	200	&	1.01	&	1.02	&	1.20	&	3.08	&	1.00	&	1.01	&	1.06	&	2.57	\\
	&	400	&	1.00	&	1.02	&	1.23	&	4.05	&	1.00	&	1.01	&	1.05	&	3.29	\\
\hline																			
$AIC_{c_\lambda}$	&	100	&	1.00	&	1.04	&	1.09	&	1.13	&	1.00	&	1.02	&	1.10	&	1.21	\\
	&	200	&	1.01	&	1.03	&	1.07	&	1.08	&	1.00	&	1.01	&	1.06	&	1.11	\\
	&	400	&	1.00	&	1.02	&	1.05	&	1.06	&	1.00	&	1.01	&	1.04	&	1.08	\\
\hline																			
$BIC_\lambda$	&	100	&	1.00	&	1.07	&	1.32	&	1.64	&	1.00	&	1.05	&	1.60	&	1.64	\\
	&	200	&	1.02	&	1.06	&	1.47	&	1.51	&	1.00	&	1.06	&	1.74	&	1.62	\\
	&	400	&	1.01	&	1.07	&	1.60	&	1.51	&	1.00	&	1.08	&	1.80	&	1.60	\\
\hline																			
$C_{p_\lambda}$	&	100	&	1.00	&	1.04	&	1.10	&	1.22	&	1.00	&	1.01	&	1.05	&	1.15	\\
	&	200	&	1.01	&	1.02	&	1.09	&	1.15	&	1.00	&	1.01	&	1.03	&	1.09	\\
	&	400	&	1.00	&	1.02	&	1.08	&	1.09	&	1.00	&	1.01	&	1.03	&	1.05	\\
\hline																			
$GCV_\lambda$	&	100	&	1.00	&	1.04	&	1.10	&	1.69	&	1.00	&	1.01	&	1.06	&	1.16	\\
	&	200	&	1.01	&	1.02	&	1.10	&	1.73	&	1.00	&	1.01	&	1.04	&	1.09	\\
	&	400	&	1.00	&	1.02	&	1.08	&	1.82	&	1.00	&	1.01	&	1.03	&	1.05	\\
\hline																												 \end{tabular}}
\end{center}
\label{tab:ExpEff}
\end{table}

\begin{figure}[H]
\centering
\caption{Comparison of model selection procedures based on the number of non-zero coefficients (includes intercept) in the selected model over 1000 simulations for the exponential model with $n=200$, $\sigma^2=100$, and $c=0.98$.}
\subfigure[SCAD]{
\resizebox{6cm}{!}{\includegraphics{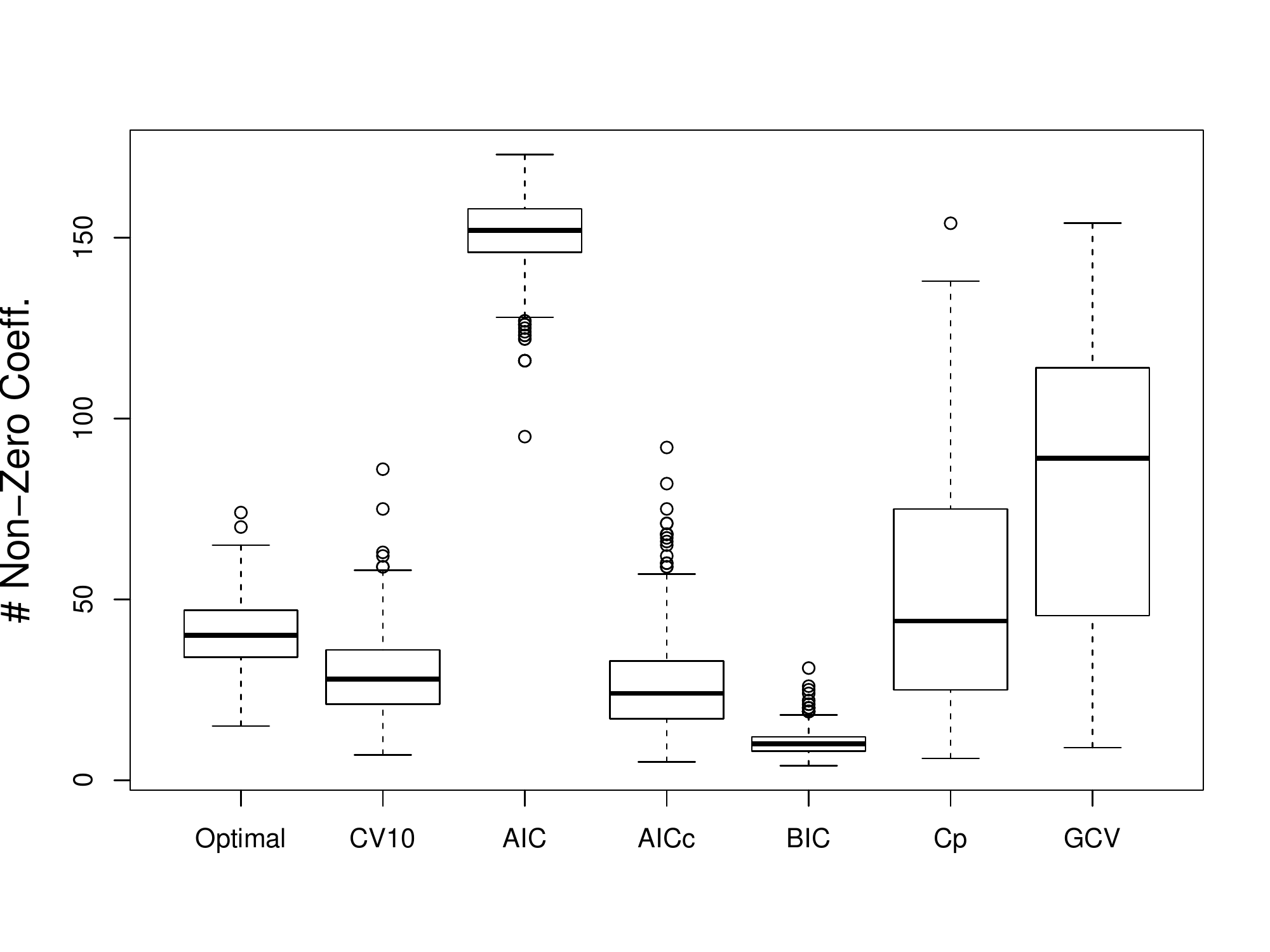}}}
\subfigure[Lasso]{
\resizebox{6cm}{!}{\includegraphics{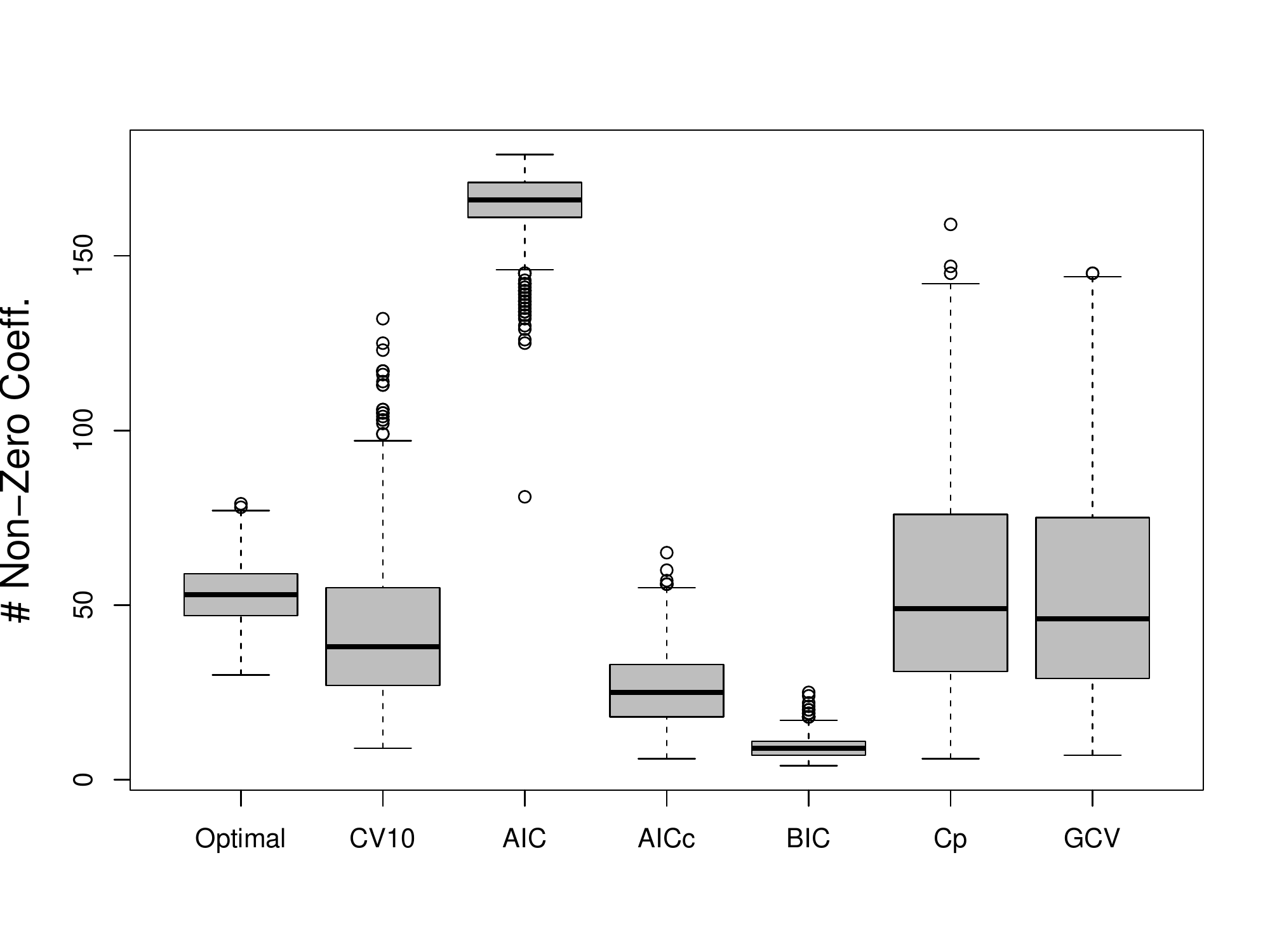}}}
\label{fig:ExpDf}
\end{figure}

Figure \ref{fig:ExpLoss} presents boxplots of the $L_2$ loss for the 1000 realizations when $n=200$ when $c=.5$ and $c=.98$.  From this we can compare the optimal performance of SCAD and the Lasso.  Based on minimum loss, the predictive accuracies of the two methods are similar.  This reinforces that the existence of an oracle property is not relevant in the non-true model world, and an estimator that does not possess the oracle property can still be effective from a predictive point of view.

\begin{figure}[H]
\centering
\caption{Comparison of model selection procedures based on L2 Loss over 1000 simulations for the exponential model with $n=200$ and $\sigma^2=100$.}
\subfigure[c=.5]{
\resizebox{6cm}{!}{\includegraphics{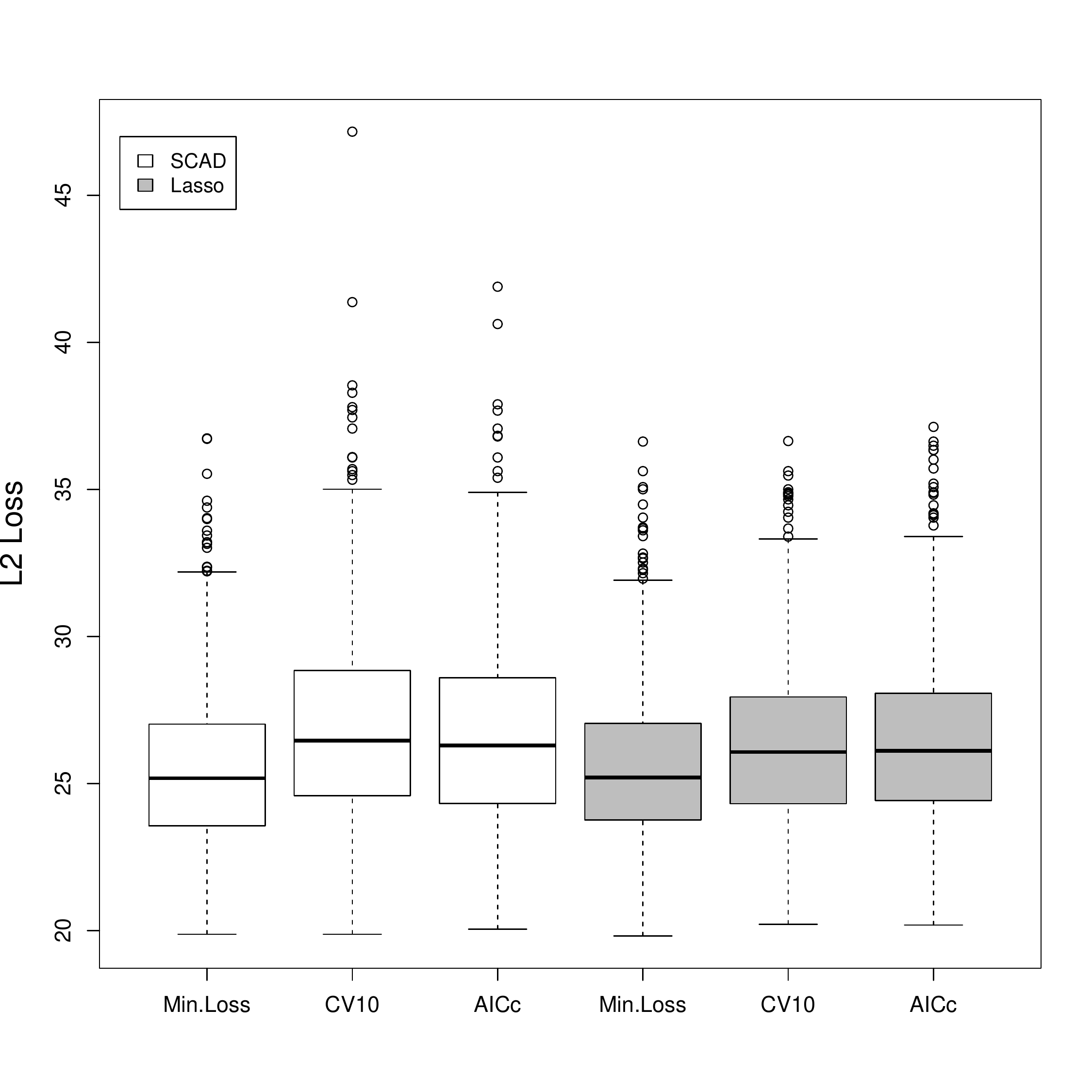}}}
\subfigure[c=.98]{
\resizebox{6cm}{!}{\includegraphics{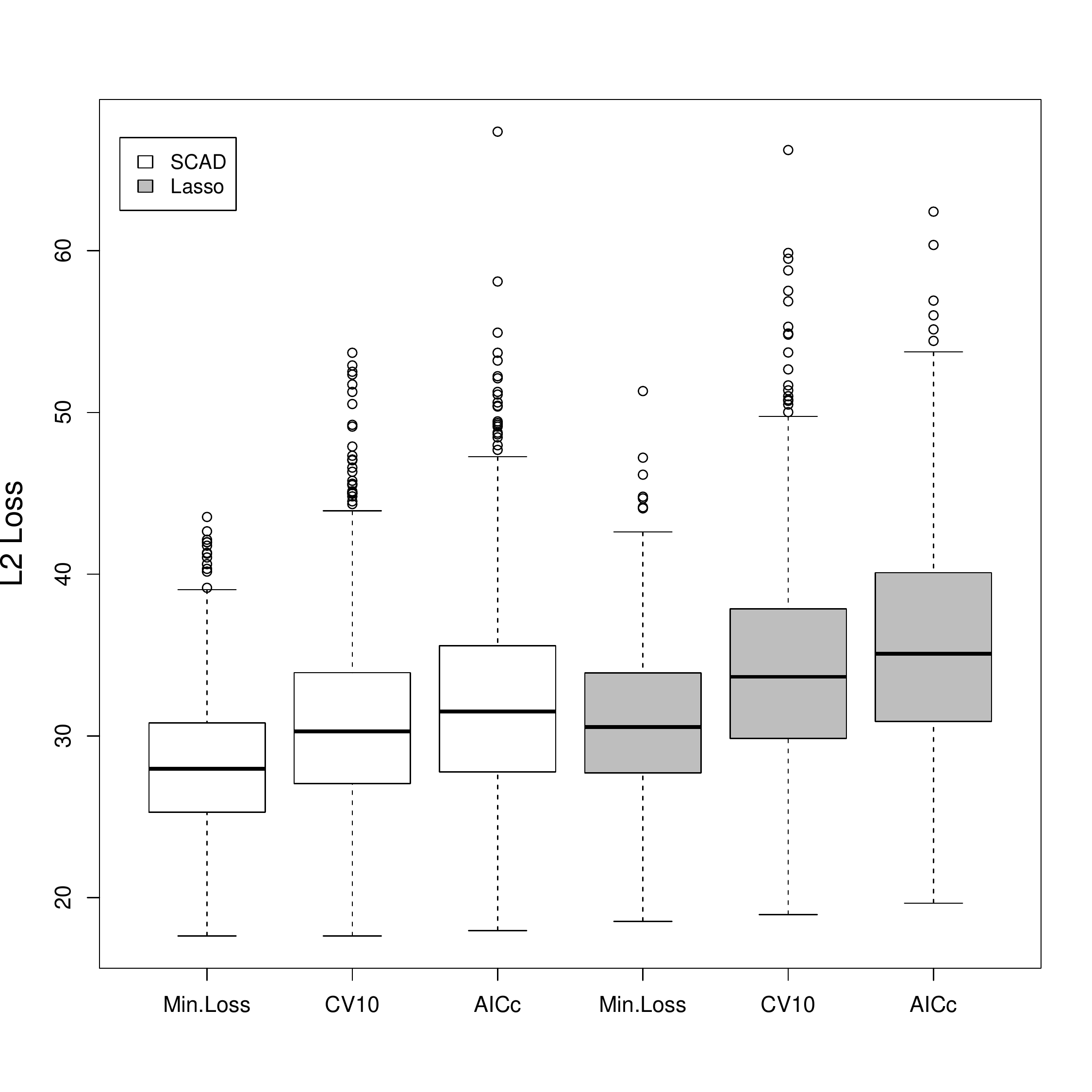}}}
\label{fig:ExpLoss}
\end{figure}

\subsubsection{Omitted Predictor}
Here we study an omitted predictor example similar to example 2 in ZLT.  The true model is defined as
\begin{equation*}
y_i = 3x_{i,1} + 1.5x_{i,2}+2x_{i,10}+x_{i,13}+\varepsilon_i
\end{equation*}
where $\varepsilon_i \iid N(0,\sigma^2)$ for $\sigma^2=16$ and $\sigma^2=25$.  We let $\mathbf{X}$ be a $2n \times (d_n+1)$ matrix of predictors where the $\mathbf{x}_i's$ are simulated from a multivariate normal distribution with mean 0 and variance-covariance matrix $\Sigma$ where $\Sigma_{i,j} = \rho^{|i-j|}$ for $\rho=0$ and $0.5$.  In the simulations $\mX$ is simulated once and is used for every simulation run in order to resemble a fixed $\mX$ setting.  The estimated models are SCAD and Lasso penalized regressions based on the first $n$ observations of $\mathbf{X}$ except with the $13^{th}$ column removed so that the true model is never included in the set of candidate models.  In order to compare predictive performance, we treat the remaining observations of $\mX$ as a hold-out sample and use it to compute the loss for each estimated model.

In both examples the number of superfluous variables included in the candidate models is allowed to vary by letting the dimension $d_n = 2\lfloor n^c/2 \rfloor$.  Under deterministic $\mX$, it is shown in the supplementary material that $||\vmu-\mH_{\bar{\alpha}}||^2 \geq k_1 n$ for some positive constant $k_1$ if the excluded predictor is orthogonal to the included predictors.  By Lemma 2.1, assumption (A3$'$) will then hold if $d_n/n \rightarrow 0$.
This suggests that when the excluded predictor is uncorrelated or only moderately correlated with the included predictors it is reasonable to compare $c=0.5, 0.8$ and $0.98$.

In this example setting $a=3.7$ will not satisfy the convexity constraint for all values of $c$.  Therefore, we further compare the case where $a=3.7$ (SCAD, $a=3.7$) to the case where $a=\max{(3.7,1+1/c^*)}$ (SCAD).

The patterns for the two error variances and two values of $\rho$ are similar so only the results for $\sigma^2=16$ and $\rho=0.5$ are reported.  We first consider Figure \ref{fig:ExclLoss}, which presents boxplots comparing the three estimators based on loss when $n=200$.  From these plots it is immediately clear that all of the information criteria perform better when $a$ is allowed to be data-dependent, while 10-fold $CV$ performs well regardless of the choice of $a$. One possible explanation for this is that all of the information criteria under consideration were derived for use in classical least squares regression so they should perform well assuming that the estimated models are close to the corresponding OLS models.  When the second tuning parameter of SCAD is fixed at 3.7, the objective function is not necessarily convex so the SCAD-estimated models may be very far from the OLS models.  On the other hand, 10-fold CV is a general model selection procedure that should work in a variety of settings.  In general, we recommend using a data-dependent choice of $a$ since it requires little additional cost and can greatly improve the performance of all of the information criteria.


Focusing only on the data-dependent choice of $a$, we see that the performance of the model selection procedures is similar for both SCAD and Lasso when $c=.8$ and when $c=.98$, but that the performance of SCAD is noticeably worse when $c=.5$.  A possible explanation for this is that when $c$ is small, the performance of the SCAD estimators is more sensitive to the choice of the second tuning parameter.  Although taking $a=\max{(3.7,1+1/c^*)}$ guarantees that the penalized loss function is convex, it may not be the optimal choice for this parameter and more investigation into the choice of this parameter is needed.  Of course, this implies an advantage of Lasso over SCAD, since it does not require the choice of this second parameter.

Comparing the model selection procedures, we again see that $AIC_\lambda$, $GCV_\lambda$, and $C_{p_\lambda}$ are sensitive to the number of predictor variables while $AIC_{c_\lambda}$ and 10-fold $CV$ maintain good performance.  The boxplots of the selected number of non-zero coefficients are omitted since the patterns are similar to those seen in the exponential model.  In Figure \ref{fig:ExclLoss} it is clear that this sensitivity to the value of $c$ impacts the performance of the model selection procedures, and as a result 10-fold $CV$ and $AIC_{c_\lambda}$ outperform the other procedures.  10-fold $CV$ outperforms $AIC_{c_\lambda}$ in some scenarios, but, in general, the performance of the two methods appears to be comparable.

\begin{figure}[H]
\centering
\caption{Comparison of model selection procedures based on L2 Loss on new design points over 1000 simulations for the model with an omitted predictor with $n=200$ and $\rho=0.5$. In order to make it easier to compare the procedures, the limits of the vertical axis are specified so that all the boxes and whiskers appear but some of the outliers are not shown.}
\subfigure[c=.5]{
\resizebox{16cm}{!}{\includegraphics{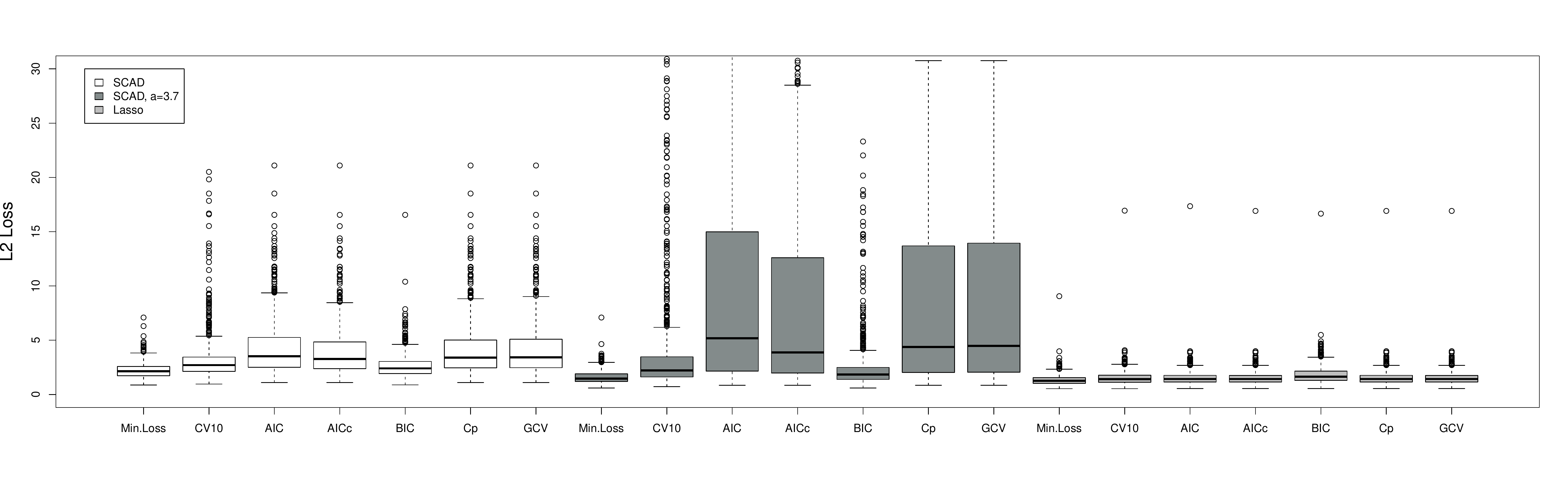}}}
\subfigure[c=.8]{
\resizebox{16cm}{!}{\includegraphics{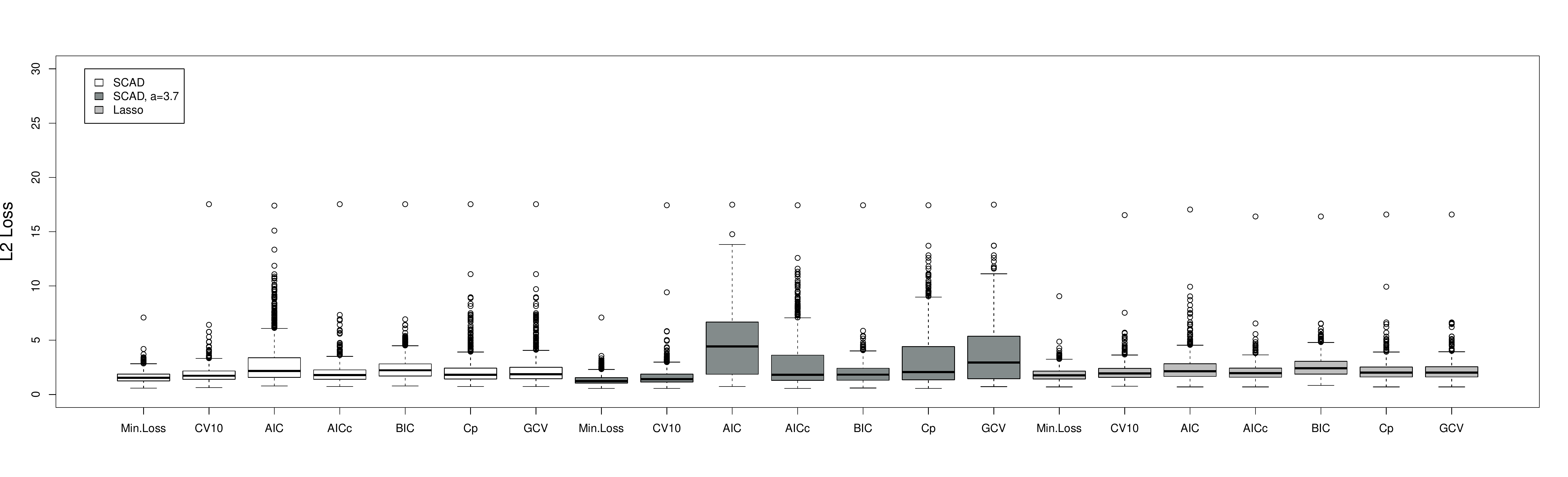}}}
\subfigure[c=.98]{
\resizebox{16cm}{!}{\includegraphics{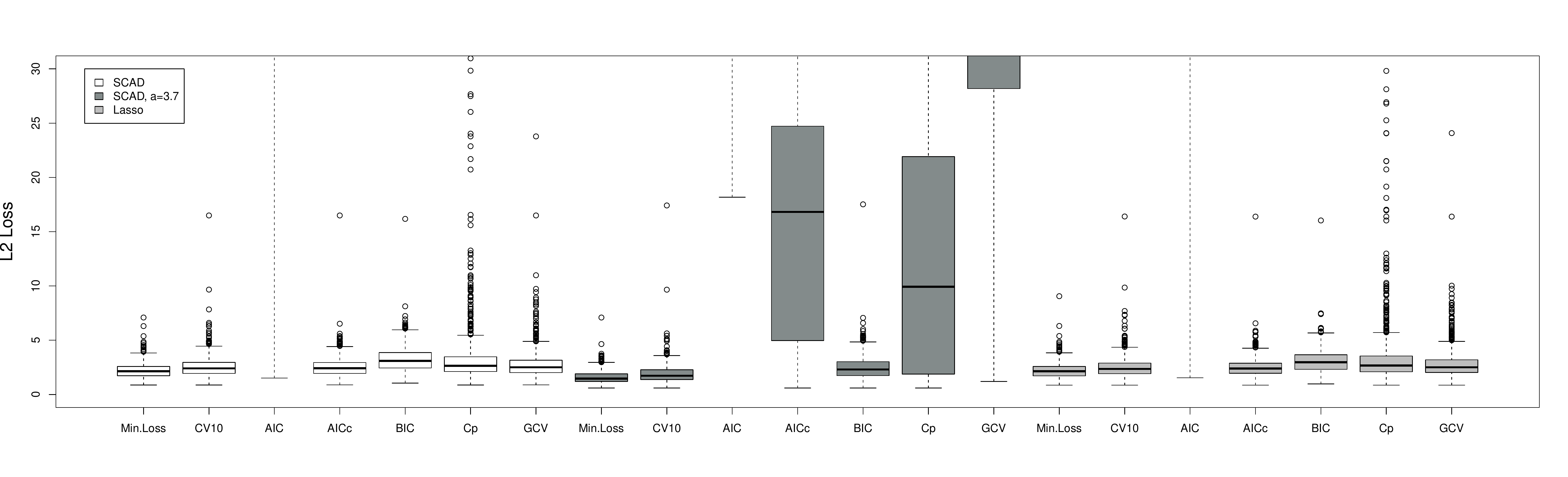}}}
\label{fig:ExclLoss}
\end{figure}

In order to study the asymptotic behavior of the selection procedures, Table 2 presents the median loss efficiencies.  With the exception of SCAD with $c=0.5$, the loss efficiencies of $AIC_{c_\lambda}$, $C_{p_\lambda}$, and $GCV_\lambda$ tend to one, while the loss efficiency of $BIC_\lambda$ does not show signs of convergence.  Also, the results again show that $AIC_\lambda$ performs poorly when the number of predictor variables is large relative to the sample size.  For SCAD with $c=0.5$, the loss efficiency of the efficient methods do not show signs of converging to one, which further suggests that the second tuning parameter may not be optimally selected.  Overall, the results corroborate the theoretical findings, but reinforce that the finite sample performance of asymptotically equivalent methods may vary greatly.

\begin{table}[H]
\begin{center}
\caption{Median L2 Loss Efficiency on new design points over 1000 simulations for the model with an omitted predictor with $\rho=0.5$.}
\vskip5pt
\scalebox{0.7}{%
\begin{tabular}{|l|l|ccc|ccc|ccc|}
\hline
	&		&	\multicolumn{9}{c|}{Median Loss Efficiency}					\\
\hline
&	&	\multicolumn{3}{c|}{SCAD}&	\multicolumn{3}{c|}{SCAD, a=3.7} &	 \multicolumn{3}{c|}{Lasso}					\\
\hline
Info. Crit.	&	n	&	c=.5	&	c=.8	&	c=.98	&	c=.5	&	c=.8	&	c=.98	&	c=.5	&	c=.8	&	c=.98\\
\hline
10-fold CV	&	100	&	1.32	&	1.10	&	1.09	&	1.72	&	1.23	&	1.20	&	1.08	&	1.09	&	1.08	 \\
	&	200	&	1.19	&	1.07	&	1.07	&	1.35	&	1.08	&	1.10	&	1.05	&	1.06	&	1.05	\\
	&	400	&	1.14	&	1.05	&	1.05	&	1.26	&	1.02	&	1.04	&	1.04	&	1.04	&	1.04	\\
\hline																					
$AIC_\lambda$	&	100	&	1.49	&	1.44	&	41.44	&	2.78	&	2.57	&	37.24	&	1.08	&	1.19	&	 37.64	\\
	&	200	&	1.57	&	1.24	&	51.80	&	3.03	&	3.30	&	59.94	&	1.06	&	1.12	&	49.77	\\
	&	400	&	1.84	&	1.11	&	67.73	&	4.13	&	3.16	&	76.07	&	1.04	&	1.07	&	64.94	\\
\hline																					
$AIC_{c_\lambda}$	&	100	&	1.36	&	1.13	&	1.10	&	2.19	&	1.45	&	4.27	&	1.07	&	1.09	&	 1.08	\\
	&	200	&	1.41	&	1.08	&	1.07	&	2.45	&	1.27	&	10.10	&	1.06	&	1.07	&	1.06	\\
	&	400	&	1.68	&	1.06	&	1.05	&	3.31	&	1.10	&	17.28	&	1.04	&	1.04	&	1.05	\\
\hline																					
$BIC_\lambda$	&	100	&	1.12	&	1.26	&	1.40	&	1.26	&	1.41	&	1.62	&	1.11	&	1.24	&	1.31	 \\
	&	200	&	1.07	&	1.39	&	1.40	&	1.13	&	1.31	&	1.38	&	1.21	&	1.34	&	1.33	\\
	&	400	&	1.05	&	1.31	&	1.32	&	1.07	&	1.16	&	1.25	&	1.21	&	1.28	&	1.27	\\
\hline																					
$C_{p_\lambda}$	&	100	&	1.42	&	1.17	&	1.22	&	2.40	&	1.65	&	3.04	&	1.08	&	1.12	&	1.24	 \\
	&	200	&	1.46	&	1.10	&	1.16	&	2.69	&	1.42	&	5.27	&	1.06	&	1.08	&	1.14	\\
	&	400	&	1.75	&	1.07	&	1.08	&	3.75	&	1.14	&	10.93	&	1.04	&	1.05	&	1.08	\\
\hline																					
$GCV_\lambda$	&	100	&	1.43	&	1.20	&	1.13	&	2.49	&	2.01	&	14.24	&	1.07	&	1.11	&	1.12	 \\
	&	200	&	1.48	&	1.11	&	1.10	&	2.75	&	2.10	&	25.71	&	1.06	&	1.08	&	1.09	\\
	&	400	&	1.78	&	1.07	&	1.06	&	3.86	&	1.26	&	35.72	&	1.04	&	1.05	&	1.06	\\
\hline																					
\end{tabular}}
\end{center}
\label{tab:ExclEff}
\end{table}

\subsection{Poisson Regression}
In this section we present simulation results for GLMs with no dispersion parameter.  For GLMs, it is less clear how to handle the second tuning parameter for SCAD. \citet{Breheny11} recommended using an adaptive rescaling technique, but it is unclear how such a procedure will impact the performance of the model selection procedures and initial simulations for Bernoulli data resulted in convergence issues.  As a result we only study the Lasso in this section.  The {\tt lars} package is only designed for linear regression, so we instead work with the R {\tt glmpath} package \citep{park11}, which fits the entire regularization path for the Lasso for GLMs.

We consider a trigonometric example based on an example studied in \citet{hurvich91}.  We take $\theta_t = e^{-5i/n}$ for $t=0,\ldots,n-1$ and simulate $y_t$ from a Poisson distribution with $\mu_t = \exp(\theta_t)$.  The estimated models are Lasso penalized Poisson regressions where the matrix of predictors, $\mathbf{X}=(\mathbf{x}^1,\mathbf{x}^2)$, is a $n \times d_n$ matrix with components defined as in the exponential
model.  Similar to before, we vary the maximum number of predictors by letting the dimension $d_n = 2\lfloor n^c/2 \rfloor$
and we compare $c=.3$, $c=.5$ and $c=.8$.  The case with $c=.98$ is omitted due to convergence problems with the package.

Although $AIC_{c}$ was originally derived for linear regression, its use is commonly recommended in a more general setting when the number of predictor variables is large relative to the sample size (\citet{burnham02}, p. 66).  We therefore compare the performance of $AIC_\lambda$ to 10-fold $CV$, $AIC_{c_\lambda}$ and $BIC_\lambda$ where
\[
AIC_{c_\lambda} = -\frac{2}{n}l(\hat{\boldsymbol\beta_\lambda}) + 2 \frac{df_\lambda + 1}{n-df_\lambda-2}
\]
and
\[
BIC_{\lambda} = -\frac{2}{n}l(\hat{\boldsymbol\beta_\lambda}) + \log(n) \frac{df_\lambda}{n}.
\]
Table \ref{tab:GLMEff} reports the median KL loss efficiencies over the 1000 simulations.  In all three cases, $AIC_\lambda$, $AIC_{c_\lambda}$, and 10-fold $CV$ show signs of converging to one and have comparable performance, whereas $BIC_\lambda$ performs noticeably worse and does not show signs of convergence.  Figure \ref{fig:GLMdf} presents boxplots of the selected number of non-zero coefficients.  This figure suggests that the poor performance of $BIC_\lambda$ is due to its tendency to select models that are too sparse.  In comparison, the other procedures select models with dimension closer to the optimal dimension.  Overall, these results are consistent with the theoretical findings.

\begin{table}[H]
\begin{center}
\caption{Median KL Loss Efficiency over 1000 simulations for the poisson model.}
\vskip5pt
\scalebox{0.7}{%
\begin{tabular}{|l|l|ccc|}
\hline
	&		&	\multicolumn{3}{c|}{Median Loss Efficiency}																	\\
\hline
	&		&	\multicolumn{3}{c|}{Lasso} \\
\hline																								
Info. Crit.	&	n	&	c=.3	&	c=.5	&	c=.8	\\
\hline
10-fold CV	&	100	&	1.08	&	1.30	&	1.17	\\
	&	200	&	1.00	&	1.19	&	1.17	\\
	&	400	&	1.00	&	1.08	&	1.09	\\
\hline									
$AIC_\lambda$	&	100	&	1.01	&	1.19	&	1.15	\\
	&	200	&	1.01	&	1.13	&	1.10	\\
	&	400	&	1.01	&	1.08	&	1.08	\\
\hline									
$AIC_{c_\lambda}$	&	100	&	1.02	&	1.18	&	1.10	\\
	&	200	&	1.01	&	1.13	&	1.07	\\
	&	400	&	1.01	&	1.08	&	1.06	\\
\hline									
$BIC_\lambda$	&	100	&	1.38	&	1.38	&	1.14	\\
	&	200	&	1.48	&	1.63	&	1.27	\\
	&	400	&	1.30	&	1.85	&	1.45	\\
\hline
\end{tabular}}
\end{center}
\label{tab:GLMEff}
\end{table}

\begin{figure}[H]
\centering
\caption{Comparison of model selection procedures based on the number of non-zero coefficients (includes intercept) in the selected model over 1000 simulations for the poisson model with $n=200$}
\subfigure[c=.3]{
\resizebox{6cm}{!}{\includegraphics{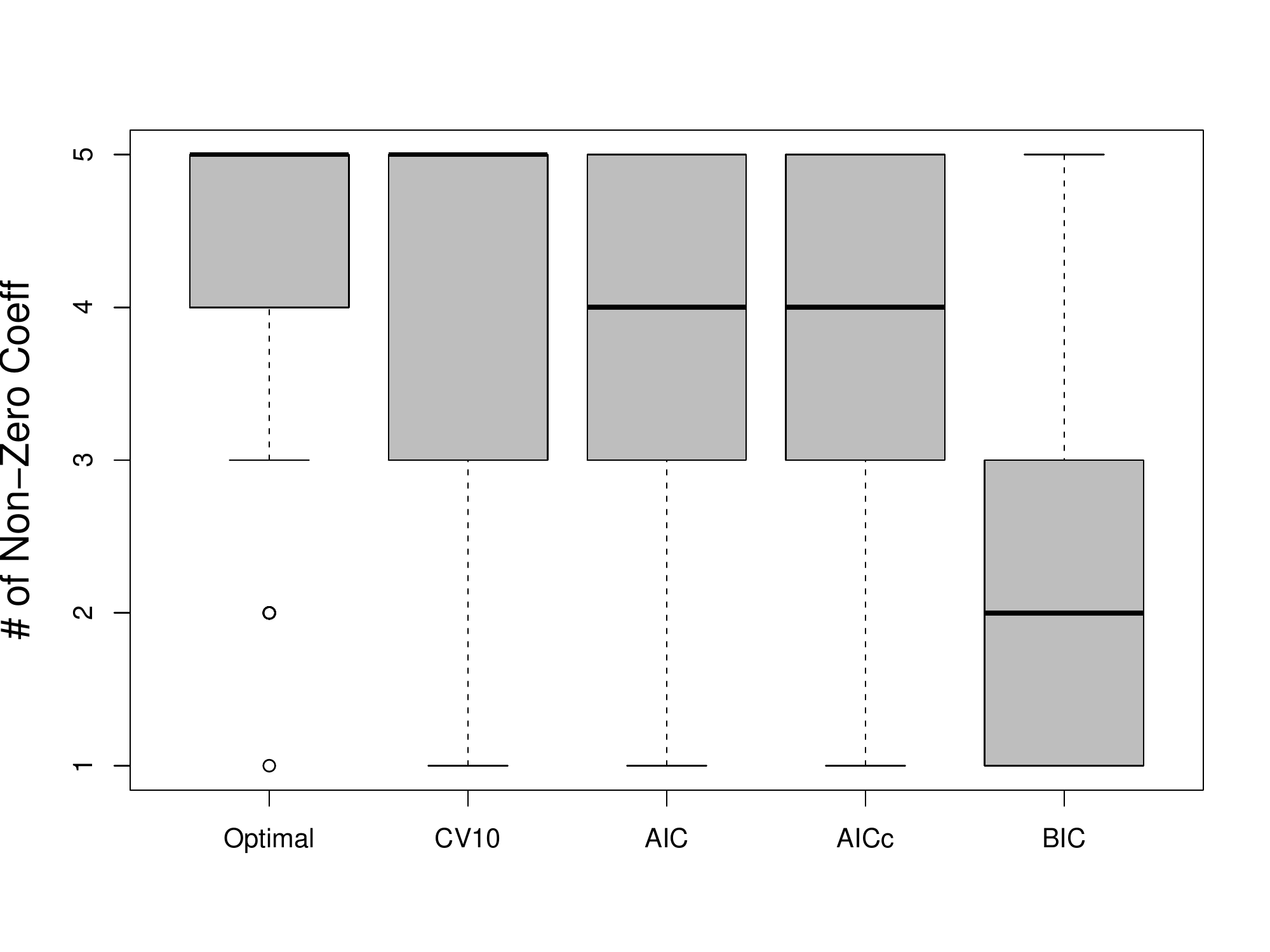}}}
\subfigure[c=.5]{
\resizebox{6cm}{!}{\includegraphics{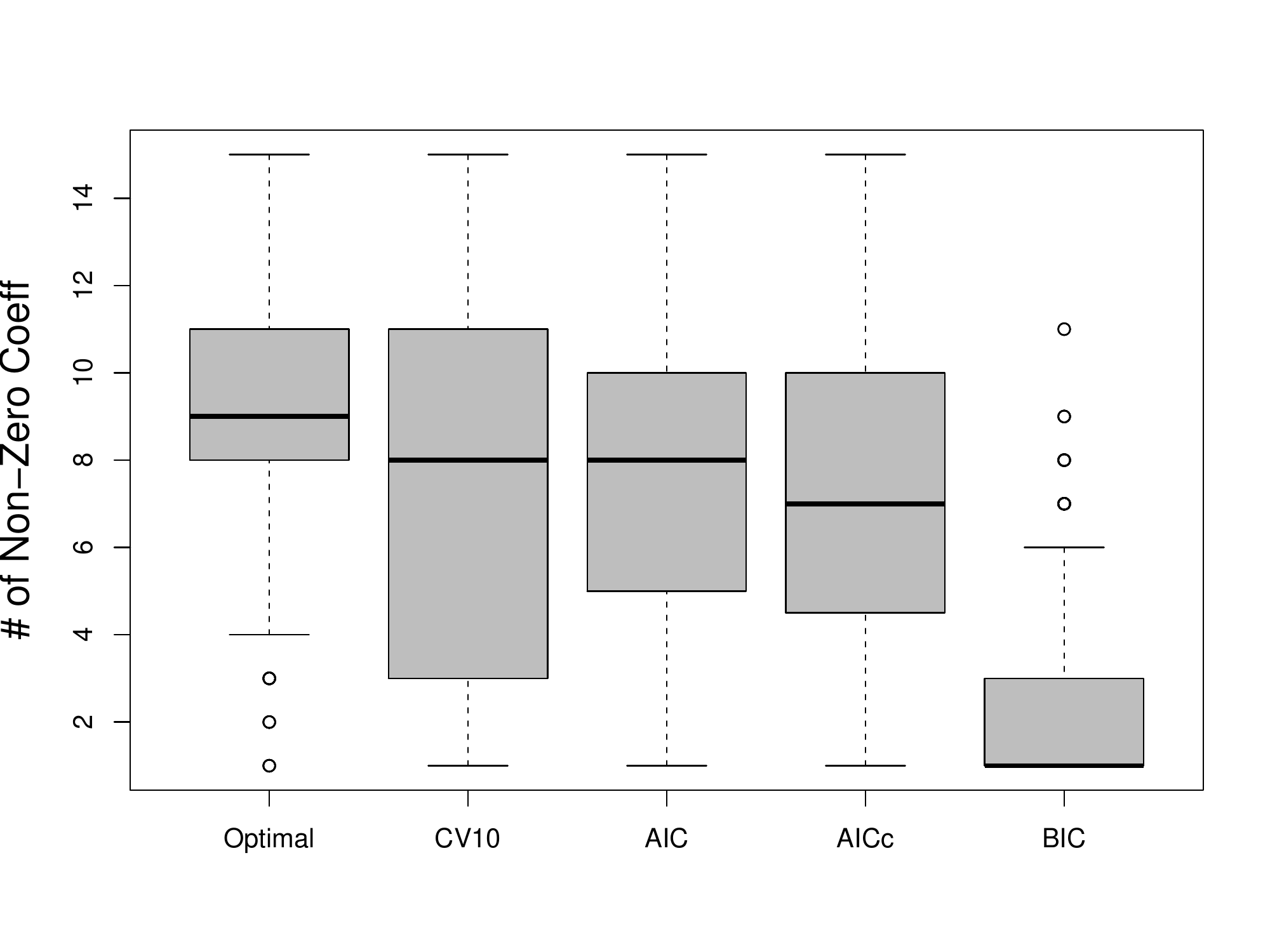}}}
\subfigure[c=.8]{
\resizebox{6cm}{!}{\includegraphics{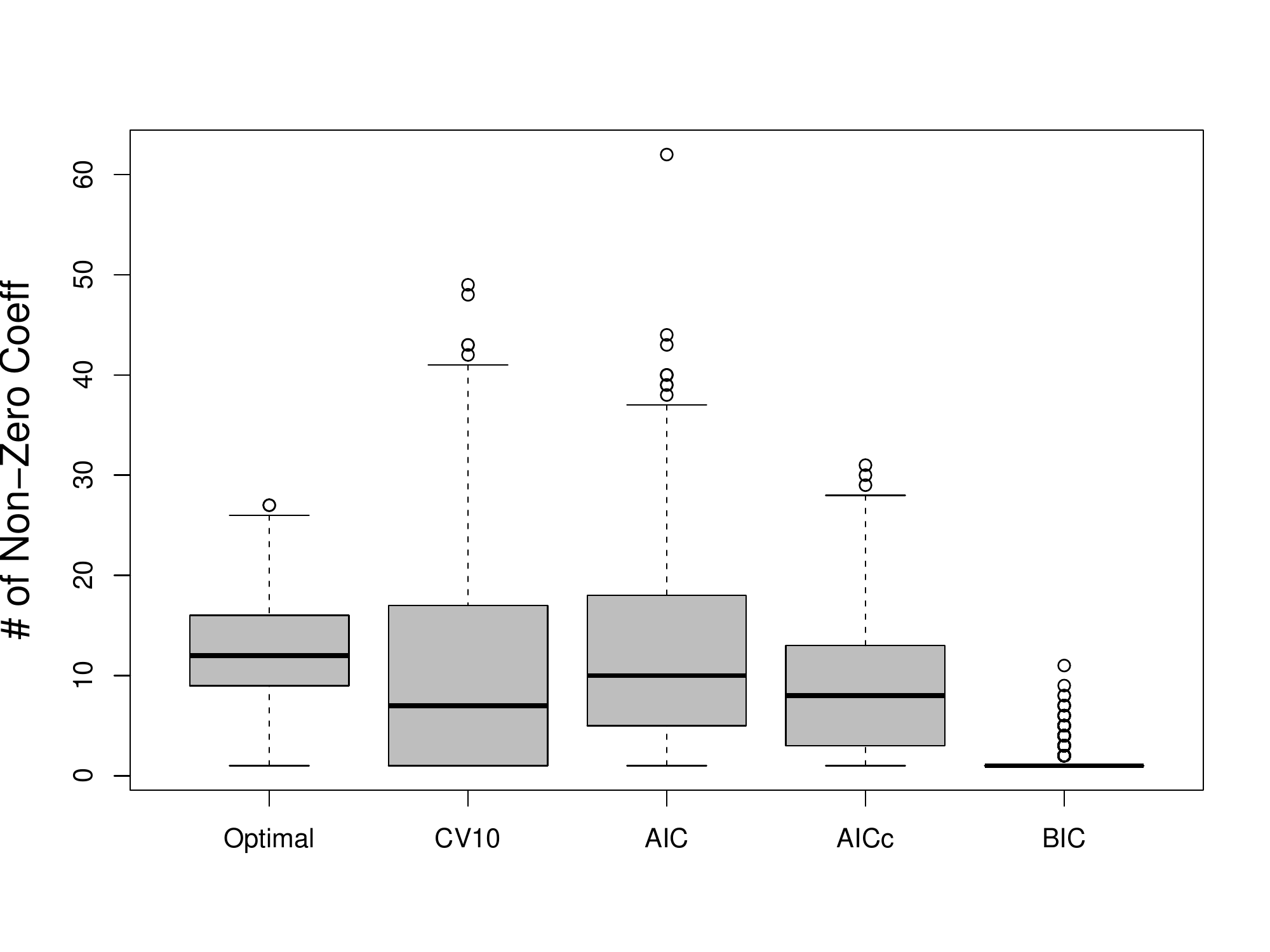}}}
\label{fig:GLMdf}
\end{figure}

\section{Analysis of a Real Data Set}

We now consider the transaction data set from \citet{sela12} in order to compare the candidate models chosen by the regularization parameter selectors when applied to a real world data set.  The data contains transactions for third-party sellers on Amazon
Web Services and the goal is to predict the prices at which software titles are sold
based on the characteristics of the competing sellers. The target variable is the price premium that a seller can command (the difference between the price at which the
good is sold and the average price of all of the competing goods in
the marketplace).  There are 24 potential predictors which include the seller's reputation
(the total number of comments and the number of positive and
negative comments received from buyers over different time periods),
the length of time that the seller has been in the marketplace, the
number of competitors, the quality of competing goods in the
marketplace, the average reputation of the competitors, and the
average prices of the competing goods.  The data set contains 100 observations.

Table \ref{dataAnalysis} reports the results for the information criteria as well as 10-fold $CV$ based on two different runs (and hence two different random divisions of the data), which are referred to as 10-fold $CV$ (1) and 10-fold $CV$ (2).  Only six predictor variables were ever selected so the remaining variables are omitted from the table.  It is clear that the variables selected are heavily reliant on the selection procedure and the penalty function chosen.  In particular, there is a noticeable difference between the variables selected by $AIC_\lambda$ and $AIC_{c_\lambda}$, and in all three cases $BIC_\lambda$ selected a model with no predictors, suggesting that it may be selecting an underfitted model.  If we approach this problem from a predictive point of view, we know that there is little advantage to using SCAD over the Lasso, but that the choice of the second tuning parameter can greatly impact the performance of the former.  Therefore, we recommend focusing on the Lasso.  From the simulations we know that 10-fold $CV$ maintains good performance in a variety of settings.  However, it is 10 times more expensive to implement than using an information criterion, the asymptotic properties of 10-fold $CV$ are not fully understood in this context, and the randomness involved in the procedure makes it difficult for data analysts to reproduce results.  In the case of the Lasso, this last point is reinforced by the change in the selected variables between the two runs of 10-fold $CV$, as in the first run four nonzero coefficients were estimated, while in the second run none were.  We recommend proceeding using $AIC_{c_\lambda}$ as the selector of the tuning parameter for the Lasso as an alternative that avoids these issues.

\begin{table}[H]\label{dataAnalysis}
\begin{center}
\caption{Selected variables for transaction data.}
\vskip5pt
\scalebox{0.5}{%
\begin{tabular}{|l|cccccc|}
\hline
    & Ave. Comp. &  Ave. Comp. & Ave. Comp. &  Seller &  Negative & Negative \\
    &  Price &  Condition &  Rating &  Condition &  Comments &  Comments \\
\multicolumn{1}{|c|}{Selector}  &  &  &  &  & (30 days) & (Lifetime) \\
\hline
\multicolumn{7}{|c|}{SCAD} \\
\hline
10-fold CV (1)  &   X &  &  &  &  &  \\
10-fold CV (2)  &  X  &  &  &  &  &  \\
$AIC_\lambda$  &   X & X & X & X & X &  \\
$AIC_{c_\lambda}$  &   X &  &  &  &  &  \\
$BIC_\lambda$  &    &  &  &  &  &  \\
$C_{p_\lambda}$  &   X &  &  &  &  &  \\
$GCV_\lambda$  &   X & X & X & X & X &  \\
\hline
\multicolumn{7}{c}{} \\
\hline
\multicolumn{7}{|c|}{SCAD ($a=3.7$)} \\
\hline
10-fold CV (1)  &   X &  &  &  &  &  \\
10-fold CV (2)  &  X  &  &  &  &  &  \\
$AIC_\lambda$  &   X & X & X & X & X & X  \\
$AIC_{c_\lambda}$  &  X & X & X & X & X & X  \\
$BIC_\lambda$  &    &  &  &  &  &  \\
$C_{p_\lambda}$  &   X & X & X & X & X & X \\
$GCV_\lambda$  &   X & X & X & X & X & X  \\
\hline
\multicolumn{7}{c}{} \\
\hline
\multicolumn{7}{|c|}{LASSO} \\
\hline
10-fold CV (1)  &   X & X &  & X & X &  \\
10-fold CV (2)  &    &  &  &  &  &  \\
$AIC_\lambda$  &   X & X & X & X & X &  \\
$AIC_{c_\lambda}$  &   X &  &  &  &  &  \\
$BIC_\lambda$  &    &  &  &  &  &   \\
$C_{p_\lambda}$  &  X  &  &  &  &  &  \\
$GCV_\lambda$  &  X  &  &  &  &  &  \\
\hline
\end{tabular}}
\end{center}
\end{table}

\section{Concluding Remarks}

This paper studied the asymptotic and finite sample performance of classical model selection procedures in the context of penalized likelihood estimators without the assumption that the true model is included amongst the candidate models.  We proved that $AIC_\lambda$, $AIC_{c_\lambda}$, $C_{p_\lambda}$, and $GCV_\lambda$ are efficient selectors of the regularization parameter for regularized regression, and the numerical studies for regularized regression yielded several interesting observations.  As anticipated, we found that $BIC_\lambda$ is outperformed by the efficient model selection procedures and demonstrated that $AIC_{\lambda}$, $BIC_\lambda$, $C_{p_\lambda}$, and $GCV_\lambda$ are all sensitive to the number of predictor variables that are included in the full model and that their performance can suffer as a result.  In light of this issue we recommend that researchers use a method that is insensitive to the number of variables included in the model.  From the simulations, 10-fold $CV$ has the best overall performance.  However, the discussion in Section 5 noted some of the disadvantages of this method including computational cost and variable results due to the inherent randomness of the procedure.  As an alternative, data analysts can consider using $AIC_{c_\lambda}$, which was shown here to be an efficient selection procedure for the tuning parameter, and which the simulations suggest has comparable performance to that of 10-fold $CV$.  Lastly, the simulations suggest that there is no clear advantage to using SCAD in a world where the ``oracle property'' does not apply.  Combining this with the facts that the Lasso can be fitted using the efficient `Lars' algorithm and does not involve a second tuning parameter that can greatly impact results, researchers may prefer to use the Lasso if they feel that they are in the non-true model world.

 To further generalize our results, we also proved that $AIC_\lambda$ is an efficient selector of the regularization parameter for regularized GLMs with no dispersion parameter and used numerical studies to compare its performance to that of $AIC_{c_\lambda}$, $BIC_\lambda$ and 10-fold $CV$.  Again, the performance of $BIC_\lambda$ was noticeably worse than the other procedures, and the performances of $AIC_\lambda$, $AIC_{c_\lambda}$ and 10-fold $CV$ were comparable to each other, supporting our recommendation for the use of $AIC_{c_\lambda}$.  Extending these results to GLMs with an unknown dispersion parameter is an
 interesting open problem.  In this setting it is necessary to work with extended quasi-likelihood methods.  Although model selection criteria such as $AIC_c$ have been proposed in such settings as \citep{hurvich95}, the extended quasi-likelihood is not a true likelihood so the results of \citet{white82} and \citet{nishii88} do not apply.  Investigations into the properties of model selection procedures in this setting is an area for future research.

As a final remark, this paper dealt with the case when $d_n/n \rightarrow 0$, and the theoretical results cannot be directly extended to the case when $d_n/n$ converges to something other than zero.  The latter setting has received a great deal of attention in recent literature (in particular $d_n \gg n$) and is an area for future investigation.

\appendix

\newcommand{\appsection}[1]{\let\oldthesection\thesection
  \renewcommand{\thesection}{Appendix \oldthesection}
  \section{#1}\let\thesection\oldthesection}

\appsection{}

\begin{proof}[Proof of Lemma \ref{A3_lemma}]
By definition
\[
n R(\hat\vbeta_{\alpha}) \geq || \vmu - \mH_{\alpha} \vmu||^2\geq || \vmu - \mH_{\bar\alpha} \vmu||^2.
\]
Then by (\ref{bias_bound}) and (\ref{A3_cond}),
\[
\sum_{\alpha=1}^{2^{d_n}}
    (n R(\hat\beta_\alpha)) ^{-q}
        \leq
            2^{d_n} k_1^{-q}n^{-q}d_n^{-qk_2}
            = 2^{ log_2(n) \left( \frac{d_n}{\log_2(n)} - \frac{qlog_2(k_1)}{log_2(n)} - \frac{qk_2\log_2(d_n)}{log_2(n)} \right)}
            \to 0.
\]
Next by (\ref{bias_bound}) and (\ref{A3'_cond}),
\[
\sum_{\alpha=1}^{2^{d_n}}
\delta^{n R(\hat\beta_\alpha)}
    \leq
        2^{d_n}
        \delta^{k_1 n^{k_2}}
    = 2^{ d_n \left( 1 + \frac{k_1 n d_n^{k_2}log_2(\delta)}{d_n} \right) }
    \to 0.
\]
\end{proof}

Before proving Theorems 1, 2, and 3, we establish the following two lemmas.
\begin{lemma}
Assume that (A1)-(A4) hold and that $d_n/n \rightarrow 0$ as $n \rightarrow \infty$. Then
\[ \sup_{\lambda \in [0,\lambda_{max}]} \frac{ d_{\alpha_\lambda}|\hat{\sigma}^2_\lambda - \sigma^2| }{nL(\hat{\beta}_\lambda)}
    \rightarrow_p 0.\]
\end{lemma}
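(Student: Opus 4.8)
The plan is to expand $\hat{\sigma}^2_\lambda$ around the least-squares fit on the selected model $\alpha_\lambda$ and then bound the resulting pieces against $nL(\hat{\beta}_\lambda)$ and $d_{\alpha_\lambda}/n$. Using the characterization of the penalized fit (valid for the non-concave penalties of \citet{zhang10} and, by Lemma 2 of \citet{zou07}, for the Lasso), write $\hat{\boldsymbol\mu}_\lambda = \mathbf{H}_{\alpha_\lambda}\mathbf{y} + \mathbf{v}_\lambda$, where $\mathbf{v}_\lambda$ lies in the column space of $\mathbf{X}_{\alpha_\lambda}$ and, by (A1), $\|\mathbf{v}_\lambda\|^2 \le nC\|\mathbf{b}\|^2$. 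Since $\mathbf{v}_\lambda$ is orthogonal to the range of $\mathbf{I}-\mathbf{H}_{\alpha_\lambda}$, a direct computation gives $n\hat{\sigma}^2_\lambda = \|(\mathbf{I}-\mathbf{H}_{\alpha_\lambda})\mathbf{y}\|^2 + \|\mathbf{v}_\lambda\|^2 = n\Delta_{\alpha_\lambda} + 2\boldsymbol\mu^T(\mathbf{I}-\mathbf{H}_{\alpha_\lambda})\boldsymbol\varepsilon + \|\boldsymbol\varepsilon\|^2 - \|\mathbf{H}_{\alpha_\lambda}\boldsymbol\varepsilon\|^2 + \|\mathbf{v}_\lambda\|^2$ and $nL(\hat{\beta}_\lambda) = n\Delta_{\alpha_\lambda} + \|\mathbf{H}_{\alpha_\lambda}\boldsymbol\varepsilon + \mathbf{v}_\lambda\|^2 \ge n\Delta_{\alpha_\lambda}$. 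Subtracting $n\sigma^2$ and using the triangle inequality, $n|\hat{\sigma}^2_\lambda - \sigma^2|$ is at most $n\Delta_{\alpha_\lambda} + 2|\boldsymbol\mu^T(\mathbf{I}-\mathbf{H}_{\alpha_\lambda})\boldsymbol\varepsilon| + \|\mathbf{H}_{\alpha_\lambda}\boldsymbol\varepsilon\|^2 + |\,\|\boldsymbol\varepsilon\|^2 - n\sigma^2\,| + \|\mathbf{v}_\lambda\|^2$; the case $d_{\alpha_\lambda}=0$ is trivial, so I assume $d_{\alpha_\lambda}\ge 1$ below.

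The crux is a uniform-in-$\lambda$ lower bound $L(\hat{\beta}_\lambda) \ge c_n\,\tilde{R}(\hat{\beta}_{\alpha_\lambda})$ with $c_n$ bounded away from $0$ in probability. I would derive this from the \citet{li87}-type concentration inequality $\sup_{\alpha\in\mathcal{A}_n} |\,\|\mathbf{H}_\alpha\boldsymbol\varepsilon\|^2 - \sigma^2 d_\alpha\,| / \big(n R(\hat{\beta}_\alpha)\big) \to_p 0$, which follows by applying Markov's inequality to the $2q$-th moment of the quadratic form $\boldsymbol\varepsilon^T\mathbf{H}_\alpha\boldsymbol\varepsilon$ (finite under the $4q$-th moment condition (A2), and bounded by a constant times $d_\alpha^q \le \big(nR(\hat{\beta}_\alpha)\big)^q\sigma^{-2q}$ because $nR(\hat{\beta}_\alpha)\ge\sigma^2 d_\alpha$) and then a union bound over all $2^{d_n}$ candidate models, which is summable precisely by (A3); this is where taking $\mathcal{A}_n$ to be the full subset lattice is essential. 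Combining this with (A4) --- which makes $\|\mathbf{v}_\lambda\|^2 \le nC\|\mathbf{b}\|^2 = o_p\big(n\tilde{R}(\hat{\beta}_{\alpha_\lambda})\big)$ uniformly --- and with $nL(\hat{\beta}_\lambda) = n\Delta_{\alpha_\lambda} + \|\mathbf{H}_{\alpha_\lambda}\boldsymbol\varepsilon + \mathbf{v}_\lambda\|^2$, I split into two cases: if $\|\mathbf{H}_{\alpha_\lambda}\boldsymbol\varepsilon\| > 2\|\mathbf{v}_\lambda\|$ then $nL(\hat{\beta}_\lambda) \ge n\Delta_{\alpha_\lambda} + \tfrac14\|\mathbf{H}_{\alpha_\lambda}\boldsymbol\varepsilon\|^2 \ge \tfrac{1-o_p(1)}{4}\,n\tilde{R}(\hat{\beta}_{\alpha_\lambda})$, and otherwise the concentration bound forces $\sigma^2 d_{\alpha_\lambda} = o_p(n\Delta_{\alpha_\lambda})$, so $nL(\hat{\beta}_\lambda) \ge n\Delta_{\alpha_\lambda} = (1-o_p(1))\,n\tilde{R}(\hat{\beta}_{\alpha_\lambda})$. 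Since $n\tilde{R}(\hat{\beta}_{\alpha_\lambda}) \ge \sigma^2 d_{\alpha_\lambda}$, this in particular gives $\sup_\lambda d_{\alpha_\lambda}/\big(nL(\hat{\beta}_\lambda)\big) = O_p(1)$.

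With these two facts the five terms are routine. The $n\Delta_{\alpha_\lambda}$ term contributes at most $(d_{\alpha_\lambda}/n)\,n\Delta_{\alpha_\lambda}/\big(nL(\hat{\beta}_\lambda)\big) \le d_n/n \to 0$. For the cross term, Cauchy--Schwarz gives $|\boldsymbol\mu^T(\mathbf{I}-\mathbf{H}_{\alpha_\lambda})\boldsymbol\varepsilon| \le \sqrt{n\Delta_{\alpha_\lambda}}\,\|\boldsymbol\varepsilon\| \le \sqrt{nL(\hat{\beta}_\lambda)}\,\|\boldsymbol\varepsilon\|$, so its contribution is bounded by $2(d_{\alpha_\lambda}/n)\,\|\boldsymbol\varepsilon\|/\sqrt{nL(\hat{\beta}_\lambda)} = O_p\big(\sqrt{d_n/n}\big) \to_p 0$, using $\|\boldsymbol\varepsilon\|^2 = O_p(n)$ and $nL(\hat{\beta}_\lambda) \gtrsim \sigma^2 d_{\alpha_\lambda}$. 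The $\|\mathbf{H}_{\alpha_\lambda}\boldsymbol\varepsilon\|^2$ term is $\le (1+o_p(1))\,n\tilde{R}(\hat{\beta}_{\alpha_\lambda}) = O_p(1)\,nL(\hat{\beta}_\lambda)$ by the concentration inequality and the lower bound, hence contributes $O_p(d_n/n)\to_p 0$. The term $|\,\|\boldsymbol\varepsilon\|^2-n\sigma^2\,|$ contributes $\big(d_{\alpha_\lambda}/(nL(\hat{\beta}_\lambda))\big)\cdot n^{-1}|\,\|\boldsymbol\varepsilon\|^2-n\sigma^2\,| = O_p(1)\cdot o_p(1)$ by the weak law of large numbers. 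Finally $\|\mathbf{v}_\lambda\|^2$ contributes at most $(d_n/n)(C/c_n)\,\|\mathbf{b}\|^2/\tilde{R}(\hat{\beta}_{\alpha_\lambda}) \to_p 0$ by (A4). Each of these bounds is uniform in $\lambda$: the stochastic factors enter only through suprema over $\lambda$ or $\alpha$ already controlled by (A4) and by the concentration inequality, or through $\|\boldsymbol\varepsilon\|^2$, which does not depend on $\lambda$. Summing the five contributions gives the claim.

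I expect the main obstacle to be the uniform lower bound $L(\hat{\beta}_\lambda) \gtrsim \tilde{R}(\hat{\beta}_{\alpha_\lambda})$: one must rule out that a data-dependent selected model makes the realized loss much smaller than its nominal risk, and this is exactly what forces both the union bound over all subsets behind the \citet{li87}-type inequality and the simultaneous control of the penalty correction $\mathbf{v}_\lambda$ through (A4). Once that bound and the concentration inequality are in place, the remaining estimates are elementary.
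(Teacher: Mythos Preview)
Your argument is correct and follows essentially the same decomposition as the paper, but you organize the ``risk comparison'' step differently. The paper invokes two external results as black boxes: Li's (1987) uniform equivalence $\sup_{\alpha}|L(\hat{\boldsymbol\beta}_\alpha)/R(\hat{\boldsymbol\beta}_\alpha)-1|\to_p 0$ and the ZLT results $\sup_\lambda|L(\hat{\boldsymbol\beta}_{\alpha_\lambda})/L(\hat{\boldsymbol\beta}_\lambda)-1|\to_p 0$ and $\sup_\lambda\|\hat{\boldsymbol\mu}_{\alpha_\lambda}-\hat{\boldsymbol\mu}_\lambda\|^2/(nL(\hat{\boldsymbol\beta}_\lambda))\to_p 0$; it then routes through $L(\hat{\boldsymbol\beta}_{\alpha_\lambda})$ rather than separating $n\Delta_{\alpha_\lambda}$ and $\|\mathbf{H}_{\alpha_\lambda}\boldsymbol\varepsilon\|^2$. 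You instead re-derive only the one-sided bound $nL(\hat{\boldsymbol\beta}_\lambda)\ge c_n\,n\tilde R(\hat{\boldsymbol\beta}_{\alpha_\lambda})$ that is actually needed, via the quadratic-form concentration $\sup_\alpha|\,\|\mathbf{H}_\alpha\boldsymbol\varepsilon\|^2-\sigma^2 d_\alpha|/(nR(\hat{\boldsymbol\beta}_\alpha))\to_p 0$ (one of Li's ingredients, obtainable from Whittle's moment inequality together with (A2) and the union bound licensed by (A3)) plus your case split on $\|\mathbf{H}_{\alpha_\lambda}\boldsymbol\varepsilon\|$ versus $\|\mathbf{v}_\lambda\|$. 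This makes your proof more self-contained at the cost of re-proving part of what Li and ZLT already give; the paper's version is shorter because it simply cites those ratios and converts freely among $L(\hat{\boldsymbol\beta}_\lambda)$, $L(\hat{\boldsymbol\beta}_{\alpha_\lambda})$, and $\tilde R(\hat{\boldsymbol\beta}_{\alpha_\lambda})$. The term-by-term bounds you give after the lower bound are the same as the paper's in substance.
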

\begin{proof}
The technique used to prove this result is similar to the proof of Theorem 2 in \citet{shibata81}.  First consider
{\small
\begin{align*}
 |\hat{\sigma}^2_\lambda - \sigma^2| & = \left | \frac{||\mathbf{y}-\hat{\boldsymbol\mu}_\lambda||^2}{n} - \sigma^2 \right | \\
& \leq \left | \frac{||\mathbf{y}-\hat{\boldsymbol\mu}_{\alpha_\lambda}||^2}{n} - \sigma^2 \right | + \frac{||\hat{\boldsymbol\mu}_{\alpha_\lambda}-\hat{\boldsymbol\mu}_{\lambda}||^2}{n} \\
& \leq L(\hat{\boldsymbol\beta}_{\alpha_\lambda}) + 2\left |\frac{ \boldsymbol{\varepsilon}^T (\boldsymbol\mu - \hat{\boldsymbol\mu}_{\alpha_\lambda})}{n}\right |+
\left | \frac{||\boldsymbol\varepsilon||^2}{n} - \sigma^2 \right | + \frac{||\hat{\boldsymbol\mu}_{\alpha_\lambda}-\hat{\boldsymbol\mu}_{\lambda}||^2}{n}.
\end{align*}}
Applying the Cauchy-Schwarz inequality, it follows that
{\small
\begin{align*}
 |\hat{\sigma}^2_\lambda - \sigma^2|  &  \leq L(\hat{\boldsymbol\beta}_{\alpha_\lambda}) + 2 ||\boldsymbol\varepsilon|| \frac{||\boldsymbol\mu - \hat{\boldsymbol\mu}_{\alpha_\lambda}||}{n}+
\left | \frac{||\boldsymbol\varepsilon||^2}{n} - \sigma^2 \right | + \frac{||\hat{\boldsymbol\mu}_{\alpha_\lambda}-\hat{\boldsymbol\mu}_\lambda||^2}{n}.
\end{align*}}
Then
{\small
\begin{align*}
\frac{|\hat{\sigma}^2_\lambda - \sigma^2|d_{\alpha_\lambda}}{nL(\hat{\boldsymbol\beta}_\lambda)}
& \leq
\frac{d_{\alpha_\lambda}}{n} \left [\frac{L(\hat{\boldsymbol\beta}_{\alpha_\lambda})}{L(\hat{\boldsymbol\beta}_\lambda)} \right] \\
& + \frac{2}{\sigma} \left [ \frac{d_{\alpha_\lambda}}{n} \frac{||\boldsymbol\varepsilon||^2}{n} \right ]^{1/2}
\left [ \frac{L(\hat{\boldsymbol\beta}_{\alpha_\lambda})}{L(\hat{\boldsymbol\beta}_\lambda)} \right ]^{1/2}
\left [ \frac{\sigma^2 d_{\alpha_\lambda}}{n\tilde{R}(\hat{\boldsymbol\beta}_{\alpha_\lambda})}
\frac{L(\hat{\boldsymbol\beta}_{\alpha_\lambda})}{L(\hat{\boldsymbol\beta}_\lambda)}
\frac{\tilde{R}(\hat{\boldsymbol\beta}_{\alpha_\lambda})}{L(\hat{\boldsymbol\beta}_{\alpha_\lambda})} \right ]^{1/2} \\
& + \left [ \frac{\sigma^2 d_{\alpha_\lambda}}{n\tilde{R}(\hat{\boldsymbol\beta}^*_{\alpha_\lambda})}
\frac{L(\hat{\boldsymbol\beta}_{\alpha_\lambda})}{L(\hat{\boldsymbol\beta}_\lambda))}
\frac{\tilde{R}(\hat{\boldsymbol\beta}_{\alpha_\lambda})}{L(\hat{\boldsymbol\beta}_{\alpha_\lambda})} \right ]
\left | \frac{||\boldsymbol\varepsilon||^2}{n\sigma^2} - 1 \right |
+\frac{d_{\alpha_\lambda}}{n}
\frac{||\hat{\boldsymbol\mu}_{\alpha_\lambda}-\hat{\boldsymbol\mu}_\lambda||^2}{nL(\hat{\boldsymbol\beta}_\lambda)}.
\end{align*}}
By definition,
{\small
\begin{align*}
\tilde{R}(\hat{\boldsymbol\beta}_{\alpha_\lambda})
 \geq \frac{\sigma^2 d_{\alpha_\lambda}}{n}.
\end{align*}}
Thus
{\small
\begin{align}\label{estVar-finalResult}
\frac{|\hat{\sigma}^2_\lambda - \sigma^2|d_{\alpha_\lambda}}{nL(\hat{\boldsymbol\beta}_\lambda)}
& \leq
\sup_{\lambda \in [0,\lambda_{max}]}  \frac{d_{\alpha_\lambda}}{n} \left [\frac{L(\hat{\boldsymbol\beta}_{\alpha_\lambda})}{L(\hat{\boldsymbol\beta}_\lambda)} \right]\\ \nonumber
& + \sup_{\lambda \in [0,\lambda_{max}]} \frac{2}{\sigma} \left [ \frac{d_{\alpha_\lambda}}{n} \frac{||\boldsymbol\varepsilon||^2}{n} \right ]^{1/2}
\left [ \frac{L(\hat{\boldsymbol\beta}_{\alpha_\lambda})}{L(\hat{\boldsymbol\beta}_\lambda)} \right ]^{1/2}
\left [ \frac{L(\hat{\boldsymbol\beta}_{\alpha_\lambda})}{L(\hat{\boldsymbol\beta}_\lambda)}
\frac{\tilde{R}(\hat{\boldsymbol\beta}_{\alpha_\lambda})}{L(\hat{\boldsymbol\beta}_{\alpha_\lambda})} \right ]^{1/2} \\ \nonumber
& +  \sup_{\lambda \in [0,\lambda_{max}]} \left [ \frac{L(\hat{\boldsymbol\beta}_{\alpha_\lambda})}{L(\hat{\boldsymbol\beta}_\lambda)}
\frac{\tilde{R}(\hat{\boldsymbol\beta}_{\alpha_\lambda})}{L(\hat{\boldsymbol\beta}_{\alpha_\lambda})} \right ]
\left | \frac{||\boldsymbol\varepsilon||^2}{n\sigma^2} - 1 \right |
+\sup_{\lambda \in [0,\lambda_{max}]}\frac{d_{\alpha_\lambda}}{n}
\frac{||\hat{\boldsymbol\mu}_{\alpha_\lambda}-\hat{\boldsymbol\mu}_\lambda||^2}{nL(\hat{\boldsymbol\beta}_\lambda)} . \end{align}}
\citet{li87} established that
{\small
\begin{align*}
\sup_{\alpha \in \mathcal{A}_n} \left | \frac{L(\hat{\boldsymbol\beta}_\alpha)}{R(\hat{\boldsymbol\beta}_\alpha)} -1 \right| \rightarrow_p 0
\end{align*}}
and it follows that
{\small
\begin{align}\label{liresult}
\sup_{\alpha \in \mathcal{A}_n} \left | \frac{L(\hat{\boldsymbol\beta}_{\alpha_\lambda})}{\tilde{R}(\hat{\boldsymbol\beta}_{\alpha_\lambda})} -1 \right| \rightarrow_p 0.
\end{align}}
In addition, from the proof of Theorem 2 in ZLT we have that
{\small
\begin{align}\label{ZLTresult1}
\sup_{\lambda \in [0,\lambda_{max}]} \left | \frac{L(\hat{\boldsymbol\beta}_{\alpha_\lambda})-L(\hat{\boldsymbol\beta}_\lambda)}{L(\hat{\boldsymbol\beta}_\lambda)} \right | \rightarrow_p 0,
\end{align}}
and
{\small
\begin{align}\label{ZLTresult2}
\sup_{\lambda \in [0,\lambda_{max}]} \frac{||\hat{\boldsymbol\mu}_{\alpha_\lambda}-\hat{\boldsymbol\mu}_\lambda||^2}{nL(\hat{\boldsymbol\beta}_\lambda)} \rightarrow_p 0.
\end{align}}
Combining these results with the Law of Large Numbers and the assumption that $d_n/n \rightarrow 0$ as $n \rightarrow \infty$ the four terms on the right-hand side of equation (\ref{estVar-finalResult}) converge to $0$ in probability.  Hence,
{\small
\begin{align*}
\sup_{\lambda \in [0,\lambda_{max}]}  \frac{2d_{\alpha_\lambda}|\hat{\sigma}^2_\lambda - \sigma^2|}{nL(\hat{\boldsymbol\beta}_\lambda)}  \rightarrow_p 0
\end{align*}
}
as desired.
\end{proof}

\begin{lemma}
Assume that (A1)-(A4) hold and that $d_n/n \rightarrow 0$ as $n \rightarrow \infty$. Then
\begin{align*}
\sup_{\lambda \in [0,\lambda_{max}]} \frac{ d_{\alpha_\lambda}|\tilde{\sigma}^2_n - \sigma^2| }{nL(\hat{\beta}_\lambda)}
    \rightarrow_p 0.
\end{align*}
\end{lemma}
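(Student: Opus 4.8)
The plan is to exploit that $\tilde\sigma^2_n$, unlike $\hat\sigma^2_\lambda$, does not depend on $\lambda$, so that
\[
\sup_{\lambda \in [0,\lambda_{max}]} \frac{d_{\alpha_\lambda}|\tilde\sigma^2_n - \sigma^2|}{nL(\hat{\boldsymbol\beta}_\lambda)}
= |\tilde\sigma^2_n - \sigma^2| \cdot \sup_{\lambda \in [0,\lambda_{max}]} \frac{d_{\alpha_\lambda}}{nL(\hat{\boldsymbol\beta}_\lambda)},
\]
and the work splits into (i) a sharp uniform lower bound on $L(\hat{\boldsymbol\beta}_\lambda)$ and (ii) a bound on $|\tilde\sigma^2_n-\sigma^2|$. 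For (i), combining (\ref{liresult}) with (\ref{ZLTresult1}) gives $\sup_\lambda |L(\hat{\boldsymbol\beta}_\lambda)/\tilde R(\hat{\boldsymbol\beta}_{\alpha_\lambda}) - 1|\to_p 0$, so for any fixed $\epsilon\in(0,1)$, with probability tending to one, $L(\hat{\boldsymbol\beta}_\lambda)\ge (1-\epsilon)\tilde R(\hat{\boldsymbol\beta}_{\alpha_\lambda}) = (1-\epsilon)(\Delta_{\alpha_\lambda}+\sigma^2 d_{\alpha_\lambda}/n)$ for all $\lambda$. Since $\alpha_\lambda$ is always a submodel of the full model $\bar\alpha$, the bias is monotone, $\Delta_{\alpha_\lambda}\ge\Delta_{\bar\alpha}$; as $t\mapsto t/(n\Delta_{\bar\alpha}+\sigma^2 t)$ is increasing and $d_{\alpha_\lambda}\le d_n$, this yields, with probability tending to one,
\[
\sup_{\lambda \in [0,\lambda_{max}]} \frac{d_{\alpha_\lambda}}{nL(\hat{\boldsymbol\beta}_\lambda)} \le \frac{1}{1-\epsilon}\cdot\frac{d_n}{n\Delta_{\bar\alpha}+\sigma^2 d_n}.
\]

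For (ii), writing $\mathbf{X}\hat{\boldsymbol\beta}_{\bar\alpha}=\mathbf{H}_{\bar\alpha}\mathbf{y}$ and $\mathbf{y}-\mathbf{H}_{\bar\alpha}\mathbf{y}=(\mathbf{I}-\mathbf{H}_{\bar\alpha})\boldsymbol\mu+(\mathbf{I}-\mathbf{H}_{\bar\alpha})\boldsymbol\varepsilon$, I would expand
\[
\tilde\sigma^2_n-\sigma^2
 = \frac{n\Delta_{\bar\alpha}}{n-d_n-1}
 + \frac{2\boldsymbol\mu^T(\mathbf{I}-\mathbf{H}_{\bar\alpha})\boldsymbol\varepsilon}{n-d_n-1}
 + \frac{||(\mathbf{I}-\mathbf{H}_{\bar\alpha})\boldsymbol\varepsilon||^2-(n-d_n)\sigma^2}{n-d_n-1}
 + \frac{\sigma^2}{n-d_n-1}.
\]
Using $||(\mathbf{I}-\mathbf{H}_{\bar\alpha})\boldsymbol\mu||^2=n\Delta_{\bar\alpha}$, $d_n/n\to0$, and (A2) for the fourth-moment bound on the quadratic form $\boldsymbol\varepsilon^T(\mathbf{I}-\mathbf{H}_{\bar\alpha})\boldsymbol\varepsilon$ (variance of order $n$), the last three terms are respectively $O_p(\sqrt{\Delta_{\bar\alpha}/n})$, $O_p(n^{-1/2})$, and $O(n^{-1})$, while the leading term equals $n\Delta_{\bar\alpha}/(n-d_n-1)$.

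It remains to check that each of these four terms, multiplied by $d_n/(n\Delta_{\bar\alpha}+\sigma^2 d_n)$, is $o_p(1)$; letting $\epsilon\downarrow0$ afterwards finishes the argument. Two elementary bounds on this factor suffice: $d_n/(n\Delta_{\bar\alpha}+\sigma^2 d_n)\le d_n/(n\Delta_{\bar\alpha})$ handles the leading term, since then $[n\Delta_{\bar\alpha}/(n-d_n-1)]\cdot d_n/(n\Delta_{\bar\alpha})=d_n/(n-d_n-1)\to0$, while $d_n/(n\Delta_{\bar\alpha}+\sigma^2 d_n)\le 1/\sigma^2$ disposes of the $O_p(n^{-1/2})$ and $O(n^{-1})$ terms. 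For the $O_p(\sqrt{\Delta_{\bar\alpha}/n})$ term one distinguishes $n\Delta_{\bar\alpha}\ge\sigma^2 d_n$ from $n\Delta_{\bar\alpha}<\sigma^2 d_n$, applying the first bound in the former and the second in the latter; in both cases the product is $O_p(\sqrt{d_n}/n)=o_p(1)$.

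The main obstacle — and the reason this is not immediate — is that $\tilde\sigma^2_n$ need \emph{not} be consistent for $\sigma^2$ when the full model is misspecified: its probability limit is $\sigma^2+\lim\Delta_{\bar\alpha}$, which exceeds $\sigma^2$ in, e.g., the omitted-predictor example of Section 4, so one cannot argue via $\tilde\sigma^2_n\to_p\sigma^2$. What rescues the lemma is that the same bias $\Delta_{\bar\alpha}$ that appears in $\tilde\sigma^2_n-\sigma^2$ also appears as a lower bound for $L(\hat{\boldsymbol\beta}_\lambda)$ (via $\Delta_{\alpha_\lambda}\ge\Delta_{\bar\alpha}$), so these two effects cancel upon forming the ratio, with the residual discrepancy absorbed by $d_n/n\to0$. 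Recognizing and exploiting this cancellation, rather than trying to show $|\tilde\sigma^2_n-\sigma^2|\to_p0$, is the crux.
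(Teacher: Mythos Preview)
Your proposal is correct and rests on the same key observation as the paper's proof: since $\Delta_{\alpha_\lambda}\ge\Delta_{\bar\alpha}$, the bias $\Delta_{\bar\alpha}$ that inflates $\tilde\sigma^2_n-\sigma^2$ also provides a uniform lower bound on $L(\hat{\boldsymbol\beta}_\lambda)$, and these cancel up to a factor $d_n/n\to0$. The organization differs slightly. The paper first proves the deterministic ratio bound
\[
\frac{\tilde R(\hat{\boldsymbol\beta}_{\bar\alpha})\,d_{\alpha_\lambda}}{\tilde R(\hat{\boldsymbol\beta}_{\alpha_\lambda})\,d_n}\le 2,
\]
then reduces the claim to showing $d_n|\tilde\sigma^2-\sigma^2|/\bigl(nL(\hat{\boldsymbol\beta}_{\bar\alpha})\bigr)\to_p0$, which it handles by reusing the Cauchy--Schwarz decomposition from the previous lemma with $\alpha_\lambda$ replaced by $\bar\alpha$. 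You instead pass directly to the deterministic denominator $n\Delta_{\bar\alpha}+\sigma^2 d_n=n\tilde R(\hat{\boldsymbol\beta}_{\bar\alpha})$ via (\ref{liresult})--(\ref{ZLTresult1}), write out the exact four-term expansion of $\tilde\sigma^2_n-\sigma^2$, and dispatch each term by elementary bounds, using a case split on $n\Delta_{\bar\alpha}\gtrless\sigma^2 d_n$ for the cross term. Your route is a touch more explicit; the paper's is more economical in that it recycles the bound from the preceding lemma and avoids the case distinction. Substantively they are the same argument.
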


\begin{proof}
Start by noting that for all $\lambda \in [0,\lambda_{max}]$, $\Delta_{\alpha_\lambda} \geq \Delta_{\bar{\alpha}}$ (ZLT).  Consider
{\small
\begin{align*}
 \frac{ \tilde{R}(\hat{\boldsymbol\beta}_{\bar{\alpha}})d_{\alpha_\lambda}}{\tilde{R}(\hat{\boldsymbol\beta}_{\alpha_\lambda})d_n}
& \leq \frac{ ( \Delta_{\bar{\alpha}} + \frac{d_n \sigma^2}{n} ) d_{\alpha_\lambda} }{ (\Delta_{\bar{\alpha}} + \frac{d_{\alpha_\lambda} \sigma^2}{n})  d_n} \\
& \leq \frac{ \Delta_{\bar{\alpha}}} { \Delta_{\bar{\alpha}} + \frac{d_{\alpha_\lambda} \sigma^2}{n} }
+ \frac{ \frac{d_n \sigma^2}{n} d_{\alpha_\lambda} }{ \frac{d_{\alpha_\lambda} \sigma^2}{n} d_n(\bar\alpha) } \\
&  \leq  2.
\end{align*}}
From the proof of Lemma 1 we have that
{\small
\begin{align*}
|\tilde{\sigma}^2_n - \sigma^2| & \leq \frac{n}{n-d_n-1} L(\hat{\boldsymbol\beta}^*_{\bar{\alpha}}) + 2\frac{n}{n-d_n-1}||\boldsymbol\varepsilon_n||\frac{||\boldsymbol\mu-\hat{\boldsymbol\mu}_{\bar{\alpha}}||}{n}
+\left | \frac{||\boldsymbol\varepsilon||^2}{n-d_n-1} - \sigma^2 \right |.
\end{align*}}
Thus
{\small
\begin{align*}
\frac{d_n|\tilde{\sigma}^2 - \sigma^2|}{nL(\hat{\boldsymbol\beta}_{\bar{\alpha}})}
& \leq \frac{n}{n-d_n-1}\frac{d_n}{n} + \frac{2}{\sigma}\frac{n}{n-d_n-1}\left[ \frac{||\boldsymbol\varepsilon||^2}{n}\frac{d_n}{n} \right ]^{1/2}\left [ \frac{\sigma^2 d_n}{n\tilde{R}(\hat{\boldsymbol\beta}_{\bar{\alpha}})} \frac{\tilde{R}(\hat{\boldsymbol\beta}_{\bar{\alpha}})}{L(\hat{\boldsymbol\beta}_{\bar{\alpha}})}\right]^{1/2}\\
& \left[\frac{d_n\sigma^2}{n\tilde{R}(\hat{\boldsymbol\beta}_{\bar{\alpha}})}\frac{\tilde{R}(\hat{\boldsymbol\beta}_{\bar{\alpha}})}
{L(\hat{\boldsymbol\beta}_{\bar{\alpha}})}\right ]\left | \frac{||\boldsymbol\varepsilon||^2}{(n-d_n-1)\sigma^2} - 1 \right |.
\end{align*}}
Under the assumption that $d_n/n \rightarrow 0$ as $n \rightarrow \infty$ it follows that
{\small
\begin{align*}
\frac{d_n|\tilde{\sigma}^2 - \sigma^2|}{nL(\hat{\boldsymbol\beta}_{\bar{\alpha}})} \rightarrow_p 0.
\end{align*}}
Combining these results with (\ref{liresult}) and (\ref{ZLTresult1}) it follows that
{\small
\begin{align*}
\frac{ d_{\alpha_\lambda}|\tilde{\sigma}^2 - \sigma^2| }{n L(\hat{\beta}_\lambda)}
& \leq \sup_{[0,\lambda_{max}]} \frac{ d_n|\tilde{\sigma}^2 - \sigma^2| }{n L(\hat{\boldsymbol\beta}_{\bar{\alpha}})}
\frac{d_{\alpha_\lambda}\tilde{R}(\hat{\boldsymbol\beta}_{\bar{\alpha}})}{d_n
\tilde{R}(\hat{\boldsymbol\beta}_{\alpha_\lambda})}
\frac{L(\hat{\boldsymbol\beta}_{\bar{\alpha}})}{\tilde{R}(\hat{\boldsymbol\beta}_{\bar{\alpha}})}
\frac{\tilde{R}(\hat{\boldsymbol\beta}_{\alpha_\lambda})}
{L(\hat{\boldsymbol\beta}_{\alpha_\lambda})}\frac{L(\hat{\boldsymbol\beta}_{\alpha_\lambda})}{{L(\hat{\boldsymbol\beta}_\lambda)}}\\
& \leq 2  \frac{ d_n|\tilde{\sigma}^2  - \sigma^2| }{n L(\hat{\boldsymbol\beta}_{\bar{\alpha}})}
\sup_{[0,\lambda_{max}]}
\frac{L(\hat{\boldsymbol\beta}_{\bar{\alpha}})}{\tilde{R}(\hat{\boldsymbol\beta}_{\bar{\alpha}})}
\frac{\tilde{R}(\hat{\boldsymbol\beta}_{\alpha_\lambda})}
{L(\hat{\boldsymbol\beta}_{\alpha_\lambda})}\frac{L(\hat{\boldsymbol\beta}_{\alpha_\lambda})}{{L(\hat{\boldsymbol\beta}_\lambda)}}
\rightarrow_p  0.
\end{align*}}
\end{proof}
\begin{proof}[Proof of Theorem 1]
As in the proofs in ZLT, to prove that $C_{p_\lambda}$ is asymptotically loss efficient, it is sufficient to show that
{\small
\begin{align}\label{thm1result}
 \sup_{\lambda \in [0,\lambda_{max}]} \left | \frac{C_{p_\lambda} - ||\boldsymbol\varepsilon||^2/n - L(\hat{\boldsymbol\beta}_\lambda)}{L(\hat{\boldsymbol\beta}_\lambda)} \right | \rightarrow_p 0.
\end{align}}
Decomposing $C_{p_\lambda}$ it can be established that
{\small
\begin{align*}
C_{p_\lambda}& = \frac{||\mathbf{y}-\hat{\boldsymbol\mu}_\lambda||^2}{n} + \frac{2\tilde{\sigma}^2 d_{\alpha_\lambda}}{n} \\
  & = \frac{||\boldsymbol\varepsilon||^2}{n} + L(\hat{\boldsymbol\beta}_\lambda)+(L(\hat{\boldsymbol\beta}_{\alpha_\lambda})-L(\hat{\boldsymbol\beta}_\lambda))+
\frac{||\hat{\boldsymbol\mu}_{\alpha_\lambda}-\hat{\boldsymbol\mu}_\lambda||^2}{n} \\
& + \frac{2\boldsymbol{\varepsilon}^T[I-\mathbf{H}_{\alpha_\lambda}]\boldsymbol\mu}{n} +
\frac{2(\sigma^2 d_{\alpha_\lambda} - \boldsymbol{\varepsilon}^T \mathbf{H}_{\alpha_\lambda} \boldsymbol{\varepsilon})}{n}
+ \frac{2(\tilde{\sigma}^2  - \sigma^2)d_{\alpha_\lambda}}{n}.
\end{align*}}
The proof of Theorem 2 in ZLT established that
{\small
 \begin{align*}
 \sup_{\lambda \in [0,\lambda_{max}]} \left | \frac{2\boldsymbol{\varepsilon}^T(I-\mathbf{H}_{\alpha_\lambda})\boldsymbol{\mu}}{nL(\hat{\boldsymbol\beta}_\lambda)} \right | \rightarrow_p 0,
 \end{align*}}
and,
{\small
\begin{align*}
 \sup_{\lambda \in [0,\lambda_{max}]} \left | \frac{2(\sigma^2 d_{\alpha_\lambda} - \boldsymbol{\varepsilon}_n^T \mathbf{H}_{\alpha_\lambda} \boldsymbol{\varepsilon})}{nL(\hat{\boldsymbol\beta}_\lambda)} \right | \rightarrow_p 0.
 \end{align*}}
Combining these results with (\ref{liresult})-(\ref{ZLTresult2}) and Lemma 2, (\ref{thm1result}) follows as desired.
\end{proof}

\begin{proof}[Proof of Theorem 2]
The proof is the same as that of Theorem 1 except that the estimated variance is based on the candidate model rather than the full model and the result is established by using Lemma 1 in place of Lemma 2.
\end{proof}

\begin{proof}[Proof of Theorem 3]
As in the efficiency proof for $\Gamma_\lambda$, it is sufficient to show that
{\small
\begin{align}\label{thm3result}
 \sup_{\lambda \in [0,\lambda_{max}]} \left | \frac{\tilde{\Gamma}_\lambda - ||\boldsymbol\varepsilon||^2/n - L(\hat{\boldsymbol\beta}_\lambda)}{L(\hat{\boldsymbol\beta}_\lambda)} \right | \rightarrow_p 0
\end{align}}
\noindent to establish that $\tilde{\Gamma}_\lambda$ is an asymptotically efficient selection procedure for the regularization parameter, $\lambda$.
By the definition of $\tilde{\Gamma}_\lambda$ we have that
{\small
\begin{align*}
\sup_{\lambda \in [0,\lambda_{max}]} \left | \frac{\tilde{\Gamma}_\lambda - ||\boldsymbol\varepsilon||^2/n - L(\hat{\boldsymbol\beta}_\lambda)}{L(\hat{\boldsymbol\beta}_\lambda)} \right |
& = \sup_{\lambda \in [0,\lambda_{max}]} \left | \frac{\delta_\lambda\hat{\sigma}^2_\lambda+\Gamma_\lambda - ||\boldsymbol\varepsilon||^2/n - L(\hat{\boldsymbol\beta}_\lambda)}{L(\hat{\boldsymbol\beta}_\lambda)} \right | \nonumber \\
&\leq \sup_{\lambda \in [0,\lambda_{max}]} \left | \frac{\delta_\lambda ( \hat{\sigma}^2_\lambda-\sigma^2)}{L(\hat{\boldsymbol\beta}_\lambda)} \right |
+  \sup_{\lambda \in [0,\lambda_{max}]} \frac{|\delta_\lambda| \sigma^2}{L(\hat{\boldsymbol\beta}_\lambda)} \nonumber \\
&+ \sup_{\lambda \in [0,\lambda_{max}]} \left |\frac{\Gamma_\lambda - ||\boldsymbol\varepsilon||^2/n - L(\hat{\boldsymbol\beta}_\lambda)}{L(\hat{\boldsymbol\beta}_\lambda)} \right |.
\end{align*}}
The last two terms converge to zero by (\ref{assump1}) and the efficiency proof for $\Gamma_\lambda$.  From the proof of the previous lemma we further have that
{\small
\begin{align*}
\left | \frac{\delta_\lambda(\hat{\sigma}^2_\lambda-\sigma^2)}{L(\hat{\boldsymbol\beta}_\lambda)} \right |
& \leq
|\delta_\lambda| \frac{ L(\hat{\boldsymbol\beta}_{\alpha_\lambda})}{L(\hat{\boldsymbol\beta}_\lambda)}
+ 2 \frac{||\boldsymbol\varepsilon||}{\sqrt{n}} \left(\frac{L(\hat{\boldsymbol\beta}_{\alpha_\lambda})}{L(\hat{\boldsymbol\beta}_\lambda)} \right)^{1/2}
\left ( \frac{|\delta_\lambda|}{L(\hat{\boldsymbol\beta}_\lambda)} \right)^{1/2} (|\delta_\lambda|)^{1/2} \nonumber \\
&+ \frac{|\delta_\lambda|}{L(\hat{\boldsymbol\beta}_\lambda)}\left | \frac{||\boldsymbol\varepsilon||^2}{n} - \sigma^2 \right | +  |\delta_\lambda|
\frac{||\hat{\boldsymbol\mu}_{\alpha_\lambda}-\hat{\boldsymbol\mu}_{\lambda}||^2}{nL(\hat{\boldsymbol\beta}_\lambda)}. \nonumber \end{align*}}
By (\ref{assump1}), (\ref{assump2}), and similar arguments as those used in the efficiency proof for $\Gamma_\lambda$ we have that the right hand side converges to $0$ in probability.  Therefore, it follows that
{\small
\begin{align*}
\sup_{\lambda \in [0,\lambda_{max}]} \left | \frac{\delta_\lambda(\hat{\sigma}^2_\lambda-\sigma^2)}{L(\hat{\boldsymbol\beta}_\lambda)} \right | \rightarrow_p 0
\end{align*}}
and so equation (\ref{thm3result}) holds as desired.
\end{proof}

\appsection{}
\begin{proof}[Proof of Lemma \ref{l:A4glm}]
Under assumptions (A5$'$)-(A7$'$),
\[
\sup_{\lambda \in [0,\lambda_{\max}]} \frac{ ||\vb||^2 }{\tilde{R}_{KL}(\hat\vbeta_{\alpha_\lambda})}
\leq \frac{M_2\lambda^2_{\max} d}{L_{KL}(\vbeta^*_{\bar{\alpha}})}
\leq \frac{M_1^2 M_2 d}{nL_{KL}(\vbeta^*_{\bar{\alpha}})} \rightarrow 0.
\]
\end{proof}

\begin{lemma}\label{Lkl}
Under (R1)-(R5), for $n$ sufficiently large
\[
L_{KL}(\hat{\boldsymbol\beta}_\alpha)
    = L_{KL}(\boldsymbol\beta^*_\alpha) + \frac{1}{n}||\mathbf{W}^{1/2}_\alpha\mathbf{H}_\alpha(\mathbf{y}-\boldsymbol\mu)||^2
    + O_p(||\hat{\boldsymbol\beta}_\alpha-\boldsymbol\beta^*_\alpha||^2).
\]
\end{lemma}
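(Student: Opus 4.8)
The plan is to obtain the expansion from a second-order Taylor expansion of $L_{KL}(\cdot)$ about the pseudo-true parameter $\boldsymbol\beta^*_\alpha$, after which the linearization (\ref{estMLE}) for $\hat{\boldsymbol\beta}_\alpha - \boldsymbol\beta^*_\alpha$ is substituted into the quadratic term. Differentiating $L_{KL}(\boldsymbol\beta_\alpha) = \frac{2}{n}\sum_{i=1}^n[\mu_i(\theta_{0i} - \mathbf{x}_{\alpha i}^T\boldsymbol\beta_\alpha) + b(\mathbf{x}_{\alpha i}^T\boldsymbol\beta_\alpha) - b(\theta_{0i})]$ gives gradient $\frac{2}{n}\mathbf{X}_\alpha^T(b'(\boldsymbol\theta_\alpha) - \boldsymbol\mu)$ and Hessian $\frac{2}{n}\mathbf{X}_\alpha^T\mathbf{W}_\alpha\mathbf{X}_\alpha$ with $\mathbf{W}_\alpha = \mathrm{diag}(b''(\theta_{\alpha 1}),\dots,b''(\theta_{\alpha n}))$. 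The first key point is that the gradient vanishes at $\boldsymbol\beta^*_\alpha$: this is exactly the characterization (\ref{pseudoProp}) of the KL-minimizer from Theorem 1 of \citet{lv2010}. Hence the linear term drops and, evaluating the Hessian at $\boldsymbol\theta^*_\alpha$,
\[ L_{KL}(\hat{\boldsymbol\beta}_\alpha) = L_{KL}(\boldsymbol\beta^*_\alpha) + \frac{1}{n}(\hat{\boldsymbol\beta}_\alpha - \boldsymbol\beta^*_\alpha)^T\mathbf{X}_\alpha^T\mathbf{W}_\alpha\mathbf{X}_\alpha(\hat{\boldsymbol\beta}_\alpha - \boldsymbol\beta^*_\alpha) + R_n, \]
where $R_n$ is the cubic Taylor remainder. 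Since $b$ is three times differentiable, $b'''$ is continuous hence bounded on any compact neighborhood of the coordinates of $\boldsymbol\theta^*_\alpha$; because $\hat{\boldsymbol\beta}_\alpha \to \boldsymbol\beta^*_\alpha$ almost surely by \citet{white82}, the intermediate value in $R_n$ eventually lies in such a neighborhood, so $R_n = O_p(\|\hat{\boldsymbol\beta}_\alpha - \boldsymbol\beta^*_\alpha\|^3) = O_p(\|\hat{\boldsymbol\beta}_\alpha - \boldsymbol\beta^*_\alpha\|^2)$.

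Next I would substitute (\ref{estMLE}). The score at $\boldsymbol\beta^*_\alpha$ is $\partial l(\boldsymbol\beta^*_\alpha)/\partial\boldsymbol\beta = \mathbf{X}_\alpha^T(\mathbf{y} - b'(\boldsymbol\theta^*_\alpha)) = \mathbf{X}_\alpha^T(\mathbf{y} - \boldsymbol\mu)$, using (\ref{pseudoProp}) once more to replace $\mathbf{X}_\alpha^T b'(\boldsymbol\theta^*_\alpha)$ by $\mathbf{X}_\alpha^T\boldsymbol\mu$, while $\mathbf{A}_n = \frac{1}{n}\mathbf{X}_\alpha^T\mathbf{W}_\alpha\mathbf{X}_\alpha$. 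Thus $\hat{\boldsymbol\beta}_\alpha - \boldsymbol\beta^*_\alpha = (\mathbf{X}_\alpha^T\mathbf{W}_\alpha\mathbf{X}_\alpha)^{-1}\mathbf{X}_\alpha^T(\mathbf{y}-\boldsymbol\mu) + \mathbf{A}_n^{-1}\mathbf{r}$. Plugging this into the quadratic term, the dominant piece telescopes by the identity $(\mathbf{X}_\alpha^T\mathbf{W}_\alpha\mathbf{X}_\alpha)^{-1}(\mathbf{X}_\alpha^T\mathbf{W}_\alpha\mathbf{X}_\alpha)(\mathbf{X}_\alpha^T\mathbf{W}_\alpha\mathbf{X}_\alpha)^{-1} = (\mathbf{X}_\alpha^T\mathbf{W}_\alpha\mathbf{X}_\alpha)^{-1}$ to $\frac{1}{n}(\mathbf{y}-\boldsymbol\mu)^T\mathbf{X}_\alpha(\mathbf{X}_\alpha^T\mathbf{W}_\alpha\mathbf{X}_\alpha)^{-1}\mathbf{X}_\alpha^T(\mathbf{y}-\boldsymbol\mu) = \frac{1}{n}\|\mathbf{W}_\alpha^{1/2}\mathbf{H}_\alpha(\mathbf{y}-\boldsymbol\mu)\|^2$ with $\mathbf{H}_\alpha = \mathbf{X}_\alpha(\mathbf{X}_\alpha^T\mathbf{W}_\alpha\mathbf{X}_\alpha)^{-1}\mathbf{X}_\alpha^T$, the weighted hat-type matrix of the GLM. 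The two remaining pieces --- the cross term $2(\mathbf{y}-\boldsymbol\mu)^T\mathbf{X}_\alpha(\mathbf{X}_\alpha^T\mathbf{W}_\alpha\mathbf{X}_\alpha)^{-1}\mathbf{r}$ and the term quadratic in $\mathbf{r}$ --- are to be shown $O_p(\|\hat{\boldsymbol\beta}_\alpha - \boldsymbol\beta^*_\alpha\|^2)$ using (A1$'$) and (A4$'$) to bound the eigenvalues of $(\frac{1}{n}\mathbf{X}_\alpha^T\mathbf{W}_\alpha\mathbf{X}_\alpha)^{-1}$, (A2$'$) to get $\|\mathbf{X}_\alpha^T(\mathbf{y}-\boldsymbol\mu)\| = O_p(\sqrt{nd_\alpha})$ via a variance bound, and the Nishii estimate $r_j = O_p(\|\hat{\boldsymbol\beta}_\alpha - \boldsymbol\beta^*_\alpha\|^2)$, together with the standard maximum-likelihood rate $\|\hat{\boldsymbol\beta}_\alpha - \boldsymbol\beta^*_\alpha\| = O_p(\sqrt{d_\alpha/n})$ available from \citet{white82} and \citet{nishii88} under (R1)--(R6).

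The main obstacle I expect is the bookkeeping of these error terms at the right order. The ``leading'' stochastic term $\frac{1}{n}\|\mathbf{W}_\alpha^{1/2}\mathbf{H}_\alpha(\mathbf{y}-\boldsymbol\mu)\|^2$ is itself only $O_p(\|\hat{\boldsymbol\beta}_\alpha - \boldsymbol\beta^*_\alpha\|^2)$, so the substance of the lemma is the exact identification of that term combined with the claim that the $\mathbf{A}_n^{-1}\mathbf{r}$ contribution and the cross term are of the same order and no larger. Establishing this requires carefully combining the conditioning of $\frac1n\mathbf{X}_\alpha^T\mathbf{W}_\alpha\mathbf{X}_\alpha$ from (A1$'$)--(A4$'$) with the magnitudes of the score and of $\mathbf{r}$, bounding the cross term by a Cauchy--Schwarz inequality in the $\mathbf{W}_\alpha$-inner product, and using $d_n/n \to 0$; the cubic Taylor remainder $R_n$ must be handled in parallel via the boundedness of $b'''$ near $\boldsymbol\theta^*_\alpha$. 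Collecting the leading term together with the $O_p(\|\hat{\boldsymbol\beta}_\alpha - \boldsymbol\beta^*_\alpha\|^2)$ error terms then yields the stated expansion.
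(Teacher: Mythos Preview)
Your approach is essentially the same as the paper's: a second-order Taylor expansion of $L_{KL}$ (the paper expands $b(\hat{\boldsymbol\theta}_\alpha)$ about $\boldsymbol\theta^*_\alpha$, which is equivalent since $\boldsymbol\theta=\mathbf{X}_\alpha\boldsymbol\beta$), use of (\ref{pseudoProp}) to annihilate the linear term, and substitution of (\ref{estMLE}) into the quadratic to extract $\frac{1}{n}\|\mathbf{W}_\alpha^{1/2}\mathbf{H}_\alpha(\mathbf{y}-\boldsymbol\mu)\|^2$. The paper is terser---it simply asserts the final equality ``follows from equations (\ref{pseudoProp}) and (\ref{estMLE})'' without spelling out the cross-term bookkeeping you describe---and it works only under (R1)--(R5), so your appeal to (A1$'$)--(A4$'$) is extra caution rather than a different route.
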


\begin{proof}
Taylor's expansion of $b(\hat{\boldsymbol\theta}_\alpha)$ around $\boldsymbol\theta^*_\alpha$ gives us
{\small
\begin{align*}
\vone^T b(\hat{\boldsymbol\theta}_\alpha)
   & = \vone^T  b(\boldsymbol\theta^*_\alpha)
    + b'(\boldsymbol\theta^*_\alpha)^T(\hat{\boldsymbol\theta}_\alpha-\boldsymbol\theta^*_\alpha)\\
   & + \frac{1}{2}(\hat{\boldsymbol\theta}_\alpha-\boldsymbol\theta^*_\alpha)^T
        \mathbf{W}_\alpha
            (\hat{\boldsymbol\theta}_\alpha-\boldsymbol\theta^*_\alpha)
    + o_p(||\hat{\boldsymbol\theta}_\alpha-\boldsymbol\theta^*_\alpha||^2).
\end{align*}}
For $n$ sufficiently large, we have that
{\small
\begin{align*}
L_{KL}(\hat{\boldsymbol\beta}_\alpha)
    & = \frac{2}{n} \boldsymbol\mu^T(\vtheta_0 - \hat\vtheta_\alpha + \frac{2}{n} \vone^T  b(\hat{\boldsymbol\theta}_\alpha)\\
    & = L_{KL}(\boldsymbol\beta^*_\alpha)-\frac{2}{n} (\boldsymbol\mu-b'(\boldsymbol\theta^*_\alpha))^T (\hat{\boldsymbol\theta}_\alpha-\boldsymbol\theta^*_\alpha)
        + \frac{1}{n} (\hat{\boldsymbol\theta}_\alpha-\boldsymbol\theta^*_\alpha)^T \mathbf{W}_\alpha (\hat{\boldsymbol\theta}_\alpha-\boldsymbol\theta^*_\alpha)\\
    &    + o_p(||\hat{\boldsymbol\beta}_\alpha-\boldsymbol\beta^*_\alpha||^2)\\
    & = L_{KL}(\boldsymbol\beta^*_\alpha) +  \frac{1}{n}||\mathbf{W}^{1/2}_\alpha\mathbf{H}_\alpha(\mathbf{y}-\boldsymbol\mu)||^2
    + O_p(||\hat{\boldsymbol\beta}_\alpha-\boldsymbol\beta^*_\alpha||^2),
\end{align*}}
where the last equality follows from equations (\ref{pseudoProp}) and (\ref{estMLE}).
\end{proof}

\begin{lemma}\label{thm4Support}
Under assumptions (A1$'$)-(A4$'$), (A7$'$) and regularity conditions (R1)-(R3), the following results hold.
\begin{equation}\label{r1}
\sup_{\alpha \in \mathcal{A}_n}
\left| \frac{(\vy - \vmu)^T (\vtheta^*_\alpha - \vtheta_0)}
    {nR_{KL}(\hat{\boldsymbol\beta}_\alpha)} \right| \rightarrow_p 0,
\end{equation}
\begin{equation}\label{r4}
\sup_{\alpha \in \mathcal{A}_n}
    \left| \frac{(d_\alpha - tr\{(\mathbf{X}'_\alpha \mathbf{W}_\alpha \mathbf{X}_\alpha)^{-1} \mathbf{X}'_\alpha \mathbf{W}_0 \mathbf{X}_\alpha \})}
        {nR_{KL}(\hat{\boldsymbol\beta}_\alpha)} \right| \rightarrow_p 0,
\end{equation}
\begin{equation}\label{r3}
\sup_{\alpha \in \mathcal{A}_n}
    \left| \frac{((\mathbf{y}-\boldsymbol\mu)'\mathbf{H}_\alpha (\mathbf{y} - \boldsymbol\mu)
        - tr\{(\mathbf{X}'_\alpha \mathbf{W}_\alpha \mathbf{X}_\alpha)^{-1} \mathbf{X}'_\alpha \mathbf{W}_0 \mathbf{X}_\alpha \})}
        {nR_{KL}(\hat{\boldsymbol\beta}_\alpha)} \right| \rightarrow_p 0,
\end{equation}
and
\begin{equation}\label{r5}
\sup_{\alpha \in \mathcal{A}_n}
    \left| \frac{L_{KL}(\hat{\boldsymbol\beta}_\alpha)}
        {R_{KL}(\hat{\boldsymbol\beta}_\alpha)} - 1 \right| \rightarrow_p 0.
\end{equation}
\end{lemma}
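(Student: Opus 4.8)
The four assertions are the generalized-linear-model analogues of the auxiliary results \citet{li87} established for least squares, and the plan is to prove each by the same device: for a fixed $\alpha$ bound the probability that the relevant numerator exceeds $\epsilon$ times $nR_{KL}(\hat{\boldsymbol\beta}_\alpha)$ by a quantity of order $(nR_{KL}(\hat{\boldsymbol\beta}_\alpha))^{-q}$ (possibly times a vanishing factor), then sum over $\alpha\in\mathcal{A}_n$ using (A3$'$). The moments that make Markov's inequality at order $2q$ effective come from (A2$'$), and the structural simplifications come from three facts already available: equation (\ref{pseudoProp}), which annihilates the cross term since $\mathbf{X}_\alpha'(\boldsymbol\mu-b'(\boldsymbol\theta^*_\alpha))=0$; Lemma \ref{Lkl}, which writes $L_{KL}(\hat{\boldsymbol\beta}_\alpha)$ as the deterministic bias $L_{KL}(\boldsymbol\beta^*_\alpha)$ plus $n^{-1}\|\mathbf{W}_\alpha^{1/2}\mathbf{H}_\alpha(\vy-\vmu)\|^2$ plus an $O_p(\|\hat{\boldsymbol\beta}_\alpha-\boldsymbol\beta^*_\alpha\|^2)$ remainder; and the expansion of $R_{KL}(\hat{\boldsymbol\beta}_\alpha)$ from \citet{lv2010}. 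Writing $\mathbf{H}_\alpha=\mathbf{X}_\alpha(\mathbf{X}_\alpha'\mathbf{W}_\alpha\mathbf{X}_\alpha)^{-1}\mathbf{X}_\alpha'$ for the weighted projection appearing in Lemma \ref{Lkl}, one has $\mathbf{H}_\alpha\mathbf{W}_\alpha\mathbf{H}_\alpha=\mathbf{H}_\alpha$, so $\|\mathbf{W}_\alpha^{1/2}\mathbf{H}_\alpha(\vy-\vmu)\|^2=(\vy-\vmu)'\mathbf{H}_\alpha(\vy-\vmu)$, whose expectation is exactly $tr\{(\mathbf{X}_\alpha'\mathbf{W}_\alpha\mathbf{X}_\alpha)^{-1}\mathbf{X}_\alpha'\mathbf{W}_0\mathbf{X}_\alpha\}$; this identifies the centering in (\ref{r3}) and the penalty in (\ref{r4}).

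For (\ref{r1}) the numerator is a linear form $(\vy-\vmu)'(\boldsymbol\theta^*_\alpha-\boldsymbol\theta_0)$ in the independent mean-zero errors with deterministic coefficients, so by Markov at order $2q$ and the standard moment inequality for linear forms (as used by \citet{li87} and ZLT) together with (A2$'$), the tail probability is at most a constant times $(\|\boldsymbol\theta^*_\alpha-\boldsymbol\theta_0\|^2)^q/(nR_{KL}(\hat{\boldsymbol\beta}_\alpha))^{2q}$. Convexity of $b$ turns $L_{KL}$ into a Bregman divergence, $nL_{KL}(\boldsymbol\beta^*_\alpha)=2\sum_i d_b(\theta_{0i},\theta^*_{\alpha i})$, and since the regularity conditions confine the pseudo-true canonical parameters to a fixed compact set on which $b''$ is bounded below, $\|\boldsymbol\theta^*_\alpha-\boldsymbol\theta_0\|^2\le CnL_{KL}(\boldsymbol\beta^*_\alpha)\le CnR_{KL}(\hat{\boldsymbol\beta}_\alpha)$; the bound becomes a constant times $(nR_{KL}(\hat{\boldsymbol\beta}_\alpha))^{-q}$ and (A3$'$) finishes. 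For (\ref{r3}) the numerator is the centered quadratic form $(\vy-\vmu)'\mathbf{H}_\alpha(\vy-\vmu)-tr\{(\mathbf{X}_\alpha'\mathbf{W}_\alpha\mathbf{X}_\alpha)^{-1}\mathbf{X}_\alpha'\mathbf{W}_0\mathbf{X}_\alpha\}$; the moment inequality for quadratic forms gives its $2q$-th moment $\le C(tr(\mathbf{H}_\alpha^2))^q=O(d_\alpha^q)$ by (A1$'$) and (A4$'$). Because $\bar\alpha$ is the largest candidate model one has $L_{KL}(\boldsymbol\beta^*_\alpha)\ge L_{KL}(\boldsymbol\beta^*_{\bar\alpha})$, so (A7$'$) gives $nR_{KL}(\hat{\boldsymbol\beta}_\alpha)\ge nL_{KL}(\boldsymbol\beta^*_{\bar\alpha})\gg d_n\ge d_\alpha$ uniformly; hence $d_\alpha^q/(nR_{KL}(\hat{\boldsymbol\beta}_\alpha))^{2q}\le\eta_n(nR_{KL}(\hat{\boldsymbol\beta}_\alpha))^{-q}$ with $\eta_n:=\{d_n/(nL_{KL}(\boldsymbol\beta^*_{\bar\alpha}))\}^q\to0$, and summing over $\alpha$ via (A3$'$) gives the claim.

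Result (\ref{r4}) is purely deterministic: since $d_\alpha=tr\{(\mathbf{X}_\alpha'\mathbf{W}_\alpha\mathbf{X}_\alpha)^{-1}\mathbf{X}_\alpha'\mathbf{W}_\alpha\mathbf{X}_\alpha\}$, its numerator equals $tr\{(\mathbf{X}_\alpha'\mathbf{W}_\alpha\mathbf{X}_\alpha)^{-1}\mathbf{X}_\alpha'(\mathbf{W}_\alpha-\mathbf{W}_0)\mathbf{X}_\alpha\}$, which is at most $Cd_\alpha$ in absolute value by (A1$'$), (A4$'$), and boundedness of the error variances; dividing by $nR_{KL}(\hat{\boldsymbol\beta}_\alpha)\ge nL_{KL}(\boldsymbol\beta^*_{\bar\alpha})$ and invoking (A7$'$) makes the ratio tend to $0$ uniformly. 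Finally, (\ref{r5}) is assembled from the pieces: by Lemma \ref{Lkl} and the identity $\|\mathbf{W}_\alpha^{1/2}\mathbf{H}_\alpha(\vy-\vmu)\|^2=(\vy-\vmu)'\mathbf{H}_\alpha(\vy-\vmu)$, $L_{KL}(\hat{\boldsymbol\beta}_\alpha)-R_{KL}(\hat{\boldsymbol\beta}_\alpha)$ equals $n^{-1}\{(\vy-\vmu)'\mathbf{H}_\alpha(\vy-\vmu)-tr\{(\mathbf{X}_\alpha'\mathbf{W}_\alpha\mathbf{X}_\alpha)^{-1}\mathbf{X}_\alpha'\mathbf{W}_0\mathbf{X}_\alpha\}\}+O_p(\|\hat{\boldsymbol\beta}_\alpha-\boldsymbol\beta^*_\alpha\|^2)-o(1)$; the bracketed quadratic-form difference is $o_p(R_{KL}(\hat{\boldsymbol\beta}_\alpha))$ uniformly by (\ref{r3}), the \citet{lv2010} remainder is negligible relative to $R_{KL}(\hat{\boldsymbol\beta}_\alpha)\ge L_{KL}(\boldsymbol\beta^*_{\bar\alpha})$, and $\sup_\alpha\|\hat{\boldsymbol\beta}_\alpha-\boldsymbol\beta^*_\alpha\|^2=O_p(d_\alpha/n)$ — obtained by one more moment-and-union-bound argument applied to (\ref{estMLE}), in which (\ref{pseudoProp}) identifies $\partial l(\boldsymbol\beta^*_\alpha)/\partial\boldsymbol\beta=\mathbf{X}_\alpha'(\vy-\vmu)$ and (A1$'$) controls $\mathbf{A}_n^{-1}$ — which is $o_p(R_{KL}(\hat{\boldsymbol\beta}_\alpha))$ by (A7$'$). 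Dividing through by $R_{KL}(\hat{\boldsymbol\beta}_\alpha)$ and taking the supremum yields (\ref{r5}).

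The step I expect to be the main obstacle is the deterministic comparison used throughout: bounding $\|\boldsymbol\theta^*_\alpha-\boldsymbol\theta_0\|^2$ (and, similarly, the variances of the linear and quadratic forms and of $\partial l(\boldsymbol\beta^*_\alpha)/\partial\boldsymbol\beta$) by a constant multiple of the corresponding risk, \emph{uniformly over the $2^{d_n}$ candidate models}. In the Gaussian case this is an exact identity; here it requires a uniform two-sided bound on $b''$ along the segments joining the pseudo-true canonical parameters $\theta^*_{\alpha i}$ to $\theta_{0i}$, which forces the regularity conditions (R1)--(R6) (and (A4$'$)) to confine all these parameters to one fixed compact set regardless of $\alpha$ and $n$ — essentially the only place where genuinely GLM-specific work beyond the \citet{li87} template enters. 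A secondary nuisance is verifying that the various $O_p$ remainders (from Lemma \ref{Lkl}, from the term $\mathbf{r}$ in (\ref{estMLE}), and from the \citet{lv2010} expansion) are uniform in $\alpha$ and vanish relative to $R_{KL}(\hat{\boldsymbol\beta}_\alpha)$; here assumption (A7$'$), together with the fact that every candidate model is contained in $\bar\alpha$, is exactly what converts ``absolute'' $O_p$ bounds into the ``relative-to-risk'' statements that (\ref{r1})--(\ref{r5}) require.
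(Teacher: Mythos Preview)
Your proposal is correct and follows essentially the same route as the paper's proof: Chebyshev's inequality at order $2q$ combined with Whittle's moment bounds for linear and quadratic forms in independent variables, reduction of each tail bound to a constant times $(nR_{KL}(\hat{\boldsymbol\beta}_\alpha))^{-q}$, and summation over $\mathcal{A}_n$ via (A3$'$); the paper even makes the same Taylor-expansion argument you sketch to obtain $nR_{KL}(\hat{\boldsymbol\beta}_\alpha)\ge K\|\boldsymbol\theta^*_\alpha-\boldsymbol\theta_0\|^2$ from a lower bound on $b''$. The only noteworthy differences are that for (\ref{r3}) the paper bounds the $2q$-th moment by a constant times $\bigl(tr\{(\mathbf{X}'_\alpha\mathbf{W}_\alpha\mathbf{X}_\alpha)^{-1}\mathbf{X}'_\alpha\mathbf{W}_0\mathbf{X}_\alpha\}\bigr)^q$ and uses that this trace divided by $n$ is itself a summand of $R_{KL}(\hat{\boldsymbol\beta}_\alpha)$---so the factor $(nR_{KL})^{-q}$ drops out without your detour through $d_\alpha$ and (A7$'$)---and that for (\ref{r5}) the paper simply writes ``follows from (\ref{r3})'', whereas your explicit handling of the $O_p(\|\hat{\boldsymbol\beta}_\alpha-\boldsymbol\beta^*_\alpha\|^2)$ remainder from Lemma~\ref{Lkl} is more careful than what appears there.
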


The proof of this lemma requires the following matrix algebra results.
\begin{mydef}
Let $\mathbf{A}$ and $\mathbf{B}$ be two $K \times K$ matrices. We say that $\mathbf{A} \geq \mathbf{B}$ if $\mathbf{A}-\mathbf{B}$ is positive semidefinite.
\end{mydef}

\begin{lemma} \citep[p.471]{horn85}
If $\mathbf{A}$ and $\mathbf{B}$ are $K \times K$ positive definite Hermitian matrices, then
\begin{enumerate}
\item[(i.)] $\mathbf{A} \geq \mathbf{B}$ if and only if $\mathbf{B}^{-1} \geq A^{-1}$;
\item[(ii.)] if $\mathbf{A} \geq \mathbf{B}$, then $\lambda_k(\mathbf{A}) \geq \lambda_k(\mathbf{B})$ for all $k=1, \ldots, K$, where $\lambda_k(\mathbf{A})$ and $\lambda_k(\mathbf{B})$ are the $k^{th}$ largest eigenvalues of $\mathbf{A}$ and $\mathbf{B}$, respectively.
\end{enumerate}
\end{lemma}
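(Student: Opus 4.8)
The plan is to treat the two assertions separately, since (ii) is an immediate consequence of the variational (Courant--Fischer) characterization of eigenvalues, while (i) reduces, after a congruence transformation, to the scalar fact that $t \geq 1$ if and only if $1/t \leq 1$ for positive $t$.

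For part (ii) I would invoke the max--min characterization of the ordered eigenvalues of a Hermitian matrix: for the $k$th largest eigenvalue,
\[
\lambda_k(\mathbf{A}) = \max_{\dim \mathcal{S} = k} \; \min_{\mathbf{v} \in \mathcal{S},\, \|\mathbf{v}\| = 1} \mathbf{v}^* \mathbf{A} \mathbf{v},
\]
the outer maximum being taken over all $k$-dimensional subspaces $\mathcal{S}$ of $\mathbb{C}^K$. By the definition of the Löwner order preceding the lemma, $\mathbf{A} \geq \mathbf{B}$ means $\mathbf{v}^*(\mathbf{A}-\mathbf{B})\mathbf{v} \geq 0$ for every $\mathbf{v}$, i.e. $\mathbf{v}^* \mathbf{A}\mathbf{v} \geq \mathbf{v}^* \mathbf{B}\mathbf{v}$ pointwise. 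Hence for each fixed subspace $\mathcal{S}$ the inner minimum for $\mathbf{A}$ dominates that for $\mathbf{B}$, and taking the maximum over $\mathcal{S}$ preserves the inequality, giving $\lambda_k(\mathbf{A}) \geq \lambda_k(\mathbf{B})$ for every $k$. This is Weyl's monotonicity theorem and needs no positive-definiteness beyond what is assumed.

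For part (i) I would first record the elementary congruence fact that, for any invertible $\mathbf{P}$ and Hermitian $\mathbf{M}$, $\mathbf{M} \geq 0$ if and only if $\mathbf{P}^* \mathbf{M} \mathbf{P} \geq 0$; this holds because $\mathbf{v} \mapsto \mathbf{P}\mathbf{v}$ is a bijection of $\mathbb{C}^K$, so the two quadratic forms have the same sign behaviour. Taking $\mathbf{P} = \mathbf{B}^{-1/2}$, the Hermitian positive-definite inverse square root of $\mathbf{B}$, and $\mathbf{M} = \mathbf{A} - \mathbf{B}$, the relation $\mathbf{A} \geq \mathbf{B}$ becomes $\mathbf{C} := \mathbf{B}^{-1/2}\mathbf{A}\mathbf{B}^{-1/2} \geq \mathbf{I}$. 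Diagonalizing the Hermitian matrix $\mathbf{C}$, the condition $\mathbf{C} \geq \mathbf{I}$ says exactly that every eigenvalue of $\mathbf{C}$ is at least $1$, which is equivalent to every eigenvalue of $\mathbf{C}^{-1}$ being at most $1$, i.e. $\mathbf{C}^{-1} \leq \mathbf{I}$. Since $\mathbf{C}^{-1} = \mathbf{B}^{1/2}\mathbf{A}^{-1}\mathbf{B}^{1/2}$, a second congruence by $\mathbf{B}^{-1/2}$ turns $\mathbf{I} - \mathbf{C}^{-1} \geq 0$ into $\mathbf{B}^{-1} - \mathbf{A}^{-1} \geq 0$, that is $\mathbf{B}^{-1} \geq \mathbf{A}^{-1}$. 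Every step is reversible, so the implication runs both ways; alternatively the converse follows by applying the forward direction to the pair $\mathbf{A}^{-1}, \mathbf{B}^{-1}$.

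There is no substantive obstacle here, since the result is classical; the only points demanding care are bookkeeping ones: confirming that $\mathbf{B}^{-1/2}$ exists and is Hermitian (guaranteed by the spectral theorem for the positive-definite $\mathbf{B}$), that the congruence preserves the Löwner order in both directions (needed for the ``only if'' as well as the ``if''), and that the passage from $\mathbf{C} \geq \mathbf{I}$ to $\mathbf{C}^{-1} \leq \mathbf{I}$ rests on the spectral decomposition rather than a naive inversion of the matrix inequality. I would present (ii) first and then (i), stating the congruence observation explicitly as a one-line remark since it is used twice within the argument for (i).
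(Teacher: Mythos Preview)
Your proof is correct, but note that the paper does not actually prove this lemma: it is stated with a citation to \citet[p.~471]{horn85} and used as a black box in the proof of Lemma~\ref{thm4Support}. There is therefore no ``paper's own proof'' to compare against. Your argument---Courant--Fischer for (ii) and a congruence reduction to the scalar case for (i)---is the standard textbook route and is perfectly sound.
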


\begin{lemma} \citep[p.340]{marshall10}
If $\mathbf{A}$ and $\mathbf{B}$ are $K \times K$ positive semidefinite Hermitian matrices, then
\[
tr(\mathbf{A}\mathbf{B}) \leq \sum_{k=1}^K \lambda_k(\mathbf{A}) \lambda_k(\mathbf{B}).
\]
\end{lemma}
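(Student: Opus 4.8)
The plan is to reduce the claim to a majorization argument built on the spectral theorem. First I would write the spectral decompositions $\mathbf{A} = \mathbf{U}\boldsymbol\Lambda_A\mathbf{U}^*$ and $\mathbf{B} = \mathbf{V}\boldsymbol\Lambda_B\mathbf{V}^*$, where $\mathbf{U}$ and $\mathbf{V}$ are unitary with columns $\mathbf{u}_1, \ldots, \mathbf{u}_K$ and $\mathbf{v}_1, \ldots, \mathbf{v}_K$, and $\boldsymbol\Lambda_A = diag(\lambda_1(\mathbf{A}), \ldots, \lambda_K(\mathbf{A}))$, $\boldsymbol\Lambda_B = diag(\lambda_1(\mathbf{B}), \ldots, \lambda_K(\mathbf{B}))$ collect the eigenvalues in decreasing order. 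Setting $\mathbf{W} = \mathbf{U}^*\mathbf{V}$ (which is unitary, with entries $W_{ij} = \mathbf{u}_i^*\mathbf{v}_j$) and using cyclic invariance of the trace, a direct expansion gives
\[
tr(\mathbf{A}\mathbf{B}) = \sum_{i=1}^K \sum_{j=1}^K \lambda_i(\mathbf{A}) \lambda_j(\mathbf{B}) |\mathbf{u}_i^* \mathbf{v}_j|^2 .
\]

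Next I would set $p_{ij} = |\mathbf{u}_i^* \mathbf{v}_j|^2$ and observe that, because $\{\mathbf{u}_i\}$ and $\{\mathbf{v}_j\}$ are orthonormal bases, the matrix $\mathbf{P} = (p_{ij})$ has nonnegative entries with every row sum and column sum equal to one; that is, $\mathbf{P}$ is doubly stochastic (each row sum is $\mathbf{u}_i^*(\sum_j \mathbf{v}_j\mathbf{v}_j^*)\mathbf{u}_i = 1$, and symmetrically for columns). Writing $\mathbf{a} = (\lambda_1(\mathbf{A}), \ldots, \lambda_K(\mathbf{A}))^T$ and $\mathbf{b} = (\lambda_1(\mathbf{B}), \ldots, \lambda_K(\mathbf{B}))^T$, the expansion becomes $tr(\mathbf{A}\mathbf{B}) = \mathbf{a}^T \mathbf{P} \mathbf{b}$, and the positive semidefiniteness of $\mathbf{A}$ and $\mathbf{B}$ guarantees that the entries of $\mathbf{a}$ and $\mathbf{b}$ are nonnegative and sorted in decreasing order. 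The problem thus reduces to bounding the bilinear form $\mathbf{a}^T \mathbf{P} \mathbf{b}$ over the set of doubly stochastic $\mathbf{P}$.

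Finally, since $\mathbf{P} \mapsto \mathbf{a}^T \mathbf{P} \mathbf{b}$ is linear and the doubly stochastic matrices form a convex polytope whose extreme points are the permutation matrices (the Birkhoff--von Neumann theorem), the maximum is attained at some permutation matrix $\mathbf{P}_\sigma$, where $\mathbf{a}^T \mathbf{P}_\sigma \mathbf{b} = \sum_i a_i b_{\sigma(i)}$. Because both $\mathbf{a}$ and $\mathbf{b}$ have decreasingly ordered nonnegative entries, the rearrangement inequality shows that $\sum_i a_i b_{\sigma(i)}$ is largest when $\sigma$ is the identity, yielding $tr(\mathbf{A}\mathbf{B}) \leq \sum_{k=1}^K \lambda_k(\mathbf{A}) \lambda_k(\mathbf{B})$. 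The main obstacle is the rearrangement step, which is precisely where the positive semidefiniteness hypothesis is used: nonnegativity of the eigenvalues is what forces the same-order pairing to dominate, and without it the correct bound would involve signed pairings of singular values. An equivalent route would be to invoke von Neumann's trace inequality directly and note that for positive semidefinite Hermitian matrices the singular values coincide with the eigenvalues, so the general bound specializes to the stated one.
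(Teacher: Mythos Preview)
Your argument is correct and complete. The paper does not actually prove this lemma; it is simply quoted from \citet[p.~340]{marshall10} as a known tool, so there is no ``paper's proof'' to compare against. Your route via spectral decomposition, the doubly stochastic matrix $\mathbf{P}=(|\mathbf{u}_i^*\mathbf{v}_j|^2)$, Birkhoff--von~Neumann, and the rearrangement inequality is a standard and clean proof of this trace bound.

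One small correction to your closing commentary: the positive semidefiniteness hypothesis is not in fact what drives the rearrangement step. The rearrangement inequality $\sum_i a_i b_{\sigma(i)} \le \sum_i a_i b_i$ holds for any two real sequences that are both sorted in decreasing order, regardless of sign, so your chain of inequalities goes through verbatim for arbitrary Hermitian $\mathbf{A}$ and $\mathbf{B}$. The inequality $tr(\mathbf{AB}) \le \sum_k \lambda_k(\mathbf{A})\lambda_k(\mathbf{B})$ (eigenvalues in common decreasing order) is therefore valid for all Hermitian matrices; the PSD assumption is stated in the paper only because that is the form given in the cited reference and the form needed downstream. Your alternative remark about von~Neumann's trace inequality is the correct place where positivity matters: it is only for PSD Hermitian matrices that singular values coincide with eigenvalues, so that the singular-value bound $|tr(\mathbf{AB})| \le \sum_k \sigma_k(\mathbf{A})\sigma_k(\mathbf{B})$ specializes to the eigenvalue statement here.
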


\begin{proof}[Proof of Lemma \ref{thm4Support}]
We start by proving equation (\ref{r1}).  By Chebyshev's Inequality and Theorem 2 of \citet{whittle60}, we have that
\begin{equation}\label{r1.f1}
\Pr \left(
\sup_{\alpha \in \mathcal{A}_n}
\left|
\frac{(\vy - \vmu)^T (\vtheta^*_\alpha - \vtheta_0)}
{nR_{KL}(\hat{\boldsymbol\beta}_\alpha)}
\right|
>\delta
\right)
    \leq
        \frac{C}{\delta^{2q}}
        \sum_{\alpha \in \mathcal{A}_n}
        \frac{||\vtheta^*_\alpha - \vtheta_0||^{2q}}
        { (nR_{KL}(\hat\vbeta_\alpha))^{2q} }.
\end{equation}
Now $R_{KL}(\hat\vbeta_\alpha) \geq L_{KL}(\vbeta^*_\alpha)$.  If we consider $L_{KL}(\cdot)$ as a function of $\vtheta$, then by a second order Taylor series expansion around $\vtheta_0$,
\[
L_{KL}(\vtheta^*_\alpha) = \frac{1}{n}(\vtheta^*_\alpha -\vtheta_0)^T \bar{\mW} (\vtheta^*_\alpha -\vtheta_0),
\]
where $\bar{\mW} = diag\{b''(\bar\theta_1),\ldots,b''(\bar\theta_n)\}$ and $\bar{\theta}_i$ is on the line segment between $\theta^*_{\alpha i}$ and $\theta_{0 i}$.  Since $b''(\theta)>0$ for all $\theta$, it follows that $nR_{KL}(\hat\vbeta_\alpha) \geq K||\vtheta^*_\alpha -\vtheta_0||^2$ for some constant $K>0$.  Therefore the right-hand side of equation (\ref{r1.f1}) is less than or equal to
\[
\frac{C'}{\delta^{2q}}
\sum_{\alpha \in \mathcal{A}_n}
(n R_{KL}(\hat\vbeta_\alpha))^{-q}
\]
for some constant $C'>0$, which tends to zero as $n \rightarrow \infty$ by assumption (A3$'$).
Next, to establish equation (\ref{r4}) we first note that
$\mathbf{X}'_\alpha \mathbf{W}_0 \mathbf{X}_\alpha \leq \max_{1 \leq i \leq n} \sigma^2_i \mathbf{X}'_\alpha \mathbf{X}_\alpha$
and
$\mathbf{X}'_\alpha \mathbf{W}_\alpha \mathbf{X}_\alpha \geq \min_{1 \leq i \leq n} b''(\theta_{\alpha i}) \mathbf{X}'_\alpha \mathbf{X}_\alpha$.  From Lemmas B.2 and B.3 it follows then that
{\small
\[
tr((\mathbf{X}'_\alpha \mathbf{W}_\alpha \mathbf{X}_\alpha)^{-1} \mathbf{X}'_\alpha \mathbf{W}_0 \mathbf{X}_\alpha)
    \leq
         d_\alpha
        \frac{\max_{1 \leq i \leq n} \sigma^2_i}
            {\min_{1 \leq i \leq n} b''(\theta_{\alpha i})}
       \lambda_1\left(\left(\frac{1}{n}\mathbf{X}'_\alpha \mathbf{X}_\alpha\right)^{-1}\right)
            \lambda_1 \left( \frac{1}{n}\mathbf{X}'_\alpha \mathbf{X}_\alpha \right)
    \leq d_\alpha C
\]}
for some constant $C>0$.  Using this result we have that
{\small
\[
\sup_{\alpha \in \mathcal{A}_n}
    \left| \frac{2(d_\alpha
        - tr\{(\mathbf{X}'_\alpha \mathbf{W}_\alpha \mathbf{X}_\alpha)^{-1} \mathbf{X}'_\alpha \mathbf{W}_0 \mathbf{X}_\alpha \})}
        {nR_{KL}(\hat{\boldsymbol\beta}_\alpha)} \right|
            \leq
                \sup_{\alpha \in \mathcal{A}_n}
                    \frac{2 d_\alpha (1+C)}{nR_{KL}(\hat{\boldsymbol\beta}_\alpha)}
            \leq \frac{d_n(1+C)}{nL_{KL}(\vbeta^*_{\bar{\alpha}})},
\]}
which tends to zero by assumption (A7$'$).

To prove equation (\ref{r3}) we apply Chebyshev's Inequality and Theorem 2 of \citet{whittle60} to get that
{\small
\begin{multline*}
\Pr \left( \sup_{\alpha \in \mathcal{A}_n}
    \left| \frac{2((\mathbf{y}-\boldsymbol\mu)'\mathbf{H}_\alpha(\mathbf{y} - \boldsymbol\mu)
        - tr\{(\mathbf{X}'_\alpha \mathbf{W}_\alpha \mathbf{X}_\alpha)^{-1} \mathbf{X}'_\alpha \mathbf{W}_0 \mathbf{X}_\alpha \})}
        {nR_{KL}(\hat{\boldsymbol\beta}_\alpha)} \right| > \delta \right)\\
            \leq \delta^{-2q} C \sum_{\alpha \in \mathcal{A}_n}
                    \frac{ tr\{
                        (\mathbf{X}'_\alpha \mathbf{W}_\alpha \mathbf{X}_\alpha)^{-1} \mathbf{X}'_\alpha \mathbf{W}_0 \mathbf{X}_\alpha (\mathbf{X}'_\alpha \mathbf{W}_\alpha \mathbf{X}_\alpha)^{-1} \mathbf{X}'_\alpha \mathbf{W}_0 \mathbf{X}_\alpha \}^q}
                    {(nR_{KL}(\hat{\boldsymbol\beta}_\alpha))^{2q}}
\end{multline*}}
for some constant $C>0$.
Using the fact that $tr\{\mathbf{AB}\} \leq \lambda_1(\mathbf{A}) tr\{\mathbf{B}\}$,
{\small
\[
tr\{(\mathbf{X}'_\alpha \mathbf{W}_\alpha \mathbf{X}_\alpha)^{-1} \mathbf{X}'_\alpha \mathbf{W}_0 \mathbf{X}_\alpha (\mathbf{X}'_\alpha \mathbf{W}_\alpha \mathbf{X}_\alpha)^{-1} \mathbf{X}'_\alpha \mathbf{W}_0 \mathbf{X}_\alpha \}
    \leq K tr\{(\mathbf{X}'_\alpha \mathbf{W}_\alpha \mathbf{X}_\alpha)^{-1} \mathbf{X}'_\alpha \mathbf{W}_0 \mathbf{X}_\alpha\}
\]}
for some constant $K>0$.
Therefore
{\small
\begin{multline*}
\Pr \left( \sup_{\alpha \in \mathcal{A}_n}
    \left| \frac{2((\mathbf{y}-\boldsymbol\mu)'\mathbf{H}_\alpha(\mathbf{y} - \boldsymbol\mu)
        - tr\{(\mathbf{X}'_\alpha \mathbf{W}_\alpha \mathbf{X}_\alpha)^{-1} \mathbf{X}'_\alpha \mathbf{W}_0 \mathbf{X}_\alpha \})}
        {nR_{KL}(\hat{\boldsymbol\beta}_\alpha)} \right| > \delta \right)\\
            \leq \delta^{-2q} C'
                \sum_{\alpha \in \mathcal{A}}
                    \frac{ tr\{(\mathbf{X}'_\alpha \mathbf{W}_\alpha \mathbf{X}_\alpha)^{-1} \mathbf{X}'_\alpha \mathbf{W}_0 \mathbf{X}_\alpha\}^q }
                    {(nR_{KL}(\hat{\boldsymbol\beta}_\alpha))^{2q}}.
\end{multline*}}
for some constant $C'>0$.
Since
{\small
\[
\frac{ tr\{(\mathbf{X}'_\alpha \mathbf{W}_\alpha \mathbf{X}_\alpha)^{-1} \mathbf{X}'_\alpha \mathbf{W}_0 \mathbf{X}_\alpha\} }{n}
    \leq R_{KL}(\hat{\boldsymbol\beta}_\alpha),
\]}
it follows that
{\small
\begin{multline*}
\Pr \left( \sup_{\alpha \in \mathcal{A}_n}
    \left| \frac{2((\mathbf{y}-\boldsymbol\mu)'\mathbf{H}_\alpha(\mathbf{y} - \boldsymbol\mu))
        - tr\{(\mathbf{X}'_\alpha \mathbf{W}_\alpha \mathbf{X}_\alpha)^{-1} \mathbf{X}'_\alpha \mathbf{W}_0 \mathbf{X}_\alpha \})}
        {nR_{KL}(\hat{\boldsymbol\beta}_\alpha)} \right| > \delta \right)\\
            \leq \delta^{-2q} C'
                \sum_{\alpha \in \mathcal{A}_n}
                    (nR_{KL}(\hat{\boldsymbol\beta}_\alpha))^{-q} \rightarrow 0.
\end{multline*}}
Finally, equation (\ref{r5}) follows from (\ref{r3}).
\end{proof}

\begin{lemma}\label{estBound}
Under (A1$'$)
\[
||\hat{\boldsymbol\theta}_\lambda - \hat{\boldsymbol\theta}_{\alpha_\lambda}||^2 \leq n C ||\mathbf{b}||^2.
\]
\end{lemma}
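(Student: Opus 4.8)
The plan is to compare the stationarity conditions that define the penalized estimator $\hat{\boldsymbol\beta}_\lambda$ and the (unpenalized) maximum-likelihood estimator $\hat{\boldsymbol\beta}_{\alpha_\lambda}$ on their common support $\alpha_\lambda$, and then to linearize the difference using $b''>0$. Write $\hat{\boldsymbol\beta}_{\lambda 1}$ for the subvector of $\hat{\boldsymbol\beta}_\lambda$ indexed by $\alpha_\lambda$, so that $\hat{\boldsymbol\theta}_\lambda = \mathbf{X}_{\alpha_\lambda}\hat{\boldsymbol\beta}_{\lambda 1}$, and let $\mathbf{b}_1$ be the subvector of $\mathbf{b}$ indexed by $\alpha_\lambda$, which contains all of its nonzero entries, so $\|\mathbf{b}_1\| = \|\mathbf{b}\|$. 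Differentiating the penalized objective $\frac{1}{n}l(\boldsymbol\beta) - \sum_j p_\lambda(|\beta_j|)$ in the active coordinates (where $p_\lambda(|\cdot|)$ is differentiable) gives $\frac{1}{n}\mathbf{X}'_{\alpha_\lambda}(\mathbf{y} - b'(\hat{\boldsymbol\theta}_\lambda)) = \mathbf{b}_1$, whereas the score equation for the submodel maximum-likelihood estimator is $\frac{1}{n}\mathbf{X}'_{\alpha_\lambda}(\mathbf{y} - b'(\hat{\boldsymbol\theta}_{\alpha_\lambda})) = \mathbf{0}$, with $b'(\cdot)$ acting coordinatewise. Subtracting yields the key identity
\[
\mathbf{X}'_{\alpha_\lambda}\bigl(b'(\hat{\boldsymbol\theta}_{\alpha_\lambda}) - b'(\hat{\boldsymbol\theta}_\lambda)\bigr) = n\mathbf{b}_1.
\]

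Next I would linearize the left-hand side. Both $\hat{\boldsymbol\theta}_\lambda$ and $\hat{\boldsymbol\theta}_{\alpha_\lambda}$ lie in the column space of $\mathbf{X}_{\alpha_\lambda}$, so $\mathbf{u} := \hat{\boldsymbol\theta}_{\alpha_\lambda} - \hat{\boldsymbol\theta}_\lambda = \mathbf{X}_{\alpha_\lambda}\mathbf{v}$ with $\mathbf{v} := \hat{\boldsymbol\beta}_{\alpha_\lambda} - \hat{\boldsymbol\beta}_{\lambda 1}$. By the mean value theorem applied coordinatewise, $b'(\hat\theta_{\alpha_\lambda,i}) - b'(\hat\theta_{\lambda,i}) = b''(\tilde\theta_i)\,u_i$ for some $\tilde\theta_i$ between the two linear predictors. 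Taking the inner product of the identity with $\mathbf{v}$ then gives $\sum_i b''(\tilde\theta_i)\,u_i^2 = n\,\mathbf{v}'\mathbf{b}_1$. I would bound the left-hand side below by $m\|\mathbf{u}\|^2$, where $m>0$ is a uniform lower bound for the $b''(\tilde\theta_i)$; the right-hand side is at most $n\|\mathbf{v}\|\,\|\mathbf{b}_1\|$ by Cauchy--Schwarz, and (A1$'$), via the Cauchy interlacing theorem (since $\mathbf{X}'_{\alpha_\lambda}\mathbf{X}_{\alpha_\lambda}$ is a principal submatrix of $\mathbf{X}'\mathbf{X}$), gives $\|\mathbf{u}\|^2 = \mathbf{v}'\mathbf{X}'_{\alpha_\lambda}\mathbf{X}_{\alpha_\lambda}\mathbf{v} \ge (n/C_2)\|\mathbf{v}\|^2$, hence $\|\mathbf{v}\| \le \sqrt{C_2/n}\,\|\mathbf{u}\|$. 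Combining these, $m\|\mathbf{u}\|^2 \le \sqrt{nC_2}\,\|\mathbf{u}\|\,\|\mathbf{b}_1\|$, so $\|\mathbf{u}\|^2 \le (C_2/m^2)\,n\,\|\mathbf{b}\|^2$ (trivially if $\mathbf{u}=\mathbf{0}$), which is the assertion with $C = C_2/m^2$. Equivalently, one may solve $\mathbf{v} = n(\mathbf{X}'_{\alpha_\lambda}\tilde{\mathbf{W}}\mathbf{X}_{\alpha_\lambda})^{-1}\mathbf{b}_1$, where $\tilde{\mathbf{W}}$ is the diagonal matrix with entries $b''(\tilde\theta_i)$, and bound $\|\mathbf{u}\|^2 = \|\mathbf{X}_{\alpha_\lambda}\mathbf{v}\|^2$ using the Loewner inequality $\tilde{\mathbf{W}} \ge m\mathbf{I}$ together with the eigenvalue bounds from (A1$'$); this is the same type of matrix manipulation used in the support lemmas above.

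The step that requires care, and which I expect to be the main obstacle, is establishing the uniform lower bound $b''(\tilde\theta_i) \ge m > 0$: the intermediate points $\tilde\theta_i$ depend on $i$, on $\lambda$, and on $n$, so $m$ must be chosen free of all three. Here I would appeal to the standing regularity conditions (R1)--(R6), together with \citet{white82}'s almost-sure convergence of $\hat{\boldsymbol\beta}_\alpha$ to the (bounded) pseudo-true parameter, to conclude that for all large $n$ every linear predictor $\hat\theta_{\lambda,i}$ and $\hat\theta_{\alpha_\lambda,i}$, and hence every $\tilde\theta_i$, lies in a fixed compact set; since $b''$ is continuous and strictly positive, its infimum over that set is a positive constant $m$. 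Once this uniform control of $b''$ is in hand, the remaining algebra is routine.
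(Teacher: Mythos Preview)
Your proposal is correct and essentially matches the paper's own proof. In fact, your ``equivalently'' formulation---solving $\mathbf{v} = \bigl(\tfrac{1}{n}\mathbf{X}'_{\alpha_\lambda}\tilde{\mathbf{W}}\mathbf{X}_{\alpha_\lambda}\bigr)^{-1}\mathbf{b}_1$ from the difference of stationarity conditions via the mean value theorem, then bounding $\|\mathbf{X}_{\alpha_\lambda}\mathbf{v}\|^2$ using the Loewner inequality $\tilde{\mathbf{W}}\ge m\mathbf{I}$ and the eigenvalue bounds of (A1$'$)---is exactly the route the paper takes. Your first variant (inner product with $\mathbf{v}$ and Cauchy--Schwarz) is a minor repackaging of the same inequalities. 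Your coordinatewise application of the mean value theorem to $b'$ in the $\theta$-scale is in fact cleaner than the paper's vector mean value theorem applied in the $\beta$-scale, and your remark that the uniform positive lower bound on $b''(\tilde\theta_i)$ requires justification is well taken: the paper simply writes $(\min_i b''(\bar\theta_i))^{-2}$ and absorbs it into the constant $C$ without further comment.
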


\begin{proof}  $\hat{\boldsymbol\beta}_{\lambda }$ satisfies
{\small
\[
0 = \frac{1}{n}\frac{\partial l(\hat{\boldsymbol\beta}_{\lambda})}{\partial \boldsymbol\beta} - \mathbf{b}.
\]}
Without loss of generality, we can write $\hat{\boldsymbol\beta}_\lambda = (\hat{\boldsymbol\beta}_{\lambda 1}, \hat{\boldsymbol\beta}_{\lambda 2})'$ where $\hat{\boldsymbol\beta}_{\lambda 2} = \mathbf{0}$ and
$\hat{\boldsymbol\beta}_{\lambda 1}$ is a $1 \times d_{\alpha_\lambda}$ vector of estimated coefficients.  Applying the mean value theorem, we get that
{\small
\[
0 = \frac{1}{n}\frac{\partial l(\hat{\boldsymbol\beta}_{\alpha_\lambda})}{\partial \boldsymbol\beta}
    + \frac{1}{n}\frac{\partial^2 l(\bar{\boldsymbol\beta})}{\partial \boldsymbol\beta \partial \boldsymbol\beta^T}
        (\hat{\boldsymbol\beta}_{\lambda 1}- \hat{\boldsymbol\beta}_{\alpha_\lambda})
    - \mathbf{b}_1,
\]}
where $\bar{\boldsymbol\beta}$ is on the line segment joining $\hat{\boldsymbol\beta}_{\lambda 1}$ and $\hat{\boldsymbol\beta}_{\alpha_\lambda}$, and $\mathbf{b}_1$ are the non-zero components of $\mathbf{b}$ that correspond to $\hat{\boldsymbol\beta}_{\lambda 1}$.  For $n$ sufficiently large, it follows then that
{\small
\begin{equation}\label{MLE-Reg-Diff}
\hat{\boldsymbol\beta}_{\lambda 1} - \hat{\boldsymbol\beta}_{\alpha_\lambda}
    = \left(\frac{1}{n} \mathbf{X}'_{\alpha_\lambda} \bar{\mathbf{W}}_\alpha \mathbf{X}_{\alpha_\lambda} \right)^{-1} \mathbf{b}_1,
\end{equation}}
where $\bar{\mathbf{W}}_\alpha = diag\{b''(\bar{\theta}_1),\ldots,b''(\bar{\theta}_n)\}$.  Therefore
{\small
\[
||\hat{\boldsymbol\theta}_\lambda - \hat{\boldsymbol\theta}_{\alpha_\lambda}||^2
    = ||X_{\alpha_\lambda}(\hat{\boldsymbol\beta}_{\lambda 1} - \hat{\boldsymbol\beta}_{\alpha_\lambda})||^2
    = n \mathbf{b}'_1 \left(\frac{1}{n} \mathbf{X}'_{\alpha_\lambda} \bar{\mathbf{W}}_\alpha \mathbf{X}_{\alpha_\lambda} \right)^{-1}
        \left(\frac{1}{n} \mathbf{X}'_{\alpha_\lambda} \mathbf{X}_{\alpha_\lambda} \right)
        \left(\frac{1}{n} \mathbf{X}'_{\alpha_\lambda} \bar{\mathbf{W}}_\alpha \mathbf{X}_{\alpha_\lambda} \right)^{-1}
        \mathbf{b}_1.
\]}
Since
{\small
\begin{equation}\label{useful-bound}
\left(\frac{1}{n} \mathbf{X}'_{\alpha_\lambda} \bar{\mathbf{W}}_\alpha \mathbf{X}_{\alpha_\lambda} \right)^{-1}
        \left(\frac{1}{n} \mathbf{X}'_{\alpha_\lambda} \mathbf{X}_{\alpha_\lambda} \right)
        \left(\frac{1}{n} \mathbf{X}'_{\alpha_\lambda} \bar{\mathbf{W}}_\alpha \mathbf{X}_{\alpha_\lambda} \right)^{-1}
    \leq
    (\min_{1\leq i \leq n} b''(\bar{\theta}_{i}))^{-2} \left(\frac{1}{n} \mathbf{X}'_{\alpha_\lambda} \mathbf{X}_{\alpha_\lambda} \right)^{-1},
\end{equation}}
{\small
\[
||\hat{\boldsymbol\theta}_\lambda - \hat{\boldsymbol\theta}_{\alpha_\lambda}||^2 \leq n C ||\mathbf{b}||^2
\]}
by Lemma B.3 and assumption (A1$'$).
\end{proof}
Since $\mathcal{A}_n$ includes all subsets, the results in Lemma B.2 will still hold when the candidate model $\alpha$ is replaced by the random candidate model $\alpha_\lambda$.
\begin{lemma}\label{lemmaEq}
Under (A1$'$)-(A7$'$),
\begin{equation}\label{lossEq}
\sup_{\lambda \in [0,\lambda_{\max}]}
    \left| \frac{L_{KL}(\hat{\boldsymbol\beta}_{\lambda})}{L_{KL}(\hat{\boldsymbol\beta}_{\alpha_\lambda})}
        -1 \right|
        \rightarrow_p 0.
\end{equation}
\end{lemma}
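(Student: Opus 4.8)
The plan is to establish \eqref{lossEq} in the equivalent form $\sup_{\lambda\in[0,\lambda_{\max}]}|L_{KL}(\hat{\boldsymbol\beta}_\lambda)-L_{KL}(\hat{\boldsymbol\beta}_{\alpha_\lambda})|/L_{KL}(\hat{\boldsymbol\beta}_{\alpha_\lambda})\rightarrow_p 0$, by comparing the penalized fit $\hat{\boldsymbol\theta}_\lambda$ with the unpenalized maximum-likelihood fit $\hat{\boldsymbol\theta}_{\alpha_\lambda}$ on the selected model through a second-order Taylor expansion of the KL loss. Regarded as a function of the natural-parameter vector $\boldsymbol\theta$, $L_{KL}$ has gradient $\tfrac{2}{n}(b'(\boldsymbol\theta)-\boldsymbol\mu)$ and Hessian $\tfrac{2}{n}\,\mathrm{diag}\{b''(\theta_1),\dots,b''(\theta_n)\}$, so, as in the proof of Lemma \ref{Lkl}, an exact expansion around $\hat{\boldsymbol\theta}_{\alpha_\lambda}$ gives
\[
L_{KL}(\hat{\boldsymbol\beta}_\lambda)-L_{KL}(\hat{\boldsymbol\beta}_{\alpha_\lambda})=\frac{2}{n}\bigl(b'(\hat{\boldsymbol\theta}_{\alpha_\lambda})-\boldsymbol\mu\bigr)^{T}(\hat{\boldsymbol\theta}_\lambda-\hat{\boldsymbol\theta}_{\alpha_\lambda})+\frac{1}{n}(\hat{\boldsymbol\theta}_\lambda-\hat{\boldsymbol\theta}_{\alpha_\lambda})^{T}\bar{\mathbf{W}}(\hat{\boldsymbol\theta}_\lambda-\hat{\boldsymbol\theta}_{\alpha_\lambda}),
\]
where $\bar{\mathbf{W}}=\mathrm{diag}\{b''(\bar\theta_i)\}$ with each $\bar\theta_i$ on the segment between $\hat\theta_{\lambda i}$ and $\hat\theta_{\alpha_\lambda i}$, so its entries are bounded by $\sup_\theta b''(\theta)<\infty$ under (A4$'$). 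It then suffices to show that each of the two terms on the right, divided by $L_{KL}(\hat{\boldsymbol\beta}_{\alpha_\lambda})$, is $o_p(1)$ uniformly over $\lambda$; the quotient is well-defined with probability tending to one because, by (\ref{r5}) applied to $\alpha_\lambda\in\mathcal{A}_n$ and the monotonicity $\tilde R_{KL}(\hat{\boldsymbol\beta}_{\alpha_\lambda})\ge L_{KL}(\boldsymbol\beta^*_{\alpha_\lambda})\ge L_{KL}(\boldsymbol\beta^*_{\bar\alpha})>0$ (the last inequality holding for $n$ large by (A7$'$)), one has $L_{KL}(\hat{\boldsymbol\beta}_{\alpha_\lambda})=\tilde R_{KL}(\hat{\boldsymbol\beta}_{\alpha_\lambda})(1+o_p(1))$ positive with probability tending to one, and $\tilde R_{KL}(\hat{\boldsymbol\beta}_{\alpha_\lambda})/L_{KL}(\hat{\boldsymbol\beta}_{\alpha_\lambda})=O_p(1)$ uniformly.

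First I would dispose of the linear term. The key observation is that $\hat{\boldsymbol\theta}_\lambda-\hat{\boldsymbol\theta}_{\alpha_\lambda}=\mathbf{X}_{\alpha_\lambda}(\hat{\boldsymbol\beta}_{\lambda 1}-\hat{\boldsymbol\beta}_{\alpha_\lambda})$ lies in $\mathrm{col}(\mathbf{X}_{\alpha_\lambda})$, while $\hat{\boldsymbol\theta}_{\alpha_\lambda}$ satisfies the GLM score equation $\mathbf{X}_{\alpha_\lambda}^{T}(\mathbf{y}-b'(\hat{\boldsymbol\theta}_{\alpha_\lambda}))=0$ for all $\lambda$ once $n$ is large, and for the data-dependent support $\alpha_\lambda$ by the remark preceding this lemma. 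Hence $\bigl(b'(\hat{\boldsymbol\theta}_{\alpha_\lambda})-\boldsymbol\mu\bigr)^{T}(\hat{\boldsymbol\theta}_\lambda-\hat{\boldsymbol\theta}_{\alpha_\lambda})=(\mathbf{y}-\boldsymbol\mu)^{T}(\hat{\boldsymbol\theta}_\lambda-\hat{\boldsymbol\theta}_{\alpha_\lambda})=\bigl(\mathbf{H}_{\alpha_\lambda}(\mathbf{y}-\boldsymbol\mu)\bigr)^{T}(\hat{\boldsymbol\theta}_\lambda-\hat{\boldsymbol\theta}_{\alpha_\lambda})$, where $\mathbf{H}_{\alpha_\lambda}$ is the projection onto $\mathrm{col}(\mathbf{X}_{\alpha_\lambda})$, and Cauchy--Schwarz bounds the linear term by $\tfrac{2}{n}\|\mathbf{H}_{\alpha_\lambda}(\mathbf{y}-\boldsymbol\mu)\|\,\|\hat{\boldsymbol\theta}_\lambda-\hat{\boldsymbol\theta}_{\alpha_\lambda}\|$.

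Next I would invoke Lemma \ref{estBound}, giving $\|\hat{\boldsymbol\theta}_\lambda-\hat{\boldsymbol\theta}_{\alpha_\lambda}\|^{2}\le nC\|\mathbf{b}\|^{2}$. The quadratic term is then at most $C\sup_\theta b''(\theta)\,\|\mathbf{b}\|^{2}$, and after dividing by $L_{KL}(\hat{\boldsymbol\beta}_{\alpha_\lambda})$ and inserting $\tilde R_{KL}(\hat{\boldsymbol\beta}_{\alpha_\lambda})$ it is at most $C\sup_\theta b''(\theta)\bigl(\|\mathbf{b}\|^{2}/\tilde R_{KL}(\hat{\boldsymbol\beta}_{\alpha_\lambda})\bigr)\bigl(\tilde R_{KL}(\hat{\boldsymbol\beta}_{\alpha_\lambda})/L_{KL}(\hat{\boldsymbol\beta}_{\alpha_\lambda})\bigr)$, which is $o_p(1)$ uniformly by Lemma \ref{l:A4glm} together with (\ref{r5}). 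The normalized linear term is at most $2\sqrt{C}\bigl(\|\mathbf{H}_{\alpha_\lambda}(\mathbf{y}-\boldsymbol\mu)\|^{2}/(n L_{KL}(\hat{\boldsymbol\beta}_{\alpha_\lambda}))\bigr)^{1/2}\bigl(\|\mathbf{b}\|^{2}/L_{KL}(\hat{\boldsymbol\beta}_{\alpha_\lambda})\bigr)^{1/2}$; its second factor is $o_p(1)$ as just argued, and for the first I would use (\ref{r3}) --- which holds uniformly over $\mathcal{A}_n$, hence over $\alpha_\lambda$ --- together with the bound $\mathrm{tr}\{(\mathbf{X}_\alpha^{T}\mathbf{W}_\alpha\mathbf{X}_\alpha)^{-1}\mathbf{X}_\alpha^{T}\mathbf{W}_0\mathbf{X}_\alpha\}/n\le R_{KL}(\hat{\boldsymbol\beta}_\alpha)$ established inside the proof of Lemma \ref{thm4Support}, to conclude that $\|\mathbf{H}_{\alpha_\lambda}(\mathbf{y}-\boldsymbol\mu)\|^{2}/n\le\tilde R_{KL}(\hat{\boldsymbol\beta}_{\alpha_\lambda})(1+o_p(1))$ uniformly; by (\ref{r5}) the first factor is then $O_p(1)$. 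Both terms therefore vanish uniformly and \eqref{lossEq} follows.

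The hard part will be the uniform-in-$\lambda$ control of the stochastic cross term: converting $\tfrac{2}{n}\|\mathbf{H}_{\alpha_\lambda}(\mathbf{y}-\boldsymbol\mu)\|\,\|\hat{\boldsymbol\theta}_\lambda-\hat{\boldsymbol\theta}_{\alpha_\lambda}\|$ into the product of an $O_p(1)$ factor and a vanishing factor requires that the $\mathcal{A}_n$-uniform maximal inequalities of Lemma \ref{thm4Support} transfer to the random model $\alpha_\lambda$ (which is legitimate precisely because $\mathcal{A}_n$ contains all $2^{d_n}$ subsets) and that the denominators $L_{KL}(\hat{\boldsymbol\beta}_{\alpha_\lambda})$ remain positive, which rests on $L_{KL}(\boldsymbol\beta^*_{\bar\alpha})>0$ and (\ref{r5}). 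By contrast, the Taylor expansion and the quadratic remainder are routine once Lemma \ref{estBound} and (A4$'$) are in hand.
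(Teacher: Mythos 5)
Your proof is correct, and the overall skeleton matches the paper's: the same exact second-order expansion of $L_{KL}$ around $\hat{\boldsymbol\theta}_{\alpha_\lambda}$, the same use of the score equation $\mathbf{X}'_{\alpha_\lambda}(\mathbf{y}-b'(\hat{\boldsymbol\theta}_{\alpha_\lambda}))=0$ to turn the linear term into $\tfrac{2}{n}(\mathbf{y}-\boldsymbol\mu)^T(\hat{\boldsymbol\theta}_\lambda-\hat{\boldsymbol\theta}_{\alpha_\lambda})$, and the same disposal of the quadratic term via (A4$'$), Lemma \ref{estBound} and Lemma \ref{l:A4glm}. Where you genuinely diverge is the stochastic cross term. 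The paper substitutes the explicit identity (\ref{MLE-Reg-Diff}) for $\hat{\boldsymbol\theta}_\lambda-\hat{\boldsymbol\theta}_{\alpha_\lambda}$, bounds the penalty-gradient vector entrywise by $M_1M_2/\sqrt{n}$ using (A5$'$)--(A6$'$), and then runs a \emph{fresh} Chebyshev--Whittle maximal inequality over all $2^{d_n}$ subsets for the resulting linear form in $\mathbf{y}-\boldsymbol\mu$, finishing with (\ref{useful-bound}), (A1$'$), (A7$'$) and (A3$'$). You instead apply Cauchy--Schwarz, control $\|\hat{\boldsymbol\theta}_\lambda-\hat{\boldsymbol\theta}_{\alpha_\lambda}\|$ by Lemma \ref{estBound} and the projected-noise factor by recycling the already-proved uniform results (\ref{r3}) and (\ref{r5}); this is more modular (no new maximal inequality is needed, (A3$'$) enters only through Lemma \ref{thm4Support}) at the cost of a Cauchy--Schwarz step that happens to be harmless here since both factors are individually controlled. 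One small repair is needed: you take $\mathbf{H}_{\alpha_\lambda}$ to be the \emph{orthogonal} projection onto $\mathrm{col}(\mathbf{X}_{\alpha_\lambda})$, whereas the quadratic form in (\ref{r3}) involves the weighted matrix $\mathbf{X}_{\alpha}(\mathbf{X}'_{\alpha}\mathbf{W}_{\alpha}\mathbf{X}_{\alpha})^{-1}\mathbf{X}'_{\alpha}$ (this is what makes the centering trace in (\ref{r3})--(\ref{r4}) the right one). The gap is closed in one line: by (A4$'$), $\mathbf{X}'_{\alpha}\mathbf{W}_{\alpha}\mathbf{X}_{\alpha}\leq \bigl(\sup_\theta b''(\theta)\bigr)\mathbf{X}'_{\alpha}\mathbf{X}_{\alpha}$, so by Lemma B.2 the orthogonal projection satisfies $\mathbf{X}_{\alpha}(\mathbf{X}'_{\alpha}\mathbf{X}_{\alpha})^{-1}\mathbf{X}'_{\alpha}\leq\bigl(\sup_\theta b''(\theta)\bigr)\mathbf{X}_{\alpha}(\mathbf{X}'_{\alpha}\mathbf{W}_{\alpha}\mathbf{X}_{\alpha})^{-1}\mathbf{X}'_{\alpha}$, and your bound $\|\mathbf{H}_{\alpha_\lambda}(\mathbf{y}-\boldsymbol\mu)\|^2/n\leq C\,\tilde R_{KL}(\hat{\boldsymbol\beta}_{\alpha_\lambda})(1+o_p(1))$ follows with an extra constant, which is all the argument requires. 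With that patch the proof goes through under exactly the same assumptions as the paper's.
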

\begin{proof}
Applying a second-order Taylor expansion, we get
{\small
\begin{align*}
L_{KL}(\hat{\boldsymbol\beta}_{\lambda})-L_{KL}(\hat{\boldsymbol\beta}_{\alpha_\lambda})
    & = - \frac{2}{n} \boldsymbol\mu'
        (\hat{\boldsymbol\theta}_{\lambda}-\hat{\boldsymbol\theta}_{\alpha_\lambda})
        + \frac{2}{n}(b(\hat{\boldsymbol\theta}_{\lambda})-b(\hat{\boldsymbol\theta}_{\alpha_\lambda})) \\
    & =  - \frac{2}{n} (\boldsymbol\mu - b'(\hat{\boldsymbol\theta}_{\alpha_\lambda}))'
        (\hat{\boldsymbol\theta}_{\lambda}-\hat{\boldsymbol\theta}_{\alpha_\lambda})
        + \frac{1}{n} (\hat{\boldsymbol\theta}_{\lambda}-\hat{\boldsymbol\theta}_{\alpha_\lambda})'
            \bar{\mathbf{W}}_\alpha  (\hat{\boldsymbol\theta}_{\lambda}-\hat{\boldsymbol\theta}_{\alpha_\lambda}) \\
    & =  \frac{2}{n} (\mathbf{y} - \boldsymbol\mu)'
        (\hat{\boldsymbol\theta}_{\lambda}-\hat{\boldsymbol\theta}_{\alpha_\lambda})
        + \frac{1}{n} (\hat{\boldsymbol\theta}_{\lambda}-\hat{\boldsymbol\theta}_{\alpha_\lambda})'
            \bar{\mathbf{W}}_\alpha  (\hat{\boldsymbol\theta}_{\lambda}-\hat{\boldsymbol\theta}_{\alpha_\lambda}),
\end{align*}}
where the last equality follows from the fact that $\hat{\boldsymbol\theta}_{\alpha_\lambda}$ is the maximum-likelihood estimator so
$ \mathbf{X}'_{\alpha_\lambda}(\mathbf{y} - b'(\hat{\boldsymbol\theta}_{\alpha_\lambda}))=0$.

By equation (\ref{MLE-Reg-Diff}) and assumptions (A5$'$) and (A6$'$), the first term is bounded by
\[
M_1\frac{2}{n}(\vy -\vmu)^T \frac{\mX_{\alpha_\lambda}}{\sqrt{n}}
(\frac{1}{n}\mX^T_{\alpha_\lambda}\bar{\mW}_{\alpha}\mX^T_{\alpha_\lambda})^{-1}
\vone
\]
where $\vone$ is a $d_{\alpha_\lambda} \times 1$ vector of ones.  Applying Chebyshev's Inequality and Theorem 2 of \citet{whittle60}, we have that
\[
\Pr
\left(
\sup_{\lambda \in [0,\lambda_{\max}]}
\frac{(\vy - \vmu)^T(\hat\vtheta_\lambda - \hat\vtheta_{\alpha_\lambda})}
{n R_{KL}(\vbeta_{\alpha_\lambda})}
> \delta \right)
    \leq
    \frac{C }{\delta^{2q}}
    \sum_{\alpha \in \mathcal{A}_n}
    \frac{ ||n^{-1/2}\mX_\alpha(\frac{1}{n}\mX^T_{\alpha}\bar{\mW}_{\alpha}\mX^T_{\alpha})^{-1}\vone||^{2q}}
    {n^{2q} R_{KL}(\hat\vbeta_\alpha)^{2q}}
\]
for some constant $C>0$.  By equation (\ref{useful-bound}) and assumption (A1$'$), this does not exceed
\[
\frac{C'}{\delta^{2q}}
\sum_{\alpha \in \mathcal{A}_n}
\frac{d^{q}_\alpha}
{n^{2q} R_{KL}(\hat\beta_{\alpha})^{2q}}.
\]
By (A6$'$), $d/nR_{KL}(\beta^*_{\bar\alpha}) \rightarrow 0$, so, for $n$ sufficiently large, $d_\alpha < nR_{KL}(\hat\beta_\alpha)$.  Therefore, the last quantity is less than or equal to
\[
\frac{C'}{\delta^{2q}}
\sum_{\alpha \in \mathcal{A}_n}
(n R_{KL}(\hat\beta_{\alpha}))^{-q},
\]
which tends to zero by assumption (A3$'$).  Thus
\[
\sup_{\lambda \in [0, \lambda_{\max}] }
\frac{2(\mathbf{y} - \boldsymbol\mu)^T(\hat{\boldsymbol\theta}_{\lambda}-\hat{\boldsymbol\theta}_{\alpha_\lambda})}
{n L_{KL}(\hat\vbeta_{\alpha_\lambda})}
\rightarrow_p 0.
\]
Assuming that (A4$'$)-(A7$'$) holds, equation (\ref{lossEq}) follows from this result and Lemma \ref{estBound}.
\end{proof}
\begin{proof}
To prove the efficiency of $AIC_\lambda$, it suffices to show that
\begin{equation}\label{thm4result}
\sup_{\lambda \in [0,\lambda_{\max}]}
\left | \frac{ AIC_\lambda - \frac{2}{n}\vy^T\vtheta_0 + \frac{2}{n}\vone^Tb(\theta_0) -L_{KL}(\hat{\boldsymbol\beta}_\lambda) } {L_{KL}(\hat{\boldsymbol\beta}_\lambda) } \right|
\rightarrow_p 0.
\end{equation}
Consider
{\small
\begin{align*}
AIC_\lambda - \frac{2}{n}\vy^T\vtheta_0 + \frac{2}{n}\vone^Tb(\vtheta_0)
& = \frac{2}{n} \vy^T(\vtheta_0 - \hat\vtheta_\lambda) + \frac{2}{n} \vone^T(b(\hat\vtheta_\lambda) - b(\vtheta_0))
+ 2\frac{d_{\alpha_\lambda}}{n}\\
    & = L_{KL}(\hat\vbeta_\lambda) + \frac{2}{n}(\vy-\vmu)^T(\vtheta_0 - \vtheta^*_{\alpha_\lambda})\\
    &+ \frac{2}{n}(\vy-\vmu)^T(\vtheta^*_{\alpha_\lambda}-\hat\vtheta_{\alpha_\lambda})
    + \frac{2}{n}(\vy-\vmu)^T(\hat\vtheta_{\alpha_\lambda}-\hat\vtheta_\lambda) + 2\frac{d_{\alpha_\lambda}}{n}.
\end{align*}}
By the expansion in equation (\ref{estMLE}) we have that
{\small
\[
\hat{\boldsymbol\theta}_{\alpha_\lambda}
    = \boldsymbol\theta^*_{\alpha_\lambda}
    + \mathbf{H}_{\alpha_\lambda}(\mathbf{y} - b'(\boldsymbol\theta^*_{\alpha_\lambda}))
\]}
asymptotically.  Therefore
{\small
\begin{align*}
AIC_\lambda - \frac{2}{n}\vy^T\vtheta_0 + \frac{2}{n}\vone^Tb(\vtheta_0)
& = L_{KL}(\hat{\vbeta}_\lambda) + \frac{2}{n} (\vy-\vmu)^T (\vtheta_0-\vtheta^*_{\alpha_\lambda})\\
& - \frac{2}{n} ((\vy-\vmu)^T \mH_{\alpha_\lambda}(\vy - \vmu)
- tr\{ (\mX'_{\alpha_\lambda} \mW_{\alpha_\lambda} \mX_{\alpha_\lambda})^{-1}\mX'_{\alpha_\lambda} \mW_0 \mX_{\alpha_\lambda}\}) \\
&
+\frac{2}{n}
(d_{\alpha_\lambda} -
tr\{ (\mX'_{\alpha_\lambda} \mW_{\alpha_\lambda} \mX_{\alpha_\lambda})^{-1}\mX'_{\alpha_\lambda} \mW_0 \mX_{\alpha_\lambda}\} )
+\frac{2}{n} (\vy-\vmu)' (\hat\vtheta_{\alpha_\lambda}-\hat\vtheta_\lambda).
\end{align*}}
Applying Lemmas \ref{thm4Support} and \ref{lemmaEq}, equation (\ref{thm4result}) holds as desired.
\end{proof}

\section{Supplementary Material}
This supplemental section contains the technical details required to show that Theorem 3 can be used to prove the efficiency of $AIC_\lambda$, $GCV_\lambda$, and $AIC_{c_\lambda}$, the regularity conditions required for Theorem 4 to hold, and the mathematical results needed to apply Lemma 2.1 to the simulation examples.

\subsection{Verifying the Conditions of Theorem 3}

The following shows that $AIC_\lambda$, $GCV_\lambda$, and $AIC_{c_\lambda}$ can be written in the form $\tilde{\Gamma}_n(\lambda)$ and that Conditions (C1) and (C2) of Theorem 3 are satisfied.  This implies that the three methods are efficient selectors of the regularization parameter.  \citet{shibata81} and \citet{hurvich89} noted that $AIC$ and $AIC_c$, respectively, can be shown to satisfy these conditions.  We present a detailed argument of these remarks below.

\subsubsection*{$AIC_\lambda$ is Efficient}
Minimizing $AIC_\lambda$ is equivalent to minimizing
{\small
\begin{equation*}
\exp \left( \frac{2d_{\alpha_\lambda}}{n} \right) \hat{\sigma}^2_\lambda.
\end{equation*}}
Using Taylor's expansion we get
{\small
\begin{align*}
\exp \left( \frac{2d_{\alpha_\lambda}}{n} \right) \hat{\sigma}^2_\lambda
& = \sum_{k=0}^\infty \left ( \frac{2d_{\alpha_\lambda}}{n} \right )^k \frac{1}{k!} \\
& = 1 + \frac{2d_{\alpha_\lambda}}{n} + \sum_{k=2}^\infty \left ( \frac{2d_{\alpha_\lambda}}{n} \right )^k \frac{1}{k!},
\end{align*}}
and we see that $AIC_\lambda$ has the same asymptotic properties as
{\small
\begin{equation*}
\tilde{\Gamma}_\lambda = \hat{\sigma}^2_\lambda \left(1+2\frac{d_{\alpha_\lambda}}{n} + \delta_\lambda\right),
\end{equation*}}
where
{\small
\begin{equation*}
\delta_n(\lambda) = \sum_{k=2}^\infty \left ( \frac{2d_{\alpha_\lambda}}{n} \right )^k \frac{1}{k!}.
\end{equation*}}
Therefore, the efficiency of $AIC_\lambda$ can be established by showing that (C1) and (C2) hold.  Consider
{\small
\begin{equation*}
0<\delta_\lambda=\sum_{k=2}^\infty \left ( \frac{2d_{\alpha_\lambda}}{n} \right )^k \frac{1}{k!}
=\exp \left( \frac{2d_{\alpha_\lambda}}{n} \right) - 1 - \frac{2d_{\alpha_\lambda}}{n}.
\end{equation*}}
Therefore, under the assumption that $d_n/n \rightarrow 0$, (C1) is satisfied.  Next consider
{\small
\begin{align*}
0<\frac{\delta_{\lambda}}{\tilde{R}(\hat{\beta}_{\alpha_\lambda})} &= \sum_{k=2}^\infty \left ( \frac{2d_{\alpha_\lambda}}{n} \right )^k \frac{1}{\tilde{R}(\hat{\beta}_{\alpha_\lambda})k!} \\
&\leq \frac{2}{\sigma^2} \sum_{k=2}^\infty \left ( \frac{2d_{\alpha_\lambda}}{n} \right )^{k-1} \frac{1}{k!}
\leq \frac{2}{\sigma^2} \sum_{k=2}^\infty \left ( \frac{2d_n}{n} \right )^{k-1} \frac{1}{(k-1)!} \\
& = \frac{2}{\sigma^2} \sum_{k=1}^\infty \left ( \frac{2d_n}{n} \right )^{k} \frac{1}{k!}
= \frac{2}{\sigma^2}\left( \exp \left( \frac{2d_n}{n} \right) - 1 \right ) \rightarrow 0.
\end{align*}}
Here the inequality on the second line follows from the fact that $R(\hat{\beta}_{\alpha_\lambda})>\sigma^2 d_{\alpha_\lambda}/n$ and the final result follows from the assumption that $d_n/n \rightarrow 0$.  Therefore,
{\small
\begin{equation*}
\sup_{\lambda \in [0,\lambda_{max}]} \frac{|\delta_\lambda|}{L(\hat{\beta}_\lambda)}
= \sup_{\lambda \in [0,\lambda_{max}]} \left | \frac{L(\hat{\beta}_{\alpha_\lambda})}{L(\hat{\beta}_\lambda)}
\frac{\tilde{R}(\hat{\beta}_{\alpha_\lambda})}{L(\hat{\beta}_{\alpha_\lambda})}\frac{\delta_\lambda}{\tilde{R}(\hat{\beta}_{\alpha_\lambda})} \right | \rightarrow_p 0
\end{equation*}}
so (C2) is satisfied.

\subsubsection*{$GCV_\lambda$ is Efficient}
Using Taylor's expansion we get
{\small
\begin{equation*}
 \frac{1}{(1 - d_{\alpha_\lambda}/n)^2} = \sum_{k=1}^\infty k \left(\frac{d_{\alpha_\lambda}}{n}\right)^{k-1}
 = 1 + \frac{2d_{\alpha_\lambda}}{n} + \sum_{k=3}^\infty k \left(\frac{d_{\alpha_\lambda}}{n}\right)^{k-1},
\end{equation*}}
and we see that $GCV_\lambda$ has the same asymptotic properties as
{\small
\begin{equation*}
\tilde{\Gamma}_\lambda = \hat{\sigma}^2_\lambda \left(1+2\frac{d_{\alpha_\lambda}}{n} + \delta_\lambda\right),
\end{equation*}}
where
{\small
\begin{equation*}
\delta_\lambda = \sum_{k=3}^\infty k \left(\frac{d_{\alpha_\lambda}}{n}\right)^{k-1}.
\end{equation*}}
Therefore, the efficiency of $GCV_\lambda$ can be established by showing that (C1) and (C2) hold.  Consider
{\small
\begin{equation*}
0<\delta_\lambda=\sum_{k=3}^\infty k \left(\frac{d_{\alpha_\lambda}}{n}\right)^{k-1}
= \frac{1}{(1 - d_{\alpha_\lambda}/n)^2} - 1 - \frac{2d_{\alpha_\lambda}}{n}.
\end{equation*}}
Therefore, under the assumption that $d_n/n \rightarrow 0$, (C1) is satisfied.  Next consider
{\small
\begin{align*}
0<\frac{\delta_\lambda}{\tilde{R}(\hat{\beta}_{\alpha_\lambda})} &= \sum_{k=3}^\infty k \left(\frac{d_{\alpha_\lambda}}{n}\right)^{k-1} \frac{1}{\tilde{R}(\hat{\beta}_{\alpha_\lambda})} \\
&\leq \frac{1}{\sigma^2}\sum_{k=3}^\infty k \left(\frac{d_{\alpha_\lambda}}{n}\right)^{k-2} \\
& = \frac{1}{\sigma^2} \left ( \sum_{k=3}^\infty (k-1) \left(\frac{d_{\alpha_\lambda}}{n}\right)^{k-2} + \sum_{k=3}^\infty \left(\frac{d_{\alpha_\lambda}}{n}\right)^{k-2} \right )\\
& = \frac{1}{\sigma^2} \left ( \sum_{k=2}^\infty k \left(\frac{d_{\alpha_\lambda}}{n}\right)^{k-1} + \frac{d_{\alpha_\lambda}}{n}\sum_{k=0}^\infty \left(\frac{d_{\alpha_\lambda}}{n}\right)^{k} \right )\\
& =\frac{1}{\sigma^2} \left ( \frac{1}{(1-d_{\alpha_\lambda}/n)^2} -1 + \frac{d_{\alpha_\lambda}/n}{1-d_{\alpha_\lambda}/n}
\right ),
\end{align*}}
which converges to zero uniformly over $\lambda$ under the assumption that $d_n/n \rightarrow 0$.
Here, again, the inequality on the second line follows from the fact that $\tilde{R}(\hat{\beta}_{\alpha_\lambda})>\sigma^2 d_{\alpha_\lambda}/n$.  Therefore,
{\small
\begin{equation*}
\sup_{\lambda \in [0,\lambda_{max}]} \frac{|\delta_\lambda|}{L(\hat{\beta}_\lambda)}
= \sup_{\lambda \in [0,\lambda_{max}]} \left | \frac{L(\hat{\beta}_{\alpha_\lambda})}{L(\hat{\beta}_\lambda)}
\frac{\tilde{R}(\hat{\beta}_{\alpha_\lambda})}{L(\hat{\beta}_{\alpha_\lambda})}\frac{\delta_\lambda}{\tilde{R}(\hat{\beta}_{\alpha_\lambda})} \right | \rightarrow_p 0
\end{equation*}}
so (C2) is satisfied.

\subsubsection*{$AIC_{c_\lambda}$ is Efficient}
We define
{\small
\begin{equation*}
AIC_{c_\lambda} = \log(\hat{\sigma}^2_\lambda) + 2\frac{d_{\alpha_\lambda}+1}{n-d_{\alpha_\lambda}-2}.
\end{equation*}}
This can be equivalently defined as
{\small
\begin{equation*}
AIC_{c_\lambda} = \log(\hat{\sigma}^2_\lambda) + 2\frac{d_{\alpha_\lambda}+1}{n} + 2\frac{(d_{\alpha_\lambda}+1)(d_{\alpha_\lambda}+2)}{n(n-d_{\alpha_\lambda}-2)}.
\end{equation*}}
Based on the second definition of $AIC_{c_\lambda}$ we see that the information criterion has the same asymptotic properties as
{\small
\begin{equation*}
\log(\hat{\sigma}^2_\lambda) + 2\frac{d_{\alpha_\lambda}}{n} + 2\frac{(d_{\alpha_\lambda}+1)(d_{\alpha_\lambda}+2)}{n(n-d_{\alpha_\lambda}-2)},
\end{equation*}}
because they only differ by an additive constant ($2/n$).  Therefore, $AIC_{c_\lambda}$ will have the same asymptotic behavior as
{\small
\begin{equation*}
\exp \left(2\frac{d_{\alpha_\lambda}}{n} + 2\frac{(d_{\alpha_\lambda}+1)(d_{\alpha_\lambda}+2)}{n(n-d_{\alpha_\lambda}-2)} \right) \hat{\sigma}^2_\lambda.
\end{equation*}}
Using Taylor's expansion we get
{\small
\begin{align*}
\exp \left(2\frac{d_{\alpha_\lambda}}{n} + 2\frac{(d_{\alpha_\lambda}+1)(d_{\alpha_\lambda}+2)}{n(n-d_{\alpha_\lambda}-2)} \right)
& = \sum_{k=0}^\infty \left ( 2\frac{d_{\alpha_\lambda}}{n} + 2\frac{(d_{\alpha_\lambda}+1)(d_{\alpha_\lambda}+2)}{n(n-d_{\alpha_\lambda}-2)} \right )^k \frac{1}{k!} \\
& = 1 + \frac{2d_{\alpha_\lambda}}{n} + 2\frac{(d_{\alpha_\lambda}+1)(d_{\alpha_\lambda}+2)}{n(n-d_{\alpha_\lambda}-2)} \\
& + \sum_{k=2}^\infty \left ( 2\frac{d_{\alpha_\lambda}}{n} + 2\frac{(d_{\alpha_\lambda}+1)(d_{\alpha_\lambda}+2)}{n(n-d_{\alpha_\lambda}-2)} \right )^k \frac{1}{k!},
\end{align*}}
and we see that $AIC_{c_\lambda}$ has the same asymptotic properties as
{\small
\begin{equation*}
\tilde{\Gamma}_\lambda = \hat{\sigma}^2_\lambda \left(1+2\frac{d_{\alpha_\lambda}}{n} + \delta_\lambda\right),
\end{equation*}}
where
{\small
\begin{equation*}
\delta_\lambda = 2\frac{(d_{\alpha_\lambda}+1)(d_{\alpha_\lambda}+2)}{n(n-d_{\alpha_\lambda}-2)}+ \sum_{k=2}^\infty \left ( 2\frac{d_{\alpha_\lambda}}{n} + 2\frac{(d_{\alpha_\lambda}+1)(d_{\alpha_\lambda}+2)}{n(n-d_{\alpha_\lambda}-2)} \right )^k \frac{1}{k!}.
\end{equation*}}
Therefore, the efficiency of $AIC_{c_\lambda}$ can be established by showing that (C1) and (C2) hold.  Consider
{\small
\begin{align*}
0<\delta_n(\lambda)
& = 2\frac{(d_{\alpha_\lambda}+1)(d_{\alpha_\lambda}+2)}{n(n-d_{\alpha_\lambda}-2)}+ \sum_{k=2}^\infty \left ( 2\frac{d_{\alpha_\lambda}}{n} + 2\frac{(d_{\alpha_\lambda}+1)(d_{\alpha_\lambda}+2)}{n(n-d_{\alpha_\lambda}-2)} \right )^k \frac{1}{k!} \\
&= 2\frac{(d_{\alpha_\lambda}+1)(d_{\alpha_\lambda}+2)}{n(n-d_{\alpha_\lambda}-2)} +
\exp \left(2\frac{d_{\alpha_\lambda}}{n} + 2\frac{(d_{\alpha_\lambda}+1)(d_{\alpha_\lambda}+2)}{n(n-d_{\alpha_\lambda}-2)} \right) \\
&- 1 - 2\frac{d_{\alpha_\lambda}}{n} - 2\frac{(d_{\alpha_\lambda}+1)(d_{\alpha_\lambda}+2)}{n(n-d_{\alpha_\lambda}-2)},
\end{align*}}
which converges to zero uniformly over $\lambda$ under the assumption that $d_n/n \rightarrow 0$.  Thus, (C1) is satisfied.  Next consider
{\small
{\allowdisplaybreaks
\begin{align*}
0<\frac{\delta_\lambda}{\tilde{R}(\hat{\beta}_{\alpha_\lambda})} &= 2\frac{(d_{\alpha_\lambda}+1)(d_{\alpha_\lambda}+2)}{\tilde{R}(\hat{\beta}_{\alpha_\lambda})n(n-d_{\alpha_\lambda}-2)} \\
&+ \sum_{k=2}^\infty \left ( 2\frac{d_{\alpha_\lambda}}{n} + 2\frac{(d_{\alpha_\lambda}+1)(d_{\alpha_\lambda}+2)}{n(n-d_{\alpha_\lambda}-2)} \right )^k \frac{1}{\tilde{R}(\hat{\beta}^*_{\alpha_\lambda})k!}  \\
&\leq 2\frac{(1+1/d_{\alpha_\lambda})(d_{\alpha_\lambda}+2)}{\sigma^2 (n-d_{\alpha_\lambda}-2)} \\
&+ \frac{n}{\sigma^2d_{\alpha_\lambda}}\sum_{k=2}^\infty \left ( 2\frac{d_{\alpha_\lambda}}{n} + 2\frac{(d_{\alpha_\lambda}+1)(d_{\alpha_\lambda}+2)}{n(n-d_{\alpha_\lambda}-2)} \right )^k \frac{1}{k!}  \\
& \leq 2\frac{(1+1/d_{\alpha_\lambda})(d_{\alpha_\lambda}+2)}{\sigma^2 (n-d_{\alpha_\lambda}-2)} \\
&+ \frac{2}{\sigma^2}\left ( 1 + \frac{(1+1/d_{\alpha_\lambda})(d_{\alpha_\lambda}+2)}{(n-d_{\alpha_\lambda}-2)} \right )\sum_{k=2}^\infty \left ( 2\frac{d_{\alpha_\lambda}}{n} + 2\frac{(d_{\alpha_\lambda}+1)(d_{\alpha_\lambda}+2)}{n(n-d_{\alpha_\lambda}-2)} \right )^{k-1} \frac{1}{k!}\\
& \leq 2\frac{(1+1/d_{\alpha_\lambda})(d_{\alpha_\lambda}+2)}{\sigma^2 (n-d_{\alpha_\lambda}-2)} \\
&+ \frac{2}{\sigma^2}\left ( 1 + \frac{(1+1/d_{\alpha_\lambda})(d_{\alpha_\lambda}+2)}{(n-d_{\alpha_\lambda}-2)} \right )\sum_{k=1}^\infty \left ( 2\frac{d_{\alpha_\lambda}}{n} + 2\frac{(d_{\alpha_\lambda}+1)(d_{\alpha_\lambda}+2)}{n(n-d_{\alpha_\lambda}-2)} \right )^{k} \frac{1}{k!}  \\
& = 2\frac{(1+1/d_{\alpha_\lambda})(d_{\alpha_\lambda}+2)}{\sigma^2 (n-d_{\alpha_\lambda}-2)} \\
& + \frac{2}{\sigma^2}\left ( 1 + \frac{(1+1/d_{\alpha_\lambda})(d_{\alpha_\lambda}+2)}{(n-d_{\alpha_\lambda}-2)} \right )
\left( \exp \left ( 2\frac{d_{\alpha_\lambda}}{n} + 2\frac{(d_{\alpha_\lambda}+1)(d_{\alpha_\lambda}+2)}{n(n-d_{\alpha_\lambda}-2)} \right ) -1 \right),
\end{align*}}}
which converges to zero uniformly over $\lambda$ under the assumption that $d_n/n \rightarrow 0$.
Again, the inequality on the third line follows from the fact that $R(\hat{\beta}^*_n(\alpha_\lambda))>\sigma^2 d_{\alpha_\lambda}/n$.
Therefore,
{\small
\begin{equation*}
\sup_{\lambda \in [0,\lambda_{max}]} \frac{|\delta_\lambda|}{L(\hat{\beta}_\lambda)}
= \sup_{\lambda \in [0,\lambda_{max}]} \left | \frac{L(\hat{\beta}_{\alpha_\lambda})}{L(\hat{\beta}_\lambda)}
\frac{\tilde{R}(\hat{\beta}_{\alpha_\lambda})}{L(\hat{\beta}_{\alpha_\lambda})}\frac{\delta_\lambda}{\tilde{R}(\hat{\beta}_{\alpha_\lambda})} \right | \rightarrow_p 0
\end{equation*}}
so (C2) is satisfied.

\subsection{Regularity Conditions}
Below are the regularity conditions required to derive the properties of the maximum-likelihood estimator for misspecified models.  Refer to \citet{lv2010} for a discussion of these conditions in the context of generalized linear models with no dispersion parameter.
\begin{enumerate}
\item[(R1)] $f_\alpha(y; \boldsymbol\beta)$ is continuous in $\boldsymbol\beta$ for every $\boldsymbol\beta$ in $\Omega$, a compact set of $\mathbb{R}^{d_\alpha}$.
\item[(R2)] (a.) $E_0(\log(g(y)))$ exists and $|\log f_\alpha(y;\boldsymbol\beta)|$ is dominated by an integrable function with respect to g that is independent of $\boldsymbol\beta$. (b.) The KL loss function has a unique minimum at $\boldsymbol\beta^*$, which is an interior point of $\Omega$.
\item[(R3)] (a.) $\partial \log f_\alpha(y;\boldsymbol\beta) / \partial \beta_i$ and $\partial^2 \log f(y;\boldsymbol\beta) / \partial \beta_i \partial \beta_j$ , $i,j = 1, \ldots, d_\alpha$, are measurable functions of $y$ for each $\boldsymbol\beta \in \Omega$ and continuously differentiable functions of $\boldsymbol\beta$ for each $y$.
    (b.) $|\partial \log f_\alpha(y;\boldsymbol\beta) / \partial \beta_i |$, $|\partial \log f(y;\boldsymbol\beta) / \partial \beta_i \partial \beta_j|$, and
    $|(\partial \log f_\alpha(y;\boldsymbol\beta) / \partial \beta_i)(\partial \log f_\alpha(y;\boldsymbol\beta) / \partial \beta_j) |$ are dominated by integrable functions with respect to g, which are independent of $\boldsymbol\beta$.
\item[(R4)]  The matrices
\[
    B(\theta^*) = E_0\left( \frac{\partial \log f_\alpha(y;\boldsymbol\beta)}{\partial \boldsymbol\beta}
                        \frac{\partial \log f_\alpha(y;\boldsymbol\beta)}{\partial \boldsymbol\beta^T}\right)
\]
and
\[
    A(\theta^*) = E_0\left( \frac{\partial^2 \log f_\alpha(y;\boldsymbol\beta)}{\partial \boldsymbol\beta \partial \boldsymbol\beta^T}\right)
\]
are positive definite.
\item[(R5)]
(a.) $\partial^3 \log f_\alpha (y; \boldsymbol\beta)/\partial \beta_i \beta_j \beta_k$ are measurable with respect to $y$ for $i,j,k = 1, \ldots, d_{alpha}$.
(b.) $|\partial \log f_\alpha(y;\boldsymbol\beta) / \partial \beta_i |^2$,
$|\partial^2 \log f_\alpha(y;\boldsymbol\beta) / \partial \beta_i \partial \beta_j |^2$, and
$|\partial^3 \log f_\alpha(y;\boldsymbol\beta) / \partial \beta_i \partial \beta_j \partial \beta_k|^2$ , $i,j,k = 1, \ldots, d_{\alpha}$,
are dominated by integrable functions with respect to g that are independent of $\boldsymbol\beta$.
\item[(R6)]
For some $\delta > 0$, $E ||\mathbf{B}^{-1/2}_n \mathbf{A}_n (\hat{\boldsymbol\beta}_\alpha - \boldsymbol\beta^*_\alpha)||^{3+\delta} = O(1)$, where $\mathbf{A}_n$ is defined as in equation (3) of the manuscript and $\mathbf{B}_n = \mathbf{X}^T_\alpha W_0 \mathbf{X}_\alpha$.
\end{enumerate}

\subsection{Verifying the Conditions of Lemma 2.1}

\subsubsection{Omitted Predictor with Deterministic $\mX$}
We first consider a more general example.  Let the true model be defined as
\[
\vy = \vmu + \boldsymbol\varepsilon,
\]
where $\vy$ is the $n \times 1$ response vector, $\vmu$ is the $n \times 1$ unknown mean vector, and $\boldsymbol \varepsilon$ is a $n \times 1$ noise vector where $\E(\varepsilon_i)=0$ and $\var(\varepsilon_i) = \sigma^2$.  In what follows we assume that
\[
\vmu = \mX \boldsymbol \beta  + \beta_{\text{excl}}\vx_{\text{excl}},
\]
 where $\mathbf{X}$ is a $n \times d_n$ deterministic matrix of predictors, $\boldsymbol \beta$ is a $d_n \times 1$ vector of coefficients, $\vx_{\text{excl}}$ is a $n \times 1$ deterministic vector, and $\beta_{\text{excl}}$ is a constant.  In the following, we take the candidate models to be the least squares regressions based on all $2^{d_n}$ subsets of $\mX$; the predictor $\vx_{\text{excl}}$ is excluded from consideration so that the true model is never included in the set of candidate models.

Assume that the following conditions hold:
\begin{itemize}
\item[(C3)] $\boldsymbol \beta$ contains a fixed number of non-zero entries
\item[(C4)] $\vx_{\text{excl}}$ is orthogonal to the columns of $\mX$
\item[(C5)] $\inf_{n} \frac{\vx_{\text{excl}}^T \vx_{\text{excl}}} {n} > 0$
\end{itemize}

By construction, for any candidate model $\alpha$,
\begin{align*}
nR(\hat\beta_\alpha) & \geq ||\vmu - \mH_{\bar{\alpha}} \vmu||^2 \\
& = ||(\mI - \mH_{\bar{\alpha}})X \vbeta||^2 + \vbeta^T X^T (\mI - \mH_{\bar{\alpha}})  \vx_{\text{excl}} \beta_{\text{excl}}
    + || \vx_{\text{excl}} \beta_{\text{excl}} ||^2 \\
& = ||(\mI - \mH_{\bar{\alpha}})X \vbeta||^2 + || \vx_{\text{excl}} \beta_{\text{excl}} ||^2 \\
& \geq || \vx_{\text{excl}} \beta_{\text{excl}} ||^2 \\
& = n \beta^2_{\text{excl}} \left( \frac{\vx_{\text{excl}}^T \vx_{\text{excl}}} {n} \right) \\
& \geq k_1 n
\end{align*}
for some constant $k_1 > 0$.

For the simulation example in Section 4.1 of the paper, the true vector of coefficients is fixed and trigonometric predictors are used so conditions (C3)-(C5) are satisfied.  Therefore, for that example it follows that $||\vmu - \mH_{\bar{\alpha}} \vmu||^2 \geq k_1 n$ for some constant $k_1 > 0$.

\subsubsection{Exponential Model}
From Fourier analysis (cf. \citet{bloomfield00}), if $n$ is even then
\begin{equation}\label{dft}
\mu_t = e^{4t/n} = A(0)
            + \sum_{0<j<n/2} A(f_j) \cos\left( 2 \pi f_j t \right)
            + \sum_{0<j<n/2} B(f_j) \sin\left( 2 \pi f_j t \right)
            + A(f_{n/2})\cos\left(2 \pi f_{n/2} t\right),
\end{equation}
where $f_j = j/n$,
\[
A(f_j) = \frac{2}{n} \sum_{t=0}^{n-1} \mu_t \cos\left( 2 \pi f_j t \right),
\]
and
\[
B(f_j) = \frac{2}{n} \sum_{t=0}^{n-1} \mu_t \sin\left( 2 \pi f_j t \right).
\]
If $n$ is odd then the rightmost term in (\ref{dft}) is excluded.
To determine $A(f_j)$ and $B(f_j)$ we will use the fact that $d(f_j) = \frac{A(f_j)}{2} - i \frac{B(f_j)}{2}$, where
\[
d(f_j) = \frac{1}{n} \sum_{t=0}^{n-1} \mu_t e^{-2 \pi i f_j t}.
\]
For this example
\[
d(f_j) = \frac{1}{n} \sum_{t=0}^{n-1} e^{4t/n} e^{-2 \pi i f_j t}
     = \frac{ 1 - e^{4-2\pi i }}{n(1 - e^{4/n - 2 \pi i f_j})}
     = \frac{1 - e^{4}}{n}\frac{1}{(1 - e^{4/n}\cos(2\pi f_j))+ i e^{4/n}\sin(2\pi f_j)}.
\]
For any real constants $a$ and $b$,
$
\frac{1}{a + bi} = \frac{a - bi}{a^2 + b^2}.
$
It follows then that
\begin{align*}
d(f_j) & = \frac{1 - e^{4}}{n}
    \frac{1 - e^{4/n}\cos(2\pi f_j)-i e^{4/n}\sin(2\pi f_j)}
    {(1 - e^{4/n}\cos(2\pi f_j))^2 + (e^{4/n}\sin(2\pi f_j))^2} \\
 & = \frac{1 - e^{4}}{n} \left( \frac{1 - e^{4/n}\cos(2\pi f_j)}
    {1 + (e^{4/n})^2 - 2 e^{4/n}\cos(2\pi f_j)}
    - i
     \frac{ e^{4/n}\sin(2\pi f_j)}
    {1 + (e^{4/n})^2 - 2 e^{4/n}\cos(2\pi f_j)} \right).
\end{align*}
Therefore
\[
A(f_j) = 2 \frac{1 - e^{4}}{n} \frac{(e^{-4/n} - \cos(2\pi f_j))}
    {e^{-4/n} + e^{4/n} - 2 \cos(2\pi f_j)}
\]
and
\[
B(f_j) = 2 \frac{1 - e^{4}}{n} \frac{ \sin(2\pi f_j)}
    {e^{-4/n} + e^{4/n} - 2 \cos(2\pi f_j)}.
\]

For a given $d_n$, define the $n \times (n-d_n)$ matrix
$ \mX_{\text{excl}} = ( \vx_{\text{excl}}^1, \vx_{\text{excl}}^2 )$ with components
\begin{equation*}
x^1_{\text{excl}_{tj}} = \sin \left(2\pi t f_j \right)
\end{equation*}
and
\begin{equation*}
x^2_{\text{excl}_{tj}} = \cos \left(2\pi t f_j \right)
\end{equation*}
for $j = d_n/2+1, \ldots , n$.
Based on this notation, the $n \times 1$ mean vector $\vmu$ can be written as
\[
\vmu = \mX \vbeta + \mX_{excl} \vbeta_{excl},
\]
where
\[
\vbeta = [A(0) \; A(f_1) \cdots A(f_{d_n/2}) \; B(f_1) \cdots B(f_{d_n/2})]^T
\]
and
\[
\vbeta_{excl} = [ A(f_{d_n/2 + 1}) \cdots A(f_{n/2}) \; B(f_{d_n/2 + 1}) \cdots B(f_{n/2 - 1})]^T.
\]

For this example, consider
\begin{align*}
nR(\hat\beta_\alpha) & \geq ||\vmu - \mH_{\bar{\alpha}} \vmu||^2 \\
& \geq ||\mX_{excl} \vbeta_{excl}||^2 \\
& \geq \frac{n}{2} \vbeta_{excl}^T \vbeta_{excl} \\
& = \frac{n}{2} \left( \sum_{d_n/2 < j < n/2} A(f_j)^2 + B(f_j)^2 \right) + \frac{n}{2} A(f_{n/2})^2\\
& \geq \frac{n}{2} B(f_{d_n/2 + 1})^2 \\
& = \frac{n}{2} \frac{(2(1-e^4))^2}{n^2} \left( \frac{\sin(2 \pi f_{d_n/2 + 1})}
                                    {e^{-4/n} + e^{4/n} - 2 \cos(2\pi f_{d_n/2 + 1})} \right)^2 \\
& \geq n \frac{c_1}{n^2} \left( \frac{\sin(2 \pi f_{d_n/2 + 1})}
                                    {2(cosh(4/n)-1) + 2(1 -  \cos(2\pi f_{d_n/2 + 1})} \right)^2
\end{align*}
for some positive constant $c_1$.
To simplify notation, define
\[
h_n = \frac{c_1}{n^2} \left( \frac{\sin(2 \pi f_{d_n/2 + 1})}
                                    {2(cosh(4/n)-2) + 2(1 -  \cos(2\pi f_{d_n/2 + 1})} \right)^2.
\]
If $d_n \to \infty$, then $\lim_{n \to \infty} h_n/d_n^2 < \infty$.  It follows that
\[
||\vmu - \mH_{\bar{\alpha}} \vmu||^2 \geq k_1 n d_n^{-2}
\]
for some constant $k_1>0$.

\section*{Acknowledgements}

We would like to thank the Associate Editor and three anonymous referees for comments that helped improve the quality and content of our manuscript greatly.

\bibliography{myreferences}
\bibliographystyle{apalike}

\end{document}